\newtheorem{theorem}{Theorem}[section]
\newtheorem{lemma}[theorem]{Lemma}
\newtheorem{corollary}[theorem]{Corollary}
\newtheorem{definition}[theorem]{Definition}
\newcommand{\E}{\mathbb{E}}
\newcommand{\R}{\mathbb{R}}
\newcommand{\boldB}{\boldsymbol{B}}
\newcommand{\boldD}{\boldsymbol{D}}
\newcommand{\boldI}{\boldsymbol{I}}
\newcommand{\boldW}{\boldsymbol{W}}
\newcommand{\boldX}{\boldsymbol{X}}
\newcommand{\bolda}{\boldsymbol{a}}
\newcommand{\bolds}{\boldsymbol{s}}
\newcommand{\boldk}{\boldsymbol{k}}
\newcommand{\boldv}{\boldsymbol{v}}
\newcommand{\boldw}{\boldsymbol{w}}
\newcommand{\boldx}{\boldsymbol{x}}
\newcommand{\boldy}{\boldsymbol{y}}
\newcommand{\boldz}{\boldsymbol{z}}
\newcommand{\calF}{\mathcal{F}}
\newcommand{\calM}{\mathcal{M}}
\newcommand{\calN}{\mathcal{N}}
\newcommand{\calQ}{\mathcal{Q}}
\newcommand{\indicator}{\mathbbm{1}}
\newcommand{\bdelta}{\boldsymbol{\delta}}
\newcommand{\bzeta}{\boldsymbol{\zeta}}
\newcommand{\bsigma}{\boldsymbol{\sigma}}
\newcommand{\bTheta}{\boldsymbol{\Theta}}
\newcommand{\btheta}{\boldsymbol{\theta}}
\newcommand{\bmu}{\boldsymbol{\mu}}
\newcommand{\prob}{\mathbb{P}}
\newcommand{\SJ}[1]{\textcolor{black}{#1}}
\newcommand{\blind}{1}
\begin{document}

\def\spacingset#1{\renewcommand{\baselinestretch}%
{#1}\small\normalsize} \spacingset{1}


\if1\blind
{
  \title{Layer Adaptive Node Selection in Bayesian Neural Networks: Statistical Guarantees and Implementation Details}
  \author{Sanket Jantre\thanks{corresponding author}}
  \author{Shrijita Bhattacharya}
  \author{Tapabrata Maiti}
  \affil{Department of Statistics and Probability, Michigan State University}
  \date{}
  \maketitle
} \fi

\if0\blind
{
  \title{Layer Adaptive Node Selection in Bayesian Neural Networks: Statistical Guarantees and Implementation Details}
  \author{}
  \date{}
  \maketitle
  \vspace{-1cm}
} \fi

\begin{abstract} \spacingset{1.1} 
\noindent Sparse deep neural networks have proven to be efficient for predictive model building in large-scale studies. Although several works have studied theoretical and numerical properties of sparse neural architectures, they have primarily focused on the edge selection. Sparsity through edge selection might be intuitively appealing; however, it does not necessarily reduce the structural complexity of a network. Instead pruning excessive nodes leads to a structurally sparse network \SJ{with significant computational speedup during inference.} To this end, we propose a Bayesian sparse solution using spike-and-slab Gaussian priors to allow for automatic node selection during training. The use of spike-and-slab prior alleviates the need of an ad-hoc thresholding rule for pruning. In addition, we adopt a variational Bayes approach to circumvent the computational challenges of traditional Markov Chain Monte Carlo (MCMC) implementation. In the context of node selection, we establish the fundamental result of variational posterior consistency together with the characterization of prior parameters. In contrast to the previous works, our theoretical development relaxes the assumptions of the equal number of nodes and uniform bounds on all network weights, thereby accommodating sparse networks with layer-dependent node structures or coefficient bounds. With a layer-wise characterization of prior inclusion probabilities, we discuss the optimal contraction rates of the variational posterior. \SJ{We empirically demonstrate that our proposed approach outperforms the edge selection method in computational complexity with similar or better predictive performance. Our experimental evidence further substantiates that our theoretical work facilitates layer-wise optimal node recovery.}
\end{abstract}

\noindent%
{\it Keywords:}  Node Selection, Dynamic Pruning, Model Compression, Spike-and-Slab Priors, Prior Inclusion Probability, Variational Inference, Contraction Rates

\spacingset{1.1} 



\section{\SJ{Introduction}}

\label{intro} 
Deep learning profoundly impacts science and society due to its impressive empirical success driven primarily by copious amounts of datasets, ever increasing computational resources, and deep neural network's (DNN) ability to learn task-specific representations. The key characteristic of deep learning is that accuracy empirically scales with the size of the model and the amount of training data. As such, large neural network models such as OpenAI GPT-3 (175 Billion) now typify the state-of-the-art across multiple domains such as natural language processing, computer vision, speech recognition etc. Nevertheless deep neural networks do have some drawbacks despite their wide ranging applications. First, this form of model scaling is exorbitantly prohibitive in terms of computational requirements, financial commitment, energy requirements etc. Second, DNNs tend to overfit leading to poor generalization in practice \citep{Zhang2017generalize}. Finally, there are numerous scenarios where training and deploying such huge models is practically infeasible. Examples of such scenarios include federated learning, autonomous vehicles, robotics, recommendation systems where models have to be refreshed daily/hourly or in an online manner for optimal performance.

A promising direction for addressing these issues while improving the efficiency of DNNs is exploiting sparsity. From a practical perspective, it has been well-known that neural networks can be sparsified without significant loss in performance, \cite{Mozer-Smolensky-1988}, and there is growing evidence that it is more so in the case of modern DNNs. Sparsity can arise naturally or be induced in multiple forms in DNNs, including input data, weights, and nodes. Weight pruning approaches perform high model compression leading to significant storage cost reduction at test-time \citep{Han-et-al-2016,Molchanov-et-al-2017,Zhu2018_to_prune,frankle2018LOT}. However, they result in unstructured sparsity in deep neural architectures which leads to inefficient computational gains in practical setups \citep{Wen-et-al-2016}. Instead, inducing group sparsity on collection of incoming weights into a given node (or node selection) reduces the dimensions of weight matrices per layer allowing for significant computational savings. To that effect, edge selection and node selection approaches are complementary with the former leading to storage reduction and the later leading to computational speedup during inference stage. Although one may argue node selection arises as a byproduct of edge selection, we clearly demonstrate that an approach which targets node selection directly leads to lower latency models (smaller number of nodes per layer) compared to an approach which achieves node selection through edge selection.

Node selection in deep neural networks has been explored under frequentist setting in \cite{Alvarez2016}, \cite{Wen-et-al-2016}, and \cite{Scardaane2017} using group sparsity regularizers. On the other hand, \cite{Louizos-et-al-2017}, \cite{Neklyudov-et-al-2017}, and \cite{Ghosh-JMLR-2018} incorporate group sparsity via shrinkage priors in Bayesian paradigm. These group sparsity approaches specifically applied for node selection have shown significant computational speedup and lower memory footprint at inference stage. However, all of the proposed methods of neuron selection perform ad-hoc pruning requiring fine-tuned thresholding rules. Moreover, the posterior inference of network weights in Bayesian neural networks (BNN) through standard MCMC method, ex. Hamiltonian Monte Carlo \citep{Neal1992-HMC}, does not scale well to modern neural network architectures and large datasets used in practice. Instead computationally efficient variational inference as an alternative to MCMC \citep{Jordan_Graph-2000,Blei2017}, has been explored in the context of edge selection both theoretically and numerically by \cite{blundell2015weight}, \cite{Cherief-Abdellatif-2020}, \cite{Bai-Guang-2020}. On the other hand, \cite{Louizos-et-al-2017} and \cite{Ghosh-JMLR-2018} have explored variational inference for node selection problem. In this work, we propose a Gaussian spike-and-slab prior for automatic node selection in Bayesian neural networks thereby alleviating the need of an ad-hoc thresholding rule for pruning. Further for scalability, we develop a variational Bayes algorithm for posterior inference of BNN model parameters in our proposed model and demonstrate its numerical performance through simulation and real regression and classification datasets. Finally, we provide the theoretical guarantees to our node selection method under mild restrictions on the network topology.
\vspace{2mm}

\begin{figure}[t]
\centering
\includegraphics[width=0.8\textwidth]{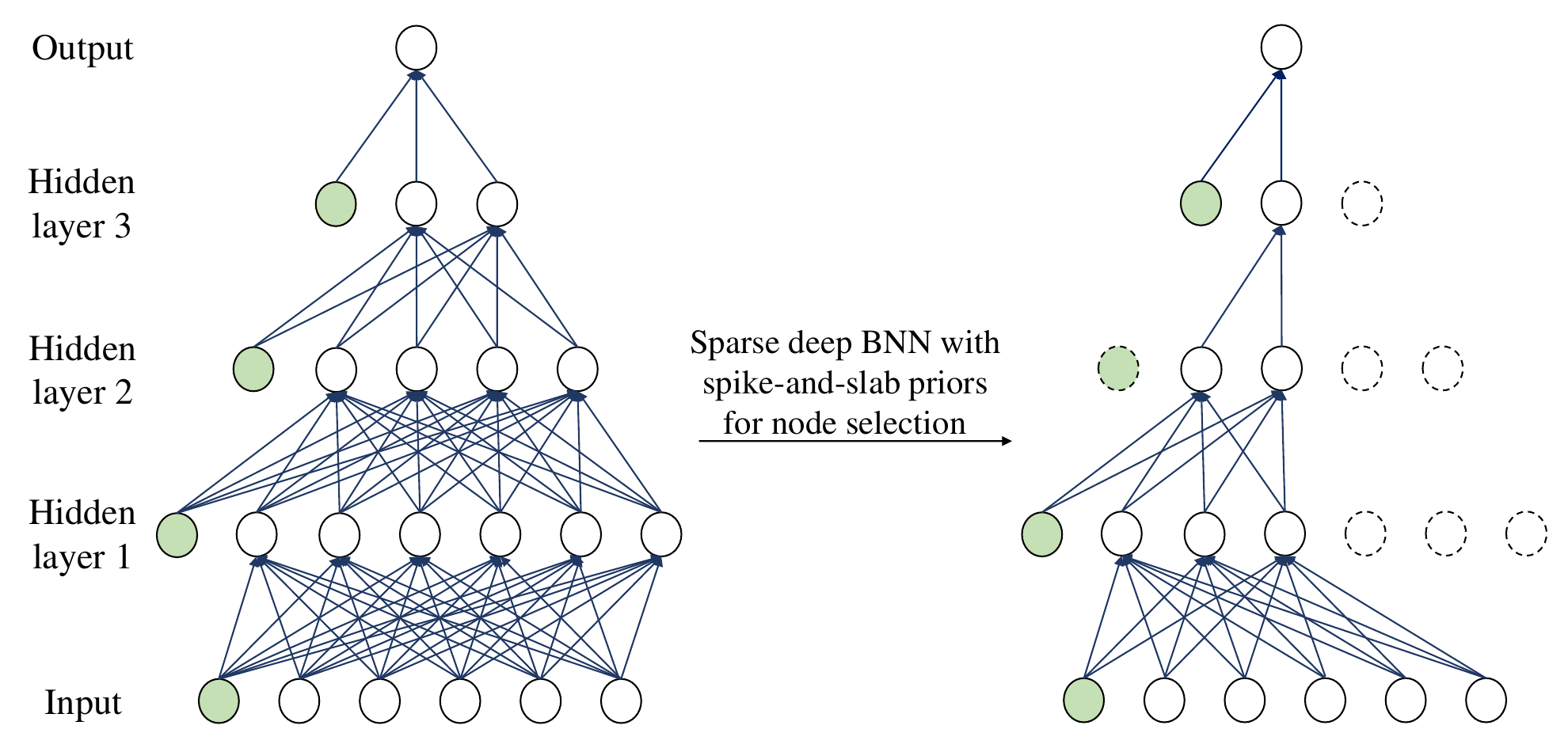}
\caption{Sparse deep BNN using spike-and-slab priors achieves node selection in the given dense network on left leading to a sparse network on right.}
\label{fig:Aim_1_sparse_BDNN_Node_Edge_Sel_fig}
\end{figure}

\noindent {\bf Related Work.} A closely related work to our paper is \cite{Bai-Guang-2020}'s automated edge selection model using spike-and-slab prior. There the slab distribution controls the magnitude of weights and spike allows for the exact setting of weights to 0. We introduce spike-and-slab framework for node selection in BNNs and show the key resource efficiency trade-off between node and edge selection at test-time. There are two main advantages to node selection over edge selection (1) fewer parameters to train during optimization, (2) results in structurally compact network leading to computational speedup at test-time.

On the theoretical front, sparse BNNs have been studied in the works of \cite{Polson-Rockova-2018} and \cite{Sun-Liang-2021}. In the context of variational inference, sparse BNNs have been studied in the recent works of \cite{Cherief-Abdellatif-2020} and \cite{Bai-Guang-2020}. All these works concentrate on the problem of edge selection facilitated through the use of Gaussian spike-and-slab priors. In the context of node selection, \cite{Ghosh-JMLR-2018} makes use of regularized horseshoe prior. The main limitations of their approach include (1) need for fine tuning of the thresholding rule for node selection, and (2) lack of a theoretical justification.

The only two works which have provided theoretical guarantees of their proposed sparse DNN methods under variational inference include those of \cite{Cherief-Abdellatif-2020} and \cite{Bai-Guang-2020}. Since they focus on the problem of edge selection, their theoretical developments are related to the results of \cite{Schmidt-Hieber-2017} (see the sieve construction in relation (4) in \cite{Schmidt-Hieber-2017}) and not directly extendable to our setup. Additionally, they assume certain restrictions on the network topology like (i) equal number of nodes in each layer, (ii) a known uniform bound $B$ on all network weights, and (iii) a global sparsity parameter which may not lead to a structurally compact network. Although from a numerical standpoint, one may implicitly extend the problem of edge selection to node selection, the theoretical guarantees of node selection consistency in sparse DNNs is not immediate.
\vspace{2mm}

\noindent {\bf Detailed Contributions.}
\begin{enumerate}
    \item We propose a Gaussian spike-and-slab node selection model and develop a variational Bayes approach for posterior inference of the model parameters. We call our approach {\bf SS-IG} ({\bf S}pike-and-{\bf S}lab {\bf I}ndependent {\bf G}aussian) model.
    \item We derive the variational consistency using a functional space of neural networks which takes two layer dependent bounds, one which upper bounds the number of neurons in each layer and the other which upper bounds the $L_1$ norm of the weights incident onto each node of a layer. These layer dependent bounds allow the generalization of the theoretical results presented to guarantee the consistency of any generic shaped network structure. Further, it also  guides the calculation of layer-wise prior inclusion probabilities which allow for optimal node recovery per layer in the computational experiments.
    \item We measure the computational gains achieved by our approach using layer-wise node sparsities for shallow models and floating point operations in larger models. Our numerical results validate the proposed theoretical framework for the node selection in DNN models. These empirical experiments further justify the use of layer-wise node inclusion probabilities to facilitate the optimal node recovery.
\end{enumerate}

\section{Nonparametric regression: deep learning approach}

Consider the nonparametric regression model with $p$ dimensional covariate $\bm{X}$. 
\begin{equation}
\label{e:class-tmp}
    Y_i = \eta_0(\bm{X}_i) + e_i, \enskip i = 1,\dots,n,
\end{equation}
where $e_i \stackrel{\text{i.i.d.}}{\sim} N(0,\sigma_e^2)$ (here i.i.d. denotes independent and identically distributed) and $\eta_0(\cdot):\mathbb{R}^p \to \mathbb{R}$. 

Thus, the conditional distribution of $Y|\boldX=\boldx$ under the true model is
\begin{equation}
\label{e:class}
f_{0}(y|\boldx)=(\sqrt{2\pi \sigma_e^2})^{-1}\exp\left(-(y-\eta_{0}(\boldx))^2/(2\sigma_e^2)\right)
\end{equation}
where $\boldx$ is a feature vector from a marginal distribution $P_{\boldX}$ and  $y$ is the corresponding output from the conditional distribution $Y|\boldX=\boldx$.

\noindent Let $g:\mathbb{R}^p \to \mathbb{R}$ be a measurable function, then for some loss function $\mathcal{L}$, the risk of  $g$ is 
$$R(g)=\int_{\mathcal{Y}\times \mathcal{X}}\mathcal{L}(Y,g(\boldX))dP_{\boldX,Y}$$
where  $P_{\boldX,Y}$, the joint distribution of $(\boldX,Y)$ is product of $P_{\boldX}$ and the conditional distribution $Y|X=\boldx$. (see \cite{Cannings-Samworth-2017} for more details). For the squared error loss, the above risk is minimized by $g^*(\boldx)=\eta_0(\boldx)$ \citep{friedman2001elements}. In practice, this estimator is not useful since $\eta_0(\boldx)$ is unknown. Thus, an estimator of $\eta_0(\boldx)$ is obtained based on the training observations, $\mathcal{D}=\{(\boldx_1,y_1),\cdots,(\boldx_n,y_n)\}$, drawn from $P_{\boldX,Y}$. To find the class of optimal estimators, we use DNNs as an approximation to $\eta_0(\boldx)$. 

For a $p\times 1$ input vector $\boldx$, consider a DNN with $L$ hidden layers with $k_{1}, \cdots, k_{L}$ as the number of nodes in the hidden layers denoted by $\eta_{\btheta}(\boldx)$. Also, 
\begin{equation}
\label{e:eta-x}
\eta_{\btheta}(\boldx)=\boldv_L+\boldW_L \psi(\boldv_{L-1}+\boldW_{L-1} \psi (\cdots \psi(\boldv_1+\boldW_1\psi(\boldv_0+\boldW_0 \boldx)))
\end{equation}
where $\boldv_{l}$ and $\boldW_{l}$,  $l=0, \cdots, L$ are $k_{l+1} \times 1$ vectors and  $k_{l+1} \times k_{l}$ matrices, respectively and $\psi$ is the activation function. Let $\btheta = \{\overline{\boldW}_0,\dots,\overline{\boldW}_L\}$ denote all the parameters in the DNN model under consideration. 
Using the DNN in \eqref{e:eta-x} to approximate the true function $\eta_0(\boldx)$,  the conditional distribution of $Y|\boldX=\boldx$ is 
\begin{equation*}
f_{\btheta}(y|\boldx)=(\sqrt{2\pi \sigma_e^2})^{-1}\exp\left(-(y-\eta_{\btheta}(\boldx))^2/(2\sigma_e^2)\right)
\end{equation*}
Thus, the likelihood function for the data $\mathcal{D}$ under the model and the truth is 
\begin{equation}
\label{e:lik}
P_{\btheta}^n=\prod_{i=1}^n f_{\btheta}(y_i|\boldx_i), \hspace{10mm}
P_0^n=\prod_{i=1}^n f_{0}(y_i|\boldx_i).
\end{equation}
For theoretical development in the subsequent sections  we shall assume $P_{\boldX}= U[0,1]^p$ and  $\sigma_e^2=1$  and $\psi$ is any $1-$Lipschitz continuous activation function.  

\section{Node selection with spike-and-slab prior}  

\label{sec:sparsity-priors}
To allow for automatic node selection, we consider a spike-and-slab prior consisting of a Dirac spike ($\delta_0$) at 0 and a slab distribution \citep{Mitchell-Beauchamp-1988}. The spike part is represented by an indicator variable which is set to 0 if a node is not present in the network. The slab  part comes from a Gaussian distributed random variable. To allow for the layer-wise node selection, we assume that the prior inclusion probability $\lambda_l$ varies as a function of the layer index $l$. The symbol i.d. is used to denote independently distributed random variables.

\noindent {\bfseries Prior:} We assume a spike-and-slab prior of the following form with $z_{lj}$ as the indicator for the presence of $j^{\rm th}$ node in the $l^{\rm th}$ layer
\begin{equation*}
\overline{\boldw}_{lj}|z_{lj} \stackrel{\text{i.d.}}{\sim}   \left[(1-z_{lj})\bdelta_0 + z_{lj} N(0,\sigma_0^2 \boldI) \right], \hspace{3mm} z_{lj} \stackrel{\text{i.d.}}{\sim}   {\rm Ber}(\lambda_l)
\end{equation*} 
where $l=0, \dots, L$, $j=1, \dots, k_{l+1}$. Also, $\overline{\boldw}_{lj} = (\overline{w}_{lj1},\dots,\overline{w}_{lj{k_l+1}})$ is a vector of edges incident on the $j^{\rm th}$ node in the $l^{\rm th}$ layer. In the above formula, note $\bdelta_0$ is a Dirac spike vector of dimension $k_l+1$ with all entries zero and $\boldI$ is the identity matrix of dimension $k_l+1 \times k_l+1$. Furthermore, $z_{lj}$ with $j=(1,\dots, k_{l+1})$ all follow Bernoulli($\lambda_l$) to allow for common prior inclusion probability, $\lambda_l$, for each node from a given layer $l$. We set $\lambda_L=1$ to ensure no node selection occurs in the output layer. 

\noindent {\bfseries Posterior:} With $\boldz_l=(z_{l1}, \cdots, z_{lk_{l+1}})$, let  $\boldz=(\boldz_{1}, \cdots, \boldz_{L})$  denote the vector of all indicator variables. The posterior 
distribution of $(\btheta,\boldz)$ given $\mathcal{D}$ is given by
\begin{equation}
\label{e:posterior}
\pi(\btheta,\boldz|\mathcal{D}) = \frac{ P_{\btheta}^n \pi(\btheta|\boldz)\pi(\boldz)}{\sum_{\boldz} \int  P_{\btheta}^n\pi(\btheta|\boldz)\pi(\boldz)d\btheta}=\frac{ P_{\btheta}^n \pi(\btheta|\boldz)\pi(\boldz)}{m(\mathcal{D})}
\end{equation}
where $P_{\btheta}^n=\prod_{i=1}^nf_{\btheta}(y_i|\boldx_i)$ is the likelihood function as in \eqref{e:lik}, $\pi(\boldz)$ is the probability mass function of $\boldz$ with respect to the counting measure and $\pi(\btheta|\boldz)$ is the conditional probability density function with respect to the Lebesgue measure of $\btheta$ given $\boldz$ . Further, $m(\mathcal{D})$ is the marginal density of the data and is free of  $(\btheta,\boldz)$. 

Let
$\widetilde{\pi}(\btheta)=\sum_{\boldz}  \pi(\btheta,\boldz)$ be the marginal prior of $\btheta$. We shall use the notation 
\begin{equation}
    \label{e:prior-theta}
    \widetilde{\Pi}(\mathcal{A})=\int_{\mathcal{A}}\widetilde{\pi}(\btheta)d\btheta
\end{equation}
  to denote the probability distribution function corresponding to the density function $\widetilde{\pi}$. The marginal posterior of $\btheta$ expressed as a function of the marginal prior for $\btheta$ is
    \begin{equation*}
        \widetilde{\pi}(\btheta|\mathcal{D})=\sum_{\boldz}  \pi(\btheta,\boldz|\mathcal{D})=\frac{P_{\btheta}^n \widetilde{\pi}(\btheta)}{\int P_{\btheta}^n \widetilde{\pi}(\btheta)d\btheta}=\frac{P_{\btheta}^n \widetilde{\pi}(\btheta)}{m(\mathcal{D})}
        \end{equation*}
Thus, the probability distribution function corresponding to the density function $\widetilde{\pi}(|\mathcal{D})$ is then given by
       \begin{equation}
    \label{e:true-posterior-theta}
    \widetilde{\Pi}(\mathcal{A}|\mathcal{D})=\int_{\mathcal{A}}\widetilde{\pi}(\btheta|\mathcal{D})d\btheta
\end{equation} 

\noindent {\bfseries Variational family:} We posit the following mean field variational family ($\mathcal{Q}^{\bf MF}$) on network weights as
\begin{equation*}
 \mathcal{Q}^{\bf MF}=\Big\{\overline{\boldw}_{lj}|z_{lj} 
\stackrel{\text{i.d.}}{\sim} \left[(1-z_{lj})\bdelta_0+ z_{lj} N(\bmu_{lj},\text{diag}(\bsigma^2_{lj}))\right], \hspace{3mm} z_{lj} \stackrel{\text{i.d.}}{\sim} {\rm Ber}(\gamma_{lj}) \Big\}
\end{equation*}
for $l=0, \dots, L$, $j=1, \dots, k_{l+1}$. This ensures that weight distributions follow spike-and-slab structure which allows for node sparsity through variational approximation. Further,  the weight distributions conditioned on the node indicator variables are all independent of each other (hence use of the term mean field family).  The variational distribution of parameters obtained post optimization will then inherently prune away redundant nodes from each layer. Also, Gaussian distribution for slab component is widely popular for approximating neural network weight distributions \citep{blundell2015weight, Louizos-et-al-2017, Bai-Guang-2020}.

Additionally, $\bmu_{lj}=(\mu_{lj1},\dots,\mu_{lj{k_l+1}})$ and $\bsigma^2_{lj}=(\sigma^2_{lj1},\dots,\sigma^2_{lj{k_l+1}})$ denote the vectors of variational mean and standard deviation parameters of the edges incident on the $j^{\rm th}$ node in the $l^{\rm th}$ layer. Similarly, $\gamma_{lj}$ denotes the variational inclusion probability of the $j^{\rm th}$ node in the $l^{\rm th}$ layer. We set $\gamma_{Lj}=1$ to ensure no node selection occurs in the output layer. 

\noindent {\bfseries Variational posterior:} Variational posterior aims to reduce the Kullback-Leibler (KL) distance between a variational family and the true posterior (\cite{Blei_2007,Hinton93}) as
\begin{equation}
\label{e:var-posterior}
    \pi^*=\underset{q \in \mathcal{Q}^{\bf MF}}{\text{argmin}}\:\: d_{\rm KL}(q,\pi(|\mathcal{D}))
\end{equation}
where $d_{\rm KL}(q,\pi(|\mathcal{D}))$ denotes the KL-distance between $q$ and $\pi(|\mathcal{D})$.

Note, the variational member $q$ can be written as $q(\btheta,\boldz)=q(\btheta|\boldz)q(\boldz)$ where $q(\boldz)$ is the probability mass function of $\boldz$ with respect to the counting measure  and $q(\btheta|\boldz)$ is the conditional density function  given with respect to the Lebesgue measure of  $\btheta$ given $\boldz$. Further,

\begin{align}
\label{e:elbo}
\nonumber \pi^*&= \underset{q \in \mathcal{Q}^{\bf MF}}{\text{argmin}} \sum_{\boldz}\int [\log q(\btheta,\boldz)-\log \hspace{0.5mm} \pi(\btheta,\boldz|\mathcal{D})]q(\btheta,\boldz)d\btheta\\
\nonumber &=\underset{q \in \mathcal{Q}^{\bf MF}}{\text{argmin}}\left(\sum_{\boldz}\int [\log q(\btheta,\boldz)-\log \pi(\btheta,\boldz,\mathcal{D})] q(\btheta,\boldz) d\btheta+\log m(\mathcal{D})\right)\\
 &=\underset{q \in \mathcal{Q}^{\bf MF}}{\text{argmin}}\:\: [-\text{ELBO}(q,\pi(|\mathcal{D}))]+\log m(\mathcal{D})=\underset{q \in \mathcal{Q}^{\bf MF}}{\text{argmax}}\:\: \text{ELBO}(q,\pi(|\mathcal{D}))
 \end{align}
Since $\log m(\mathcal{D})$ is free from $q$, it suffices to maximize the evidence lower bound (ELBO) above. 

Let $\widetilde{\pi}^*(\btheta)=\sum_{\boldz} \pi^*(\btheta|\boldz)\pi^*(\boldz)$ then $\widetilde{\pi}^*$ denotes the marginal variational posterior for $\btheta$. We shall use the notation \begin{equation}
\label{e:var-posterior-theta}
    \widetilde{\Pi}^*(\mathcal{A})=\int_{\mathcal{A}}\widetilde{\pi}^*(\btheta)d\btheta
\end{equation} to denote the probability distribution function corresponding to the density function $\widetilde{\pi}^*$.

\section{Posterior contraction rates}  

\label{sec:theory} 
In this section, we develop the theoretical consistency of the variational posterior in \eqref{e:var-posterior-theta} in context of node selection.
Previous works which establish the statistical consistency of sparse deep neural networks do so only in the context of edge selection. Thereby, the works of  \cite{Polson-Rockova-2018}, \cite{Cherief-Abdellatif-2020} and \cite{Bai-Guang-2020} use several results from the pioneer work of \cite{Schmidt-Hieber-2017}.
\SJ{In addition to node selection consistency, we also relax certain network restrictions considered in the previous works. These restrictions include (1) equal number of nodes in each layer which restricts one from using any previous information on the number of nodes in the deep neural architecture (2)  a known bound $B$ on all the neural network weights as they essentially rely on the sieve construction in equation 3 of \cite{Schmidt-Hieber-2017} which assumes that $L_{\infty}$ norm of all $\btheta$ entries is smaller than 1 (3) a global sparsity parameter $s$ which does not always consider structurally sparse networks.}


\SJ{Towards the proof, firstly our sieve construction allows the number of nodes of the neural network to vary as a function of the layer. Secondly, instead of global sparsity parameter $s$ (see the sieve construction in relation (4) of \cite{Schmidt-Hieber-2017}) we allow for layer wise sparsity vector $\bolds$ to account for the number of nodes in each layer. Finally, we relax the  assumption of a known bound $B$ by considering a sieve with a layer wise constraint (denoted by the vector $\boldB$) on the $L_1$ norm of the incoming edges of a node. Thus, our work extends on current literature along three directions (1) theoretically quantifies predictive performance of Bayesian neural networks with node based pruning (2) establishes that even without a fixed bound on network weights, one can recover true solution by appropriate choice of the prior (3) provides layer wise node inclusion probabilities to allow for structurally sparse solutions. The relaxation of these network structure assumptions requires us to provide the framework for node selection including appropriate sieve construction together with the derivation of the results in \cite{Schmidt-Hieber-2017} customized to our problem.}

To establish the posterior contraction rates, we show that the variational posterior in \eqref{e:var-posterior} concentrates in shrinking Hellinger neighborhoods of the true density function $P_0$ with overwhelming probability. Since $\boldX \sim U[0,1]^p$, thus $f_0(\boldx)=f_{\btheta}(\boldx)=1$. This further implies $P_0=f_0(y|\boldx)f_0(\boldx)=f_0(y|\boldx)$ and similarly $P_{\btheta}=f_{\btheta}(y|\boldx)$. We next define the Hellinger neighborhood of the true density $P_0$ as $$\mathcal{H}_{\varepsilon}=\{\btheta: d_{\rm H}(P_0,P_{\btheta})<\varepsilon \}$$ where the Hellinger distance between the true density function $P_0$ and the model density $P_{\btheta}$ is 
\begin{equation*}
d_{\rm H}^2(P_0,P_{\btheta})=\frac{1}{2}\int \left(\sqrt{f_{\btheta}(y|\boldx)}-\sqrt{f_0(y|\boldx)}\right)^2 dy d\boldx
\end{equation*}
We also define the KL neighborhood of the true density $P_0$ as $$\mathcal{N}_\varepsilon =\{\btheta: d_{\rm KL}(P_0,P_{\btheta})<\varepsilon \}$$ where the KL distance $d_{\rm KL}$ between the true density function $P_0$ and the model density $P_{\btheta}$ is
\begin{equation*}
d_{\rm KL}(P_0,P_{\btheta})=\int \log \frac{f_0(y|\boldx)}{f_{\btheta}(y|\boldx)}f_0(y|\boldx)dyd\boldx
\end{equation*}


Let $\boldk=(k_0, \cdots, k_{L+1})$ be the node vector, ${\overline{\boldW}}_l=(\boldw_{l1}^\top, \cdots,\boldw_{lk_{l+1}}^\top)^\top $
be the row representation of  ${\overline{\boldW}}_l$ and $\widetilde{\boldw}_l=(||\boldw_{l1}||_1, \cdots, 
||\boldw_{lk_{l+1}}||_1)$ be the vector of $L_1$ norms of the rows of ${\overline{\boldW}}_l$. Next we consider layer-wise sparsity, $\bolds=(s_1, \cdots, s_L)$ for node selection. Similarly, we consider layer-wise norm constraints, $\boldB=(B_1, \cdots, B_L)$ on $L_1$ norms of weights including bias incident onto any given node in each layer. Based on  $\bolds$ and $\boldB$, we define the following sieve of neural networks (check definition \ref{def:sieve}). 
\begin{equation}
    \label{e:sieve}
  \mathcal{F}(L,\boldk,\bolds,\boldB)=\left\{\eta_{\btheta} \in \eqref{e:eta-x}: ||\widetilde{\boldw}_l||_0 \leq s_l, ||\widetilde{\boldw}_l||_\infty \leq B_l  \right\}.
\end{equation}
The construction of a sieve is one of the most important tools towards the proof of consistency in infinite-dimensional spaces. In the works of  \cite{Schmidt-Hieber-2017}, \cite{Polson-Rockova-2018}, \cite{Cherief-Abdellatif-2020} and \cite{Bai-Guang-2020}, the sieve in the context of edge selection is given by
\begin{equation*}
  \mathcal{F}(L,\boldk,s)=\left\{\eta_{\btheta} \in \eqref{e:eta-x}: ||\btheta||_0 \leq s, ||\btheta||_{\infty} \leq 1  \right\}.
\end{equation*}
which works with an overall sparsity parameter $s$. In addition, note the $L_{\infty}$ norm of all the entries in $\btheta$ is assumed to be known constant equal to 1 (see relation (4) in \cite{Schmidt-Hieber-2017} and section 4 in \cite{Polson-Rockova-2018}). Section 3 in \cite{Bai-Guang-2020} does not explicitly mention the dependence of their sieve on some fixed bound $B$ on the edges in a network, however, their derivations on covering numbers (see proof of Lemma 1.2 in the supplement of \cite{Bai-Guang-2020}) borrow results from \cite{Schmidt-Hieber-2017}  which is based on sieve with $B=1$.

 Consider any sequence $\epsilon_n$. For Lemmas \ref{lem:test} and \ref{lem:prior}, we work with the sieve $\mathcal{F}(L,\boldk,\bolds,\boldB)$ in \eqref{e:sieve} with $\bolds=\bolds^\circ$ and $\boldB=\boldB^\circ$  where
  $s_l^\circ+1=n\epsilon_n^2/(\sum_{j=0}^L u_j)$ and $\log B_l^\circ=(n\epsilon_n^2)/((L+1)\sum_{j=0}^L (s_j^\circ+1))$ with $u_l=(L+1)^2(\log n+\log (L+1)+\log k_{l+1}+\log (k_l+1))$. Note, $s_l^\circ$ and $B_l^\circ$ do not depend on $l$.

  Lemma \ref{lem:test} below holds when the covering number (check definition~\ref{def:covering-no}) of the functions which belong to the sieve   $\mathcal{F}(L,\boldk,\bolds^\circ,\boldB^\circ)$ is well under control.  \noindent Lemma \ref{lem:prior} below states that for the same choice of the sieve, the prior gives sufficiently small probabilities on the complement space $\mathcal{F}(L,\boldk,\bolds^\circ,\boldB^\circ)^c$ (see the discussion under Theorem \ref{thm:var-post} for more details).

For the subsequent results, the symbol $\mathcal{A}^c$ will be used to denote complement of a set  $\mathcal{A}$.

\begin{lemma}[Existence of Test Functions]
\label{lem:test}
Let $\epsilon_n \to 0$ and $n \epsilon_n^2 \to \infty$.  There exists a testing function $\phi \in [0,1]$ and constants $C_1,C_2>0$,
\begin{align*}
\E_{P_0}(\phi) &\leq \exp \{-C_1n\epsilon_n^2 \} \\  \sup_{\substack{\btheta \in \mathcal{H}_{\epsilon_n}^c,\eta_{\btheta} \in \mathcal{F}(L,\bm{k},\bm{s}^\circ,\bm{B}^\circ)}}
\E_{P_{\btheta}}(1-\phi)  &\leq \exp \{-C_2nd^2_{\rm H}(P_0,P_{\btheta}) \}
\end{align*}
where $\mathcal{H}_{\epsilon_n}=\{\btheta:d_{\rm H}(P_0,P_{\btheta})\leq \epsilon_n\}$ is the Hellinger neighborhood of radius $\epsilon_n$.
\end{lemma}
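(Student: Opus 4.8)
The plan is to reduce the construction of $\phi$ to a metric entropy bound for the sieve $\mathcal{F}(L,\boldk,\bolds^\circ,\boldB^\circ)$ via the classical testing machinery of Le~Cam and Birg\'e, adapted to the i.i.d.\ Gaussian regression model. First I would recall that for any fixed alternative $\btheta$ the Hellinger affinity factorizes over the $n$ observations: writing $\rho(P_0,P_{\btheta})=\int\sqrt{f_0 f_{\btheta}}=1-d_{\rm H}^2(P_0,P_{\btheta})$, one has $\rho(P_0^n,P_{\btheta}^n)=\rho(P_0,P_{\btheta})^n\le\exp\{-n\,d_{\rm H}^2(P_0,P_{\btheta})\}$. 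Le~Cam's likelihood-ratio test then yields, for each such $\btheta$, a test $\phi_{\btheta}$ with $\E_{P_0}(\phi_{\btheta})$ and $\E_{P_{\btheta}}(1-\phi_{\btheta})$ both bounded by $\exp\{-n\,d_{\rm H}^2(P_0,P_{\btheta})/2\}$; moreover this bound degrades only by a constant factor in the exponent when the alternative ranges over a Hellinger ball of radius $d_{\rm H}(P_0,P_{\btheta})/2$ around $P_{\btheta}$, giving a \emph{local} test valid on a whole neighbourhood.

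Second, I would globalize this local test over $\mathcal{H}_{\epsilon_n}^c\cap\{\eta_{\btheta}\in\mathcal{F}(L,\boldk,\bolds^\circ,\boldB^\circ)\}$ by a peeling-plus-covering argument. I would slice the complement into Hellinger shells $S_j=\{\btheta:\,j\epsilon_n<d_{\rm H}(P_0,P_{\btheta})\le (j+1)\epsilon_n\}$, $j\ge 1$, cover each shell by Hellinger balls of radius $\tfrac12 j\epsilon_n$ centred at points of $\mathcal{F}$, attach the corresponding local test to each centre, and set $\phi=\max_{j,\,\text{centres}}\phi_{\btheta}$. A union bound gives $\E_{P_0}(\phi)\le\sum_j N_j\exp\{-c\,n j^2\epsilon_n^2\}$, where $N_j$ is the covering number of $S_j$, while on the alternative side the definition of $\phi$ as a maximum forces $\E_{P_{\btheta}}(1-\phi)\le\exp\{-c\,n\,d_{\rm H}^2(P_0,P_{\btheta})\}$ for every $\btheta$ in the region, which is the second displayed inequality. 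Provided $\log N_j\le C\,n\epsilon_n^2$ with $C$ strictly below the testing constant, the geometric tail in the first sum is dominated by its $j=1$ term and collapses to $\exp\{-C_1 n\epsilon_n^2\}$, yielding the first inequality.

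Third, the crux is therefore the entropy estimate $\log N(\epsilon_n,\mathcal{F}(L,\boldk,\bolds^\circ,\boldB^\circ),d_{\rm H})\le C\,n\epsilon_n^2$. Because each $f_{\btheta}$ is a unit-variance Gaussian shifted by $\eta_{\btheta}$, a direct computation gives $d_{\rm H}^2(P_{\btheta_1},P_{\btheta_2})\le\tfrac18\norm{\eta_{\btheta_1}-\eta_{\btheta_2}}_\infty^2$, so it suffices to cover the function class in the sup-norm (check definition~\ref{def:covering-no}). Here I would use the layer structure: exploiting that $\psi$ is $1$-Lipschitz and that each active node carries incoming $L_1$-weight at most $B_l$, the map $\btheta\mapsto\eta_{\btheta}$ is Lipschitz in the weights with constant controlled by $\prod_{l}B_l^\circ$, so a grid of spacing $\delta\sim\epsilon_n/\prod_l B_l^\circ$ on the active weights produces a sup-norm net. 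Counting the $\binom{k_{l+1}}{s_l^\circ}$ choices of active nodes per layer and the at most $s_l^\circ(k_l+1)$ discretized coordinates gives $\log N\lesssim\sum_{l=0}^{L}\big[s_l^\circ\log k_{l+1}+s_l^\circ(k_l+1)\log(B_l^\circ/\delta)\big]$, and the logarithmic factors $\log n+\log(L+1)+\log k_{l+1}+\log(k_l+1)$ are exactly what the quantities $u_l$ collect. Substituting the prescribed $s_l^\circ+1=n\epsilon_n^2/\sum_{j}u_j$ and $\log B_l^\circ=n\epsilon_n^2/((L+1)\sum_j(s_j^\circ+1))$ then makes both contributions telescope to order $n\epsilon_n^2$.

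This covering-number bound, i.e.\ the customization of the entropy estimates of \cite{Schmidt-Hieber-2017} to the layer-wise sparsity vector $\bolds^\circ$ and norm-constraint vector $\boldB^\circ$ rather than to a single global $s$ and bound $B=1$, is the step I expect to be the main obstacle: the bookkeeping of how the per-layer Lipschitz constants multiply and how the choices $s_l^\circ,B_l^\circ$ are tuned so that the accumulated logarithmic factors stay within the budget $n\epsilon_n^2$ is exactly where the relaxation of the equal-nodes and uniform-bound assumptions has to be paid for.
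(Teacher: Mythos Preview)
Your high-level strategy matches the paper's: invoke a local Hellinger test (the paper cites Lemma~2 of \cite{Ghoshal-Van-der-Vaart-2007}, which is the same Le~Cam/Birg\'e construction you describe), globalize by covering the sieve and taking a maximum, and reduce the Hellinger entropy to a sup-norm entropy via the Gaussian relation $d_{\rm H}^2(P_{\btheta_1},P_{\btheta_2})\le 1-\exp\{-\|\eta_{\btheta_1}-\eta_{\btheta_2}\|_\infty^2/8\}$. One simplification: the peeling into shells you propose is unnecessary. The paper covers the whole sieve by Hellinger balls of a \emph{single} radius $\epsilon_n/19$ and recovers the $d_{\rm H}^2(P_0,P_{\btheta})$ factor in the type~II error by the triangle inequality $d_{\rm H}(P_0,P_{\btheta_h})\ge d_{\rm H}(P_0,P_{\btheta})-\epsilon_n/19\ge(18/19)\,d_{\rm H}(P_0,P_{\btheta})$ for the centre $\btheta_h$ of the ball containing $\btheta$. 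Your shell decomposition also works but adds bookkeeping.

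There is, however, a quantitative gap in your entropy step. Discretizing each of the $s_l^\circ(k_l+1)$ active coordinates gives $\log N\lesssim\sum_l\big[s_l^\circ\log k_{l+1}+s_l^\circ(k_l+1)\log(B_l^\circ/\delta)\big]$, i.e.\ an exponent $s_l^\circ(k_l+1)$ per layer. The paper's covering lemma (Lemma~\ref{lem:covering-1}) instead produces an exponent $(s_l^\circ+1)$ per layer, working with the row-wise $L_1$ structure of the sieve rather than coordinate-by-coordinate. This difference is not cosmetic: with the paper's calibration $s_l^\circ+1=n\epsilon_n^2/\sum_j u_j$ and $\log B_l^\circ=n\epsilon_n^2/\big((L+1)\sum_j(s_j^\circ+1)\big)$, your bound becomes
\[
\sum_{l=0}^{L} s_l^\circ(k_l+1)\Big[(L+1)\log B^\circ+\log(1/\epsilon_n)\Big]
\;\asymp\; n\epsilon_n^2\cdot\frac{\sum_{l=0}^{L}(k_l+1)}{L+1},
\]
which carries an extra factor equal to the average layer width and does \emph{not} collapse to $Cn\epsilon_n^2$ as you assert. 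So the ``telescoping'' claim in your third paragraph fails; to close the argument along the paper's route you need to use the paper's Lemma~\ref{lem:covering-1} with its $(s_l^\circ+1)$ exponent rather than the naive per-coordinate count.
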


\begin{lemma}[Prior mass condition.] \label{lem:prior} Let $\epsilon_n \to 0$, $n \epsilon_n^2 \to \infty$ and  $n\epsilon_n^2/\sum_{l=0}^L u_{l} \to \infty$, then for $\widetilde{\Pi}$ as in \eqref{e:prior-theta} and some constant $C_3>0$, $$ \widetilde{\Pi}(\mathcal{F}(L,\bm{k},\bm{s}^\circ,\bm{B}^\circ)^c)\leq \exp (-C_3n\epsilon_n^2/\sum_{l=0}^L u_l)$$

\end{lemma}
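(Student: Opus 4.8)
The plan is to bound $\widetilde{\Pi}(\mathcal{F}(L,\boldk,\bolds^\circ,\boldB^\circ)^c)$ by splitting the complement of the sieve into a sparsity‑violation event and an $L_1$‑norm‑violation event, and then controlling each with a Chernoff‑type tail bound. Since membership in $\mathcal{F}(L,\boldk,\bolds^\circ,\boldB^\circ)$ requires both $\|\widetilde{\boldw}_l\|_0 \le s_l^\circ$ and $\|\widetilde{\boldw}_l\|_\infty \le B_l^\circ$ for every layer $l=0,\dots,L$, a union bound gives
$$\widetilde{\Pi}(\mathcal{F}(L,\boldk,\bolds^\circ,\boldB^\circ)^c) \le \sum_{l=0}^L \widetilde{\Pi}\big(\|\widetilde{\boldw}_l\|_0 > s_l^\circ\big) + \sum_{l=0}^L \widetilde{\Pi}\big(\|\widetilde{\boldw}_l\|_\infty > B_l^\circ\big).$$
The crucial structural observation is that under the marginal prior $\widetilde{\Pi}$ each row $\overline{\boldw}_{lj}$ equals $\boldzero$ with probability $1-\lambda_l$ and is drawn from $N(\boldzero,\sigma_0^2\boldI)$ with probability $\lambda_l$, independently over $j$; hence $\|\widetilde{\boldw}_l\|_0 = \sum_{j=1}^{k_{l+1}}\indicator(\overline{\boldw}_{lj}\ne\boldzero)$ is $\mathrm{Binomial}(k_{l+1},\lambda_l)$, each active row contributing a genuinely positive Gaussian $L_1$ norm almost surely.

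For the sparsity term I would use the elementary tail bound $\widetilde{\Pi}(\|\widetilde{\boldw}_l\|_0 > s_l^\circ) \le \binom{k_{l+1}}{s_l^\circ+1}\lambda_l^{s_l^\circ+1} \le \big(e k_{l+1}\lambda_l/(s_l^\circ+1)\big)^{s_l^\circ+1}$, obtained by a union bound over all size‑$(s_l^\circ+1)$ subsets of nodes. Writing $r_n = n\epsilon_n^2/\sum_{l=0}^L u_l$, so that $s_l^\circ+1 = r_n$, the log of this bound equals $r_n\log\big(ek_{l+1}\lambda_l/r_n\big)$. Here the layer‑wise prior inclusion probability $\lambda_l$ must be calibrated so that $ek_{l+1}\lambda_l/r_n$ is bounded by a constant strictly below $1$; the $\log k_{l+1}$ term deliberately built into $u_l$ is exactly what licenses such a choice. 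This yields $\widetilde{\Pi}(\|\widetilde{\boldw}_l\|_0 > s_l^\circ) \le \exp(-C r_n)$.

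For the norm term, conditioning on a node being active makes $\|\boldw_{lj}\|_1$ a sum of $k_l+1$ i.i.d. half‑normal variables. Using the moment generating function bound $\E\, e^{t|w|}\le 2 e^{t^2\sigma_0^2/2}$ for $w\sim N(0,\sigma_0^2)$ together with a Chernoff argument gives, for every $t>0$,
$$\widetilde{\Pi}(\|\boldw_{lj}\|_1 > B_l^\circ) \le \lambda_l\, 2^{\,k_l+1}\exp\!\big(-tB_l^\circ + (k_l+1)t^2\sigma_0^2/2\big).$$
Because $\log B_l^\circ = n\epsilon_n^2/\big((L+1)\sum_{j=0}^L(s_j^\circ+1)\big)$ forces $B_l^\circ$ to grow faster than any polynomial in the network dimensions, optimizing over $t$ (or simply fixing it) drives this probability below $\exp(-C r_n)$ with wide margin; a further union bound over the $k_{l+1}$ nodes of layer $l$ contributes only a polynomial factor that the $\log(k_l+1)$ and $\log k_{l+1}$ pieces of $u_l$ absorb.

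Finally, summing the $2(L+1)$ contributions multiplies the bound by a factor of order $L+1$, which is dominated by the $(L+1)^2$ and $\log(L+1)$ factors inside $u_l$ and is swallowed into a slightly smaller constant $C_3$, using $r_n\to\infty$. The main obstacle I anticipate is the sparsity term: one has to choose the prior inclusion probabilities $\lambda_l$ small enough that the binomial tail decays at the rate $\exp(-C r_n)$, while still leaving enough prior mass for the matching prior‑concentration (lower‑bound) argument used elsewhere, and then verify that every union‑bound polynomial factor across layers and nodes is precisely offset by the logarithmic terms packed into $u_l$. The norm term, by contrast, is comfortably slack and requires only a routine Gaussian $L_1$‑norm concentration estimate.
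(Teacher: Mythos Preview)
Your proposal is correct and follows essentially the same route as the paper: the identical union-bound split into the sparsity-violation event $\{\|\widetilde{\boldw}_l\|_0>s_l^\circ\}$ and the norm-violation event $\{\|\widetilde{\boldw}_l\|_\infty>B_l^\circ\}$, each controlled by a concentration bound under the implicit calibration $k_{l+1}\lambda_l/s_l^\circ\to 0$. The only differences are in the specific tail inequalities chosen: the paper uses Bernstein's inequality for the binomial sparsity count where you use the subset union bound $\binom{k_{l+1}}{s_l^\circ+1}\lambda_l^{s_l^\circ+1}$, and the paper bounds the norm event by passing through $\|\boldw_{li}\|_\infty>B_l^\circ/(k_l+1)$ and per-coordinate Gaussian tails rather than your direct Chernoff bound on the half-normal sum; both pairs of choices are standard alternatives yielding the same conclusion.
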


Whereas Lemmas \ref{lem:test} and \ref{lem:prior} work with a specific choice of the sieve, the following Lemma \ref{lem:kl} is developed for any generic choice of  sieve indexed by $\bolds$ and $\boldB$. The final piece of the theory developed next tries to addresses two main questions (1) Can we get a sparse network solution whose layer-wise sparsity levels and $L_1$ norms of incident edges (including the bias) of the nodes are controlled at levels $\bolds$ and $\boldB$ respectively? (2) Does this sparse network retain the same predictive performance as the original network?  

In this direction, let
\begin{equation*}
    \xi=\text{min}_{\eta_{\btheta} \in \mathcal{F}(L,\boldk,\bolds, \boldB)}||\eta_{\btheta}-\eta_0||^2_\infty
 \end{equation*}
Based on the values $\bolds$ and $\boldB$, we also define
\begin{align}
\label{e:var-r}
\nonumber    \vartheta_l&={B_l}^2/(k_l+1) + \sum_{m=0,m\neq l}^{L} \log B_m + L + \log k_{l+1} + \log(k_l + 1) + \log n + \log (\sum_{m=0}^L u_m)\\
    r_l&=s_l (k_l+1)  \vartheta_l/n
\end{align} 

Lemma \ref{lem:kl} has two sub conditions. Condition 1. requires that shrinking KL neighborhood of the true density function $P_0$ gets sufficiently large probability. This along with Lemma \ref{lem:test} and \ref{lem:prior} is an essential condition to guarantee the convergence of the true posterior in \eqref{e:posterior}. Condition 2. is the assumption needed to control the KL distance between true posterior and variational posterior and thereby guarantees the convergence of the variational posterior in \eqref{e:var-posterior} (see the discussion under Theorem \ref{thm:var-post} for more details). 

\begin{lemma}[Kullback-Leibler conditions]
\label{lem:kl}
Suppose $\sum_{l=0}^L r_l+\xi \to 0$ and $n (\sum_{l=0}^L r_l+\xi) \to \infty$ and the following two conditions hold for the prior $\widetilde{\Pi}$ in \eqref{e:prior-theta} and some $q \in \mathcal{Q}^{\bf MF}$
\begin{align*}
1.&\hspace{5mm} \widetilde{\Pi}\left(\mathcal{N}_{\sum_{l=0}^L r_l+\xi}\right)\geq \exp(-C_4 n(\sum_{l=0}^L r_l+\xi))\\
2. &\hspace{5mm}
d_{\rm KL}(q,\pi)+n\sum_{\boldz} \int d_{\rm KL}(P_0,P_{\btheta})q(\btheta,\boldz)d\btheta\leq C_5 n(\sum_{l=0}^L r_l+\xi) 
\end{align*}
where $\pi$ is the joint prior of $(\btheta,\boldz)$, $q$ is the joint variational distribution of $(\btheta,\boldz)$ and $\mathcal{N}_{\sum_{l=0}^L r_l+\xi} $ is the KL neighborhood of radius $\sum_{l=0}^L r_l +\xi$.
\end{lemma}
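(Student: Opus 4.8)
The plan is to establish the two conditions separately, both anchored to a single near-optimal network $\btheta^\circ$ attaining the approximation error, i.e.\ $\eta_{\btheta^\circ}\in\calF(L,\boldk,\bolds,\boldB)$ with $\|\eta_{\btheta^\circ}-\eta_0\|_\infty^2=\xi$. Because $\sigma_e^2=1$ and $\boldX\sim U[0,1]^p$, the Gaussian likelihood yields the clean identity $d_{\rm KL}(P_0,P_{\btheta})=\tfrac12\int(\eta_0(\boldx)-\eta_{\btheta}(\boldx))^2\,d\boldx\leq\tfrac12\|\eta_0-\eta_{\btheta}\|_\infty^2$, so every quantity reduces to controlling sup-norm deviations of the network output. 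The technical engine for both conditions is a Lipschitz-in-weights estimate obtained by propagating perturbations layer by layer, using that $\psi$ is $1$-Lipschitz together with $\|\boldW_l h\|_\infty\leq\max_j\|\boldw_{lj}\|_1\,\|h\|_\infty\leq B_l\|h\|_\infty$: perturbing the weights incident on layer $l$ coordinate-wise by at most $\delta_l$ (keeping the active-node pattern fixed) changes $\eta_{\btheta}$ in sup-norm by order $\delta_l\prod_{m\neq l}B_m$, since $\prod_{m<l}B_m$ bounds the magnitude of the signal entering layer $l$ and $\prod_{m>l}B_m$ bounds the downstream propagation. This amplification factor is exactly why $\sum_{m\neq l}\log B_m$ enters $\vartheta_l$.

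For Condition 1 I would fix the indicator $\boldz^\circ$ matching the active nodes of $\btheta^\circ$ and restrict to $\{\boldz=\boldz^\circ\}$, which carries the Bernoulli factor $\prod_l\lambda_l^{s_l}(1-\lambda_l)^{k_{l+1}-s_l}$; conditional on $\boldz^\circ$ the slab weights are independent $N(0,\sigma_0^2)$. Taking the product box of coordinate-radii $\delta_l\asymp(\prod_{m\neq l}B_m)^{-1}\sqrt{(\xi+\sum_l r_l)/(L+1)}$ around the weights of $\btheta^\circ$, the Lipschitz estimate gives, for any $\btheta$ in the box, $d_{\rm KL}(P_0,P_{\btheta})\leq\|\eta_{\btheta}-\eta_{\btheta^\circ}\|_\infty^2+\|\eta_{\btheta^\circ}-\eta_0\|_\infty^2\lesssim\sum_l r_l+\xi$, so the box lies inside $\calN_{\sum_l r_l+\xi}$. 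Lower bounding each coordinate's Gaussian mass by $\delta_l$ times the density near $\btheta^\circ$ and taking logarithms, the $\log(1/\delta_l)$ terms (aggregating to $\sum_l s_l(k_l+1)\sum_{m\neq l}\log B_m$), the weight-magnitude penalty $\sum_l\|\btheta^\circ_l\|_2^2/\sigma_0^2$ (bounded layer-wise by $s_l B_l^2$ through the $L_1$ constraint), and the Bernoulli log-terms $-\sum_l s_l\log\lambda_l$ sum to $O(n(\sum_l r_l+\xi))$; matching these against the definitions of $r_l$ and $\vartheta_l$ pins down the admissible range of $\lambda_l$ and supplies $C_4$.

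For Condition 2 I would exhibit an explicit $q\in\calQ^{\bf MF}$ concentrated on $\btheta^\circ$: set $\gamma_{lj}=1$ on active nodes and $\gamma_{lj}\leq\lambda_l$ on inactive ones, variational means $\bmu_{lj}$ equal to the weights of $\btheta^\circ$, and variances $\bsigma_{lj}^2\asymp(\prod_{m\neq l}B_m)^{-2}n^{-1}$. Then $d_{\rm KL}(q,\pi)$ splits into Bernoulli KLs and Gaussian KLs; the Gaussian piece $\tfrac12\sum(\log(\sigma_0^2/\sigma_{lj}^2)+(\sigma_{lj}^2+\mu_{lj}^2)/\sigma_0^2-1)$ produces precisely the $\sum_{m\neq l}\log B_m$, $\log n$ and $B_l^2/(k_l+1)$ contributions assembled in $n\sum_l r_l$, while the Bernoulli piece yields the $-\log\lambda_l$ and $k_{l+1}\lambda_l$ terms. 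For the second summand, concentration of $q$ near $\btheta^\circ$ gives $n\sum_{\boldz}\int d_{\rm KL}(P_0,P_{\btheta})q\,d\btheta\leq n\,\E_q\|\eta_{\btheta}-\eta_{\btheta^\circ}\|_\infty^2+n\xi$, and the Lipschitz estimate converts $\E_q\|\eta_{\btheta}-\eta_{\btheta^\circ}\|_\infty^2$ into $\sum_l(\prod_{m\neq l}B_m)^2\times(\text{layer-}l\text{ variance})$, which by the choice of $\bsigma_{lj}^2$ is $O(\sum_l r_l)$.

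I expect the main obstacle to be the Lipschitz-in-weights estimate with the sharp dependence on $\prod_{m\neq l}B_m$ and the widths $k_l$: one must track the growth of the hidden activations $\|h_l\|_\infty$ along the forward pass, verify that an $L_1$-row-bounded weight matrix contracts sup-norm perturbations by the stated factor, and ensure the amplification is captured exactly by the $\sum_{m\neq l}\log B_m$ and width logarithms in $\vartheta_l$ rather than a looser product. Making this bookkeeping tight enough that both conditions close at the common rate $\sum_l r_l+\xi$ is the delicate point; once $\delta_l$ and $\bsigma_{lj}^2$ are fixed, the Bernoulli and Gaussian KL evaluations are routine.
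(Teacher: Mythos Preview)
Your proposal is correct and mirrors the paper's proof: both parts anchor at the approximating network $\btheta^*\in\calF(L,\boldk,\bolds,\boldB)$, invoke a layer-wise Lipschitz propagation to convert row-$L_1$ weight perturbations into sup-norm output perturbations (with amplification governed by $\prod_{m\neq l}B_m$), lower-bound the prior on a Gaussian box times the Bernoulli factor $\prod_l\lambda_l^{s_l}(1-\lambda_l)^{k_{l+1}-s_l}$ for Part~1, and exhibit an explicit $q\in\calQ^{\bf MF}$ centered at $\btheta^*$ with tuned slab variances for Part~2. The paper's only refinements relative to your sketch are setting $\gamma_{lj}=0$ (rather than merely $\leq\lambda_l$) on inactive nodes, and for Part~2 controlling $\E_q\bigl(\sup_i\|\overline{\boldw}_{li}-\overline{\boldw}_{li}^*\|_1\bigr)^2$ via a Chernoff-type argument together with a recursive bound in which the downstream amplification $\prod_{m>l}(\widetilde W_m+B_m)$ is itself random---precisely the bookkeeping you already flag as the main obstacle.
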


The following result shows that the variational posterior is consistent as long as Lemma \ref{lem:test}, Lemma \ref{lem:prior} and Lemma \ref{lem:kl} hold. The proof of Theorem \ref{thm:var-post}  demonstrates how the validity of these three lemmas imply variational posterior consistency.
\begin{theorem}
\label{thm:var-post}
Suppose Lemma \ref{lem:kl} holds and Lemmas \ref{lem:test} and \ref{lem:prior} hold for $\epsilon_n=\sqrt{(\sum_{l=0}^L r_l+\xi)\sum_{l=0}^L u_l}$.  Then for some slowly increasing sequence $M_n \to \infty$,  $M_n \epsilon_n \to 0$ and $\widetilde{\Pi}^*$ as in \eqref{e:var-posterior-theta}, 
$$ \widetilde{\Pi}^*(\mathcal{H}_{M_n\epsilon_n}^c)\to 0, \quad n \to \infty$$
in $P_0^n$ probability where  $\mathcal{H}_{M_n\epsilon_n}^c=\{\btheta:d_{\rm H}(P_0,P_{\btheta})\leq M_n\epsilon_n\}$ is the Hellinger neighborhood of radius $M_n\epsilon_n$.
\end{theorem}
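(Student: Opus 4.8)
The plan is to show that the variational posterior places asymptotically all its mass on a shrinking Hellinger ball by (i) turning the defining optimality of $\pi^*$ in \eqref{e:elbo} into a budget on an integrated log-likelihood-ratio functional, (ii) lower bounding that functional by the variational Hellinger risk using the test of Lemma \ref{lem:test} together with the prior-mass control of Lemma \ref{lem:prior}, and (iii) concluding by Markov's inequality, where the extra slack $M_n\to\infty$ absorbs the constants. Throughout write $\Lambda_n(\btheta)=\log(P_\btheta^n/P_0^n)$ and $r_n^2=\sum_{l=0}^L r_l+\xi$, and recall that by hypothesis $\epsilon_n^2=r_n^2\sum_{l=0}^L u_l$, so $r_n^2\le\epsilon_n^2$: the KL/variational scale $r_n^2$ is never larger than the testing scale $\epsilon_n^2$.

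\emph{Step 1 (from ELBO optimality to a risk budget).} By \eqref{e:elbo}, $\pi^*$ maximizes the ELBO, equivalently it minimizes over $q\in\mathcal{Q}^{\bf MF}$ the functional $\Psi(q):=d_{\rm KL}(q,\pi)-\sum_{\boldz}\int\Lambda_n(\btheta)\,q(\btheta,\boldz)\,d\btheta$, since the two differ only by the constant $\log P_0^n$. Evaluating at the reference $q$ supplied by Condition 2 of Lemma \ref{lem:kl} gives $\Psi(\pi^*)\le\Psi(q)$ pointwise in the data. Taking $P_0^n$-expectation and using $\E_{P_0^n}[-\Lambda_n(\btheta)]=n\,d_{\rm KL}(P_0,P_\btheta)$ for fixed $\btheta$, Condition 2 yields $\E_{P_0^n}\Psi(q)=d_{\rm KL}(q,\pi)+n\sum_{\boldz}\int d_{\rm KL}(P_0,P_\btheta)q\le C_5\,nr_n^2\le C_5\,n\epsilon_n^2$, so on a high-probability event $\Psi(\pi^*)\le\Psi(q)=O_{P_0^n}(n\epsilon_n^2)$ by a Markov argument.

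\emph{Step 2 (lower bounding $\Psi(\pi^*)$ by the variational Hellinger risk).} This is the crux. Since $\Psi(\pi^*)=d_{\rm KL}(\pi^*,\pi)-\int\Lambda_n\,\widetilde\pi^*$, the Donsker--Varadhan inequality applied to the test function $g(\btheta)=\Lambda_n(\btheta)+Cn\,d_{\rm H}^2(P_0,P_\btheta)$ gives
$$\Psi(\pi^*)\ \ge\ Cn\!\int d_{\rm H}^2(P_0,P_\btheta)\,\widetilde\pi^*(\btheta)\,d\btheta\ -\ \log\!\int \frac{P_\btheta^n}{P_0^n}\,e^{Cn\,d_{\rm H}^2(P_0,P_\btheta)}\,\widetilde\pi(\btheta)\,d\btheta.$$
It therefore suffices to show the last log-integral is $O_{P_0^n}(n\epsilon_n^2)$. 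Splitting the integral over the sieve $\mathcal{F}=\mathcal{F}(L,\boldk,\bolds^\circ,\boldB^\circ)$ and its complement, the $\mathcal{F}^c$ part has $P_0^n$-expectation dominated by $\widetilde\Pi(\mathcal{F}^c)\le e^{-C_3 nr_n^2}$ from Lemma \ref{lem:prior}; on $\mathcal{F}$ one peels into Hellinger shells $\{j\epsilon_n<d_{\rm H}\le(j+1)\epsilon_n\}$ and applies the test $\phi$ of Lemma \ref{lem:test}, whose type-II bound $e^{-C_2 n d_{\rm H}^2}$ (chosen with $C<C_2$) dominates the weight $e^{Cn d_{\rm H}^2}$ on each shell, while the type-I contribution is discarded on the high-probability event $\{\phi\le e^{-C_1 n\epsilon_n^2/2}\}$ after invoking the evidence lower bound $m(\mathcal{D})/P_0^n\ge e^{-(C_4+2)nr_n^2}$, which follows from Condition 1 of Lemma \ref{lem:kl} by a Chebyshev argument over $\mathcal{N}_{r_n^2}$. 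Summing the resulting geometric series over shells leaves the log-integral bounded by a multiple of $n\epsilon_n^2$ with $P_0^n$-probability tending to one.

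\emph{Step 3 (conclusion by Markov) and the main obstacle.} Combining Steps 1 and 2 on the intersection of the above high-probability events gives $Cn\int d_{\rm H}^2\,\widetilde\pi^*\le\Psi(\pi^*)+O(n\epsilon_n^2)=O_{P_0^n}(n\epsilon_n^2)$, hence $\int d_{\rm H}^2(P_0,P_\btheta)\,\widetilde\pi^*(\btheta)\,d\btheta=O_{P_0^n}(\epsilon_n^2)$; a deterministic Markov step then yields $\widetilde\Pi^*(\mathcal{H}_{M_n\epsilon_n}^c)=\widetilde\Pi^*(d_{\rm H}^2>M_n^2\epsilon_n^2)\le(M_n^2\epsilon_n^2)^{-1}\int d_{\rm H}^2\,\widetilde\pi^*=O_{P_0^n}(M_n^{-2})\to0$ since $M_n\to\infty$, which is the assertion of the theorem (with $\widetilde\Pi^*$ as in \eqref{e:var-posterior-theta}). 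I expect the principal difficulty to lie entirely in Step 2: the weight $e^{Cn d_{\rm H}^2}$ is unbounded at the scale $n$, so the argument must be organized shell-by-shell with the strict inequality $C<C_2$, and the type-I error together with the sieve complement must be neutralized using the evidence lower bound of Lemma \ref{lem:kl} so that only the benign scale $n\epsilon_n^2$ survives. Steps 1 and 3 are then routine once this uniform-over-$\mathcal{F}$ control is in place.
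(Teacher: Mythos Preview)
Your overall strategy---Donsker--Varadhan with $g(\btheta)=\Lambda_n(\btheta)+Cn\,d_{\rm H}^2(P_0,P_\btheta)$ to extract an integrated Hellinger risk---is different from the paper's, which instead applies a two-point Jensen/data-processing bound to $d_{\rm KL}(\widetilde\pi^*,\widetilde\pi(\cdot|\mathcal D))$ and works directly with $\log\widetilde\Pi(\mathcal H_{M_n\epsilon_n}^c|\mathcal D)=\log\int_{\mathcal H_{M_n\epsilon_n}^c}(P_\btheta^n/P_0^n)\widetilde\pi(\btheta)d\btheta-\log m(\mathcal D)/P_0^n$. The paper's route never carries an extra exponential weight on the likelihood ratio, and that is precisely where your argument breaks.

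The gap is in Step~2, in the treatment of the sieve complement. You write that ``the $\mathcal F^c$ part has $P_0^n$-expectation dominated by $\widetilde\Pi(\mathcal F^c)\le e^{-C_3 n r_n^2}$.'' But the integrand carries the weight $e^{Cn\,d_{\rm H}^2}$, so in fact
\[
\E_{P_0^n}\!\int_{\mathcal F^c}\frac{P_\btheta^n}{P_0^n}\,e^{Cn\,d_{\rm H}^2(P_0,P_\btheta)}\,\widetilde\pi(\btheta)\,d\btheta
=\int_{\mathcal F^c} e^{Cn\,d_{\rm H}^2(P_0,P_\btheta)}\,\widetilde\pi(\btheta)\,d\btheta
\le e^{Cn}\,\widetilde\Pi(\mathcal F^c),
\]
because $d_{\rm H}^2\le 1$ is the only uniform control available on $\mathcal F^c$. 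Lemma~\ref{lem:prior} gives $\widetilde\Pi(\mathcal F^c)\le e^{-C_3 n r_n^2}$ with $r_n^2=\sum_l r_l+\xi\to 0$, so $e^{Cn}\cdot e^{-C_3 n r_n^2}=e^{n(C-C_3 r_n^2)}\to\infty$ for any fixed $C>0$. The log-integral in your display is therefore not $O_{P_0^n}(n\epsilon_n^2)$ as claimed, and Step~3 cannot conclude.

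The fix is to avoid the unbounded weight. Either cap it---use $g=\Lambda_n+Cn\min(d_{\rm H}^2,M_n^2\epsilon_n^2)$, or equivalently $g=\Lambda_n+\kappa\,\mathbf 1_{\mathcal H_{M_n\epsilon_n}^c}$ with $\kappa$ chosen at the test scale---so that the off-sieve contribution is $e^\kappa\widetilde\Pi(\mathcal F^c)$ with $\kappa$ matched to the prior-mass bound; or follow the paper and apply Jensen to $d_{\rm KL}(\widetilde\pi^*,\widetilde\pi(\cdot|\mathcal D))$ to obtain directly
\[
-\widetilde\Pi^*(\mathcal H_{M_n\epsilon_n}^c)\,\log\!\int_{\mathcal H_{M_n\epsilon_n}^c}\frac{P_\btheta^n}{P_0^n}\widetilde\pi(\btheta)\,d\btheta
\ \le\ d_{\rm KL}(\pi^*,\pi(\cdot|\mathcal D))-\widetilde\Pi^*(\mathcal H_{M_n\epsilon_n}^c)\,\log\frac{m(\mathcal D)}{P_0^n}+\log 2,
\]
where the integral on the left carries no extra weight and is handled by Lemmas~\ref{lem:test} and \ref{lem:prior} exactly as in your shell decomposition, while the right-hand side is controlled by Lemma~\ref{lem:kl}. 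Either modification collapses your argument back to essentially the paper's.
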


Note, the above contraction rate depends mainly on two quantities $r_l$ and $\xi$. Note $r_l$ controls the number of nodes in the neural network. If the network is not sparse, then  $r_l$ is $k_{l+1}(k_l+1)\vartheta_l/n$ instead of $s_l (k_l+1)\vartheta_l/n $ which can in turn make the convergence of $\epsilon_n\to 0$ difficult. On the other hand, if $s_l$ and $B_l$ are too small, it will cause $\xi$ to explode since a good approximation to the true function may not exist in a very sparse space. 

\vspace{2mm}
\noindent {\bfseries Remark (Rates as a function of $n$).} {\it Let $L\sim O(\log n)$, $B_l^2 \sim O(k_l+1)$ and  $s_l(k_l+1)=O(n^{1-2\varrho})$, for some $\varrho>0$, then one can work with $\epsilon_n=n^{-\varrho}\log^3 (n)$ as long as $\xi=O(n^{-2\varrho} \log^2(n))$. The exact expression of $\varrho$ is determined by the degree of smoothness of the function $\eta_0$.
}

\vspace{2mm}

\noindent{\bf Proof of Theorem \ref{thm:var-post}}
\vspace{1mm}

\noindent {\bf Discussion.} To further enunciate Lemmas \ref{lem:test} and \ref{lem:prior} consider the quantity $\mathcal{E}_{1n}=\int_{\mathcal{H}_{M_n\epsilon_n}^c}(P_{\btheta}^n/P_0^n)\widetilde{\pi}(\btheta)d\btheta$ as used in the following proof. Here, $\mathcal{E}_{1n}$ can be split into two parts
\begin{align*}
    \mathcal{E}_{1n}=\int_{\mathcal{H}_{M_n\epsilon_n}^c \cap \mathcal{F}(L,\bm{k},\bm{s}^\circ,\bm{B}^\circ)}(P_{\btheta}^n/P_0^n)\widetilde{\pi}(\btheta)d\btheta+\int_{\mathcal{H}_{M_n\epsilon_n}^c\cap \mathcal{F}(L,\bm{k},\bm{s}^\circ,\bm{B}^\circ)^c}(P_{\btheta}^n/P_0^n)\widetilde{\pi}(\btheta)d\btheta
\end{align*}
Whereas Lemma \ref{lem:test} provides a handle on the first term by controlling the covering number of the sieve $\mathcal{F}(L,\bm{k},\bm{s}^\circ,\bm{B}^\circ)$, Lemma \ref{lem:prior} gives a handle on the second term by controlling $\widetilde{\Pi}(\mathcal{F}(L,\bm{k},\bm{s}^\circ,\bm{B}^\circ)^c)$ (for more details we refer to Lemma \ref{lem:covering} in the Appendix \ref{AppendixA}).
 
Next, consider the quantity $\mathcal{E}_{2n}=\log \int (P_{\btheta}^n/P_0^n)\widetilde{\pi}(\btheta)d\btheta$ in the following proof. Lemma \ref{lem:kl} part 1. provides a control on this term (see Lemma \ref{lem:kl-denominator} in the the Appendix \ref{AppendixA} for more details). Finally,  consider the quantity $\mathcal{E}_{3n} =d_{\rm KL}(q,\pi)+\sum_{\boldz}\int \log  (P_0^n/P_{\btheta}^n)q(\btheta,\boldz)d\btheta$ in the following proof. Indeed Lemma \ref{lem:kl} part 2. provides a control on this term (see Lemma \ref{lem:q-determination} in the Appendix \ref{AppendixA} for further details).
\vspace{2mm}
  
\noindent {\bf Proof.} \noindent Let $\widetilde{\Pi}$ and $\widetilde{\Pi}^*$ be as in \eqref{e:true-posterior-theta} and \eqref{e:var-posterior-theta} respectively. Now,
\begin{align*}
d_{\rm KL}(\widetilde{\pi}^*,\widetilde{\pi}(|\mathcal{D}))&=\int_{\mathcal{A}} \widetilde{\pi}^*(\btheta)\log \frac{\widetilde{\pi}^*(\btheta)}{\widetilde{\pi}(\btheta|\mathcal{D})}d\btheta+\int_{\mathcal{A}^c} \widetilde{\pi}^*(\btheta)\log \frac{\widetilde{\pi}^*(\btheta)}{\widetilde{\pi}(\btheta|\mathcal{D})}d\btheta \\
&=-\widetilde{\Pi}^*(\mathcal{A})\int_{\mathcal{A}} \frac{\widetilde{\pi}^*(\btheta)}{\widetilde{\Pi}^*(\mathcal{A})}\log \frac{\widetilde{\pi}(\btheta|\mathcal{D})}{\widetilde{\pi}^*(\btheta)}d\btheta-\widetilde{\Pi}^*(\mathcal{A}^c)\int_{\mathcal{A}^c} \frac{\widetilde{\pi}^*(\btheta)}{\widetilde{\Pi}^*(\mathcal{A}^c)}\log \frac{\widetilde{\pi}(\btheta|\mathcal{D})}{\widetilde{\pi}^*(\btheta)}d\btheta\\
&\geq \widetilde{\Pi}^*(\mathcal{A}) \log 
	\frac{\widetilde{\Pi}^*(\mathcal{A})}{\widetilde{\Pi}(\mathcal{A}|\mathcal{D})}+\widetilde{\Pi}^*(\mathcal{A}^c) \log 
	\frac{\widetilde{\Pi}^*(\mathcal{A}^c)}{\widetilde{\Pi}(\mathcal{A}^c|\mathcal{D})}, \hspace{18mm}\text{Jensen's inequality}
	\end{align*}
where the above lines hold for any set $\mathcal{A}$. Since $\widetilde{\Pi}(\mathcal{A}|\mathcal{D})\leq 1$,
\begin{align*}
&\hspace{20mm}\geq \widetilde{\Pi}^*(\mathcal{A}) \log 
	\widetilde{\Pi}^*(\mathcal{A})+\widetilde{\Pi}^*(\mathcal{A}^c) \log 
	\widetilde{\Pi}^*(\mathcal{A}^c)-\widetilde{\Pi}^*(\mathcal{A}^c) \log \widetilde{\Pi}(\mathcal{A}^c|\mathcal{D})\\
	&\hspace{20mm} \geq  -\widetilde{\Pi}^*(\mathcal{A}^c) \log \widetilde{\Pi}(\mathcal{A}^c|\mathcal{D})-\log 2, 
	\hspace{4mm} (\because \hspace{1mm} x\log x+(1-x)\log (1-x) \geq -\log 2)
	\\
	&\hspace{20mm}=-\widetilde{\Pi}^*(\mathcal{A}^c) \Bigg(\underbrace{\log \int_{\mathcal{A}^c} (P_{\btheta}^n/P_0^n)\widetilde{\pi}(\btheta)d\btheta}_{\mathcal{E}_{1n}}-\underbrace{\log \int (P_{\btheta}^n/P_0^n)\widetilde{\pi}(\btheta)d\btheta}_{\mathcal{E}_{2n}}\Bigg)-\log 2
\end{align*}
The above representation is similar to the proof of Theorems 3.1 and 3.2 in \cite{BHAT2021}. For any $q \in \mathcal{Q}^{\bf MF}$,
\begin{align}
\label{e:kl-relation}
\nonumber -\widetilde{\Pi}^*(\mathcal{A}^c) \mathcal{E}_{1n}
&\leq d_{\rm KL}(\widetilde{\pi}^*,\widetilde{\pi}(|\mathcal{D}))-\widetilde{\Pi}^*(\mathcal{A}^c) \mathcal{E}_{2n}+\log 2\\
\nonumber &\leq  d_{\rm KL}(\pi^*,\pi(|\mathcal{D}))-\widetilde{\Pi}^*(\mathcal{A}^c) \mathcal{E}_{2n}+\log 2 \hspace{10mm} \text{by Lemma \ref{lem:kl-upp}}\\
\nonumber &\leq  d_{\rm KL}(q,\pi(|\mathcal{D}))-\widetilde{\Pi}^*(\mathcal{A}^c) \mathcal{E}_{2n}+\log 2\hspace{10mm} \text{$\pi^*$ is the KL minimizer}\\
\nonumber &\leq   \underbrace{d_{\rm KL}(q,\pi)+\sum_{\boldz}\int \log  \frac{P_0^n}{P_{\btheta}^n}q(\btheta,\boldz)d\btheta}_{\mathcal{E}_{3n}}+(1-\widetilde{\Pi}^*(\mathcal{A}^c)) \mathcal{E}_{2n}+\log 2 \\
&= \mathcal{E}_{3n}+(1-\widetilde{\Pi}^*(\mathcal{A}^c))\mathcal{E}_{2n}+\log 2
\end{align}
where the fourth inequality in the above equation follows since 
\begin{align*}
    d_{\rm KL}(q,\pi(|\mathcal{D}))&=\sum_{\boldz}\int (\log q(\btheta,\boldz)-\log P_{\btheta}^n-\log \pi(\btheta,\boldz)+\log m(\mathcal{D})) q(\btheta,\boldz)d\btheta\\
    &=\underbrace{\sum_{\boldz}\int (\log q(\btheta,\boldz)-\log \pi(\btheta,\boldz))q(\btheta,\boldz)d\btheta}_{d_{\rm KL}(q,\pi)}+\sum_{\boldz}\int (\log P_0^n-\log P_{\btheta}^n)q(\btheta,\boldz)d\btheta\\
    &+\underbrace{  \log  m(\mathcal{D}) -\log P_0^n}_{\mathcal{E}_{2n}} 
\end{align*}
where $m(\mathcal{D})$ is the marginal distribution of data as in \eqref{e:posterior}.

Take $\mathcal{A}=\mathcal{H}_{M_n \epsilon_n}^c=\{\btheta: d_{\rm H}(P_0,P_{\btheta})>M_n\epsilon_n\}$

\noindent If Lemma \ref{lem:test} and \ref{lem:prior} hold, then by Lemma \ref{lem:covering}, it can be shown that $\mathcal{E}_{1n}\leq - n C M_n^2 \epsilon_n^2/\sum u_l$ for any $M_n \to \infty$ with high probability.

\noindent If Lemma \ref{lem:kl} condition 1. holds, then by Lemma \ref{lem:kl-denominator} , $\mathcal{E}_{2n} \leq n M_n  (\sum_{l=0}^L r_l+\xi)$ for any $M_n \to \infty$.

\noindent If Lemma \ref{lem:kl} condition 2. hold, then by Lemma \ref{lem:q-determination}, $\mathcal{E}_{3n} \leq n M_n (\sum_{l=0}^L r_l+\xi)$ for any $M_n \to \infty$.

\noindent Therefore, by \eqref{e:kl-relation}, we get
\begin{align*}
      \frac{n C M_n^2 \epsilon_n^2}{\sum u_l} \widetilde{\Pi}^*\left(\mathcal{H}_{M_n \epsilon_n}^c\right)&\leq   n M_n  (\sum_{l=0}^L r_l+\xi)+n M_n  (\sum_{l=0}^L r_l+\xi)+\log 2\\
      &\leq   n M_n  (\sum_{l=0}^L r_l+\xi)+n M_n  (\sum_{l=0}^L r_l+\xi)+M_n (\sum_{l=0}^L r_l+\xi) \\
      \implies \widetilde{\Pi}^*\left(\mathcal{H}_{M_n \epsilon_n}^c\right)&\leq \frac{3 M_n(\sum_{l=0}^L r_l+\xi) \sum u_l}{C_1 M_n^2 \epsilon_n^2} 
\end{align*}
Taking $\epsilon_n=\sqrt{\sum_{l=0}^L (r_l+\xi) \sum u_l}$ and noting $M_n \to \infty$, the proof follows. \hfill \qedsymbol
\\

We next give conditions on the prior probabilities $\lambda_l$ and $\sigma_0$ to guarantee that Lemmas \ref{lem:test}, \ref{lem:prior} and \ref{lem:kl} hold. This in turn implies the conditions of Theorem \ref{thm:var-post} hold and variational posterior is consistent.

\begin{corollary}\label{post_contraction_main_theorem}
Let $\sigma_0^2=1$,  $-\log \lambda_l=\log (k_{l+1})+C_l(k_l+1) \vartheta_l$, then conditions of Theorem \ref{thm:var-post} hold and $\widetilde{\Pi}^*$ as in \eqref{e:var-posterior-theta} satisfies 
$$ \widetilde{\Pi}^*(\mathcal{H}_{M_n\epsilon_n}^c)\to 0, \quad n \to \infty$$
in $P_0^n$ probability where and  $\mathcal{H}_{M_n\epsilon_n}=\{\btheta:d_{\rm H}(P_0,P_{\btheta})\leq M_n\epsilon_n\}$ is the Hellinger neighborhood of radius $M_n\epsilon_n$.
\end{corollary}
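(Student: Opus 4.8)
The plan is to verify that the stated choices $\sigma_0^2=1$ and $-\log\lambda_l=\log(k_{l+1})+C_l(k_l+1)\vartheta_l$ make the hypotheses of Lemmas \ref{lem:test}, \ref{lem:prior} and \ref{lem:kl} hold at $\epsilon_n=\sqrt{(\sum_{l=0}^L r_l+\xi)\sum_{l=0}^L u_l}$, after which Theorem \ref{thm:var-post} delivers the conclusion verbatim. First I would confirm the rate requirements: the scaling $\sum_l r_l+\xi\to 0$ and $n(\sum_l r_l+\xi)\to\infty$ (inherited from Lemma \ref{lem:kl}) give $\epsilon_n\to 0$, $n\epsilon_n^2\to\infty$ and $n\epsilon_n^2/\sum_l u_l=n(\sum_l r_l+\xi)\to\infty$, so all three lemmas are invoked at an admissible $\epsilon_n$. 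I would then note that Lemma \ref{lem:test} is purely geometric---it only controls the covering number of $\mathcal{F}(L,\boldk,\bolds^\circ,\boldB^\circ)$ and is independent of the prior parameters---so it holds automatically; the substantive work lies in the two prior-mass statements, where $\lambda_l$ and $\sigma_0$ enter.

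For Lemma \ref{lem:prior} I would decompose $\mathcal{F}(L,\boldk,\bolds^\circ,\boldB^\circ)^c$ into a sparsity-violation event $\{\|\widetilde{\boldw}_l\|_0>s_l^\circ \text{ for some }l\}$ and a norm-violation event $\{\|\widetilde{\boldw}_l\|_\infty>B_l^\circ \text{ for some }l\}$. The number of active nodes in layer $l$ is $\sum_j z_{lj}\sim\text{Binomial}(k_{l+1},\lambda_l)$, and since $k_{l+1}\lambda_l=\exp(-C_l(k_l+1)\vartheta_l)$ lies far below $1$, a Chernoff bound makes $\prob(\text{Binomial}(k_{l+1},\lambda_l)>s_l^\circ)$ exponentially small. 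For the norm event the slab weights are standard Gaussian ($\sigma_0^2=1$), so a Gaussian $L_1$-tail bound together with the fact that $\log B_l^\circ$ is of order $n\epsilon_n^2/((L+1)\sum_j(s_j^\circ+1))$ forces the tail below $\exp(-C_3 n\epsilon_n^2/\sum_l u_l)$; a union bound over the $L+1$ layers closes the argument.

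For Lemma \ref{lem:kl} I would treat both conditions around a single minimizing network $\btheta^\star\in\mathcal{F}(L,\boldk,\bolds,\boldB)$ attaining $\|\eta_{\btheta^\star}-\eta_0\|_\infty^2=\xi$. For Condition 1 the bound $d_{\rm KL}(P_0,P_{\btheta})=\tfrac12\int(\eta_{\btheta}-\eta_0)^2 d\boldx\le\tfrac12\|\eta_{\btheta}-\eta_0\|_\infty^2$ (here $\sigma_e^2=1$), combined with Lipschitz propagation of a weight perturbation through the $L$ layers, shows that a parameter ball around $\btheta^\star$ with the correct active set lands inside $\mathcal{N}_{\sum_l r_l+\xi}$. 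The prior mass of this ball factorizes into a Bernoulli selection probability $\prod_l\lambda_l^{s_l}(1-\lambda_l)^{k_{l+1}-s_l}$ and a Gaussian ball on the continuous weights; upon taking logarithms the dominant cost is $\sum_l s_l(-\log\lambda_l)=\sum_l s_l\log k_{l+1}+\sum_l s_l C_l(k_l+1)\vartheta_l$, and the key algebraic observation $s_l(k_l+1)\vartheta_l=n r_l$ (with $\log k_{l+1}\le\vartheta_l$) shows this is $O(n\sum_l r_l)$, matching the required exponent $C_4 n(\sum_l r_l+\xi)$. For Condition 2 I would construct $q\in\mathcal{Q}^{\bf MF}$ with $\gamma_{lj}\to1$ on the support of $\btheta^\star$ and $\gamma_{lj}\to0$ off it, means $\bmu_{lj}$ equal to the corresponding rows of $\btheta^\star$, and small variances $\bsigma_{lj}^2$. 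The term $n\sum_{\boldz}\int d_{\rm KL}(P_0,P_{\btheta})q\,d\btheta$ is then controlled by the bias $\xi$ plus a variance contribution calibrated by $\bsigma_{lj}^2$, while $d_{\rm KL}(q,\pi)$ splits into a Gaussian slab piece (bounded using $\|\bmu_{lj}\|_2^2\le\|\bmu_{lj}\|_1^2\le B_l^2$, so that $\sum_l$ gives $\lesssim\sum_l s_l B_l^2\le n\sum_l r_l$ through the $B_l^2/(k_l+1)$ summand of $\vartheta_l$) and a Bernoulli piece dominated by $\sum_l s_l(-\log\lambda_l)$, again $O(n\sum_l r_l)$.

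The hard part will be Condition 2 of Lemma \ref{lem:kl}: one must choose the variational variances $\bsigma_{lj}^2$ small enough that the propagated perturbation keeps $\E_q\|\eta_{\btheta}-\eta_0\|^2$ within $O(\sum_l r_l+\xi)$, yet not so small that the slab divergence $d_{\rm KL}(N(\bmu_{lj},\text{diag}(\bsigma_{lj}^2)),N(\boldzero,\boldI))$ explodes through its $-\log\bsigma_{lj}^2$ term, and to verify that the layer-dependent constants $C_l$ can be chosen uniformly so that the Bernoulli contribution, the Gaussian contribution, and the approximation error all collapse into the single budget $C_5 n(\sum_l r_l+\xi)$. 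Everything else is bookkeeping once the identity $n r_l=s_l(k_l+1)\vartheta_l$ is used to line up each exponent with the contraction rate $\epsilon_n$.
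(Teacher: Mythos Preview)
Your proposal is correct and mirrors the paper's argument in substance: the paper proves Lemmas \ref{lem:prior} and \ref{lem:kl} in the appendix under abstracted prior assumptions (namely $\lambda_l k_{l+1}/s_l^\circ\to 0$, $\sum u_l\log L=o(n\epsilon_n^2)$, $-\log\lambda_l=O((k_l+1)\vartheta_l)$, and $-\log(1-\lambda_l)=O((s_l/k_{l+1})(k_l+1)\vartheta_l)$), and the corollary's proof merely verifies these for the stated $\lambda_l$, whereas you fold the verification directly into the lemma arguments---same Bernstein/Chernoff bound for the sparsity event, same Gaussian tail for the norm event, same parameter-ball construction for Condition 1, and the same deterministic-indicator $q$ centered at $\btheta^\star$ with calibrated $\sigma_l^2$ for Condition 2. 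One small refinement: in the paper the variational inclusion probabilities are taken to be exact indicators $\gamma_{li}^*=\indicator(\|\overline{\boldw}_{li}^*\|_1\neq 0)$ rather than limits $\gamma_{lj}\to 0,1$, which makes the Bernoulli KL collapse cleanly to $\sum_l[s_l\log(1/\lambda_l)+(k_{l+1}-s_l)\log(1/(1-\lambda_l))]$ and brings the second $-\log(1-\lambda_l)$ term into play---you should check that term too, but it is immediate since $\lambda_l\to 0$ forces $-\log(1-\lambda_l)\sim\lambda_l$.
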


The proof of the corollary has been provided in Appendix \ref{AppendixA}.

In the preceding corollary, note that our expression of prior inclusion probability varies as a function of $l$ thereby providing a handle on layer-wise sparsity. Indeed, using these expressions in numerical studies further substantiates the theoretical framework developed in this section.

\vspace{2mm}

\noindent{\bf Remark (Optimal Contraction).} {\it For a fixed choice of $\boldk$, the optimal contraction rate is achieved at 
$\bolds^\star, \boldB^\star=\underset{\bolds,\boldB}{\rm argmin} (\sum r_l+\xi)$. Thus, $\bolds^\star$ and $\boldB^\star$ are the optimal values of $\bolds$ and $\boldB$  which give the best sparse network with minimal loss in the true accuracy. The corresponding probability expressions in Corollary \ref{post_contraction_main_theorem} can be accordingly modified by setting $\bolds=\bolds^\star$ and $\boldB=\boldB^\star$ in the expressions of $\vartheta_l$ and $r_l$ in \eqref{e:var-r}.}

\section{Implementation Details}

\label{Sec:implement}
\noindent {\bfseries Evidence Lower Bound}. The ELBO presented in \eqref{e:elbo} is given by
$\mathcal{L}= -E_{q} [\log P_{\btheta}^n]+d_{\rm KL}(q,\pi)$
which is further simplified as
\begin{align*}
& -E_{q} [\log P_{\btheta}^n]+d_{\rm KL}(q,\pi) \\
& = -\E_{q(\btheta|\boldz)q(\boldz)}[\log P_{\btheta}^n] + d_{\rm KL}\left(q(\btheta|\boldz)q(\boldz),\pi(\btheta|\boldz)\pi(\boldz)\right)\\
& = -\E_{q(\btheta|\boldz)q(\boldz)}[\log P_{\btheta}^n]+ \sum_{l,j} d_{\rm KL}(q(z_{lj})||\pi(z_{lj})) \\
& \qquad + \sum_{l,j} \left[ q(\boldz_{lj} = 1) d_{\rm KL}( q(\overline{\boldw}_{lj}|z_{lj}=1) || \pi(\overline{\boldw}_{lj}|z_{lj}=1)) + q(\boldz_{lj} = 0) d_{\rm KL}( q(\overline{\boldw}_{lj}|z_{lj}=0) || \pi(\overline{\boldw}_{lj}|z_{lj}=0)) \right] \\
& = -\E_{q(\btheta|\boldz)q(\boldz)}[\log P_{\btheta}^n]+ \sum_{l,j} d_{\rm KL}(q(z_{lj})||\pi(z_{lj})) + \sum_{l,j} q(\boldz_{lj} = 1) d_{\rm KL}( q(\overline{\boldw}_{lj}|z_{lj}=1) || \pi(\overline{\boldw}_{lj}|z_{lj}=1)) \\
& = -\E_{q(\btheta|\boldz)q(\boldz)}[\log P_{\btheta}^n]+ \sum_{l,j} d_{\rm KL}(q(z_{lj})||\pi(z_{lj})) + \sum_{l,j} q(\boldz_{lj} = 1) d_{\rm KL}( N(\bmu_{lj},\text{diag}(\bsigma^2_{lj})) || N(0,\sigma_0^2 \boldI)) 
\end{align*}
The KL of discrete variables appearing in the above expression creates a challenge in practical implementation. \cite{Jang2017Gumbel}, \cite{Maddison2017Contrelax} proposed to replace discrete random variable with its continuous relaxation. Specifically, the continuous relaxation approximation is achieved through Gumbel-softmax (GS) distribution, that is  $q(z_{lj}) \sim {\rm Ber}(\gamma_{lj})$ is approximated by $q(\tilde{z}_{lj}) \sim {\rm GS}(\gamma_{lj},\tau)$, where   
$$ \tilde{z}_{lj} = (1+\exp(-\eta_{lj}/\tau))^{-1}, \quad \eta_{lj} = \log(\gamma_{lj}/(1-\gamma_{lj})) + \log (u_{lj}/(1-u_{lj})), \quad u_{lj} \sim U(0,1)$$
where $\tau$ is the temperature. We set $\tau=0.5$ for this paper (also see section 5 in \cite{Bai-Guang-2020}). $\tilde{z}_{lj}$ is used in the backward pass for easier gradient calculation, while $z_{lj}$ will be used for selecting nodes in the forward pass. We use non-centered parameterization for the Gaussian slab variational approximation where $N(\bmu_{lj},\text{diag}(\bsigma^2_{lj}))$ is reparameterized as $\bmu_{lj}+\bsigma_{lj}\odot \bzeta_{lj}$ for $\bzeta_{lj}\sim N(0,\boldI)$, where $\odot$ denotes the entry-wise (Hadamard) product.  
    
\begin{algorithm}[t]
\caption{Variational inference in SS-IG Bayesian neural networks}
\label{alg_SSIG}
\begin{algorithmic}
    \STATE {\bfseries Inputs:} training dataset, network architecture, and optimizer tuning parameters.
    \STATE \textit{Model inputs:} prior parameters for $\btheta$, $\boldz$. 
    \STATE \textit{Variational inputs:} number of Monte Carlo samples $S$.
    \STATE {\bfseries Output:} Variational parameter estimates of network weights and sparsity.
    \STATE {\bfseries Method:} Set initial values of variational parameters.
   \REPEAT
   \STATE Generate $S$ samples from $\bzeta_{lj} \sim N(0,\boldI)$ and $u_{lj}\sim U(0,1)$
   \STATE Generate $S$ samples for $( z_{lj}, \tilde{z}_{lj})$ using $u_{lj}$
   \STATE Use $\bmu_{lj}, \bsigma_{lj}, \bzeta_{lj}$ and $z_{lj}$ to compute loss (ELBO) in forward pass
   \STATE Use $\bmu_{lj}, \bsigma_{lj}, \bzeta_{lj}$ and $\tilde{z}_{lj}$ to compute gradient of loss in backward pass
   \STATE Update the variational parameters with gradient of loss using stochastic gradient descent algorithm (e.g. Adam \citep{Kingma2015Adam})
   \UNTIL{change in ELBO $< \epsilon$}
\end{algorithmic}
\end{algorithm}

\section{Numerical Experiments}

\label{Sec:Numerical}
In this section, we present several numerical experiments to demonstrate the performance of our spike-and-slab independent Gaussian (SS-IG) Bayesian neural networks which we implement in PyTorch \citep{PyTorch2019NeurIPS}. Further, to evaluate the efficacy of the variational inference we benchmark our model on synthetic as well as real datasets. Our numerical investigation justifies the use of proposed choices of prior hyperparameters specifically layer-wise prior inclusion probabilities, which in turn substantiates the significance of our theoretical developments. \SJ{With fully Bayesian treatment, we are also able to quantify the uncertainties for the parameter estimates and variational inference helps to scale our model to large network architectures as well as complex datasets.}

We compare our sparse model with a node selection technique: horseshoe BNN (HS-BNN) \citep{Ghosh-JMLR-2018} and an edge selection technique: spike-and-slab BNN (SV-BNN) \citep{Bai-Guang-2020} in the second simulation study and UCI regression dataset examples.  We use optimal choices of prior parameters and fine tuning parameters provided by the authors of HS-BNN and SV-BNN in their respective models. Further we compare our model against dense variational BNN model (VBNN) \citep{blundell2015weight} in all of the experiments. Since it has no sparse structure, it serves as a baseline allowing to check whether sparsity compromises accuracy. In all the experiments, we fix $\sigma^2_0=1$ and  $\sigma^2_e=1$. For our model, the choices of  layer-wise $\lambda_l$ follow from Corollary \ref{post_contraction_main_theorem}: $\lambda_l=(1/k_{l+1})\exp(- C_l(k_l+1)\vartheta_l)$. We take $C_l$ values in the negative order of 10 such that prior inclusion probabilities do not fall below $10^{-50}$ otherwise $\lambda_l$ values close to 0 might prune away all the nodes from a layer (check appendix~\ref{AppendixB} for more discussion).  The remaining tuning parameter details such as learning rate, minibatch size, and initial parameter choice are provided in the appendix~\ref{AppendixB}. The prediction accuracy is calculated using variational Bayes posterior mean estimator with 30 Monte Carlo samples in testing phase.
\vspace{2mm}

\noindent {\bf Node sparsity estimates.} \SJ{In our experiments, we provide node sparsity estimates for each hidden layer separately. For all models, the node sparsity in a given hidden layer is the ratio of number of neurons with atleast one nonzero incoming edge over the original number of neurons present in that layer before training. The layer-wise node sparsity estimates give clear picture of the structural compactness of the trained model during test time. The structurally compact trained model has lower latency during inference stage.
}

\subsection{Simulation Study - I} 
We consider a two dimensional regression problem where the true response $y_0$ is generated by sampling $X$ from $U([-1,1]^2)$ and feeding it to a deep neural network with known parameters. We add a random Gaussian noise with $\sigma=5\%\sqrt{Var(y_0)}$ to $y_0$ to get noisy outputs $y$. We create the dataset using a shallow neural network consisting of 2 inputs, one hidden layer with 2 nodes and 1 output (2-2-1 network). We train our SS-IG model and VBNN model using a single hidden layer network with 20 neurons in the hidden layer and administer sigmoid activation. Each model is trained till convergence. We found that both models give competitive predictive performance while fitting the given data. In Figure~\ref{fig:simulation-1_20_nodes} we plot the magnitudes of the incoming weights into the hidden layer nodes using boxplots. Our model with the help of spike and slab prior is able to prune away redundant nodes not required for fitting the model. Since VBNN is densely connected, it shows all the nodes being active in its final model. From this experiment, it is clear that neural networks can be pruned leading to more compact models at inference stage without compromising the accuracy. We also performed the same experiment with a wider neural network consisting of 100 nodes in the single hidden layer and provide the results in the appendix~\ref{AppendixB}. There again we show that our model can easily recover very sparse solution with competitive predictive performance.

\begin{figure}[H]
\centering
\begin{subfigure}{.5\textwidth}
  \centering
  \includegraphics[width=\linewidth]{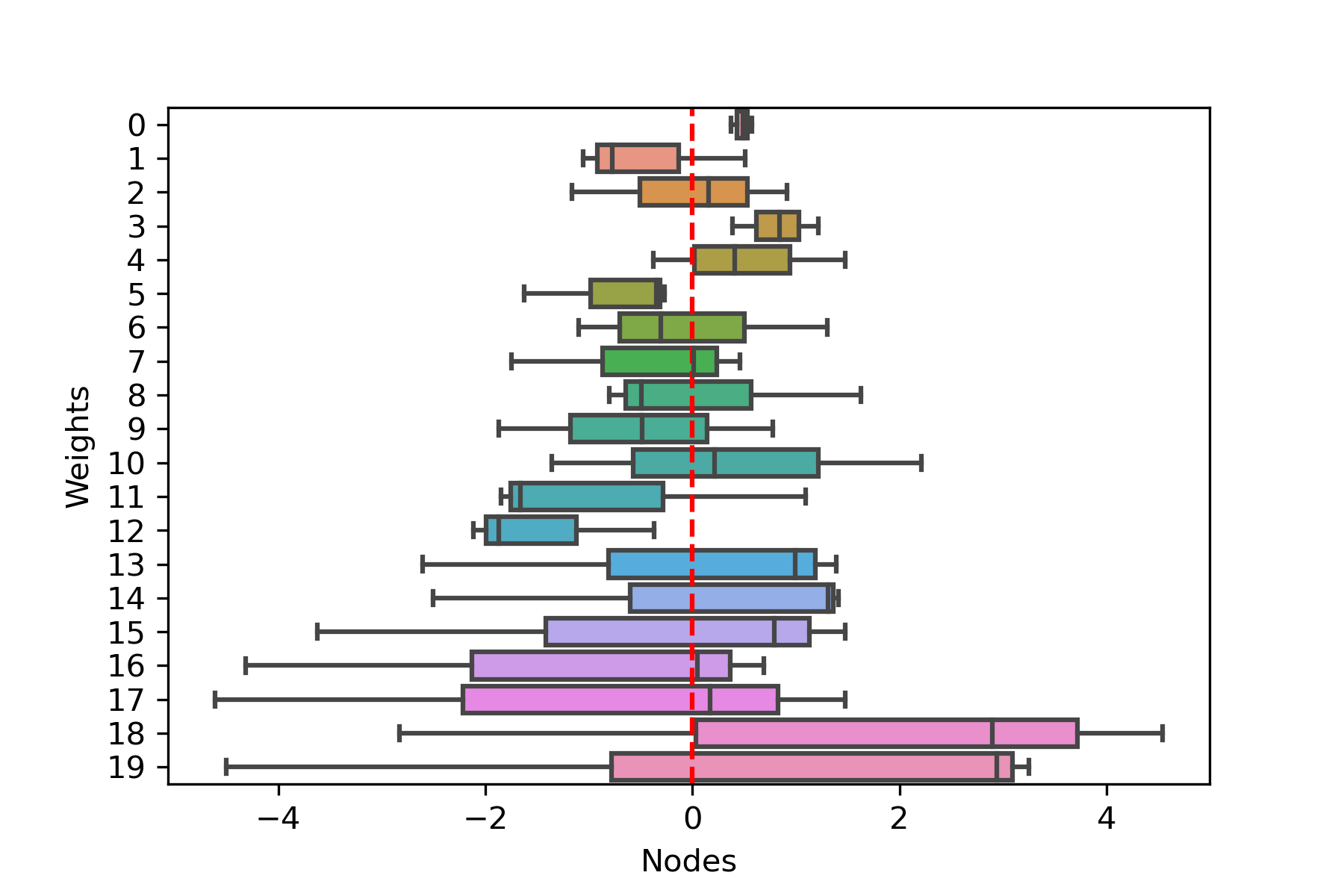}
  \caption{VBNN}
  \label{fig:accuracy1}
\end{subfigure}%
\begin{subfigure}{.5\textwidth}
  \centering
  \includegraphics[width=\linewidth]{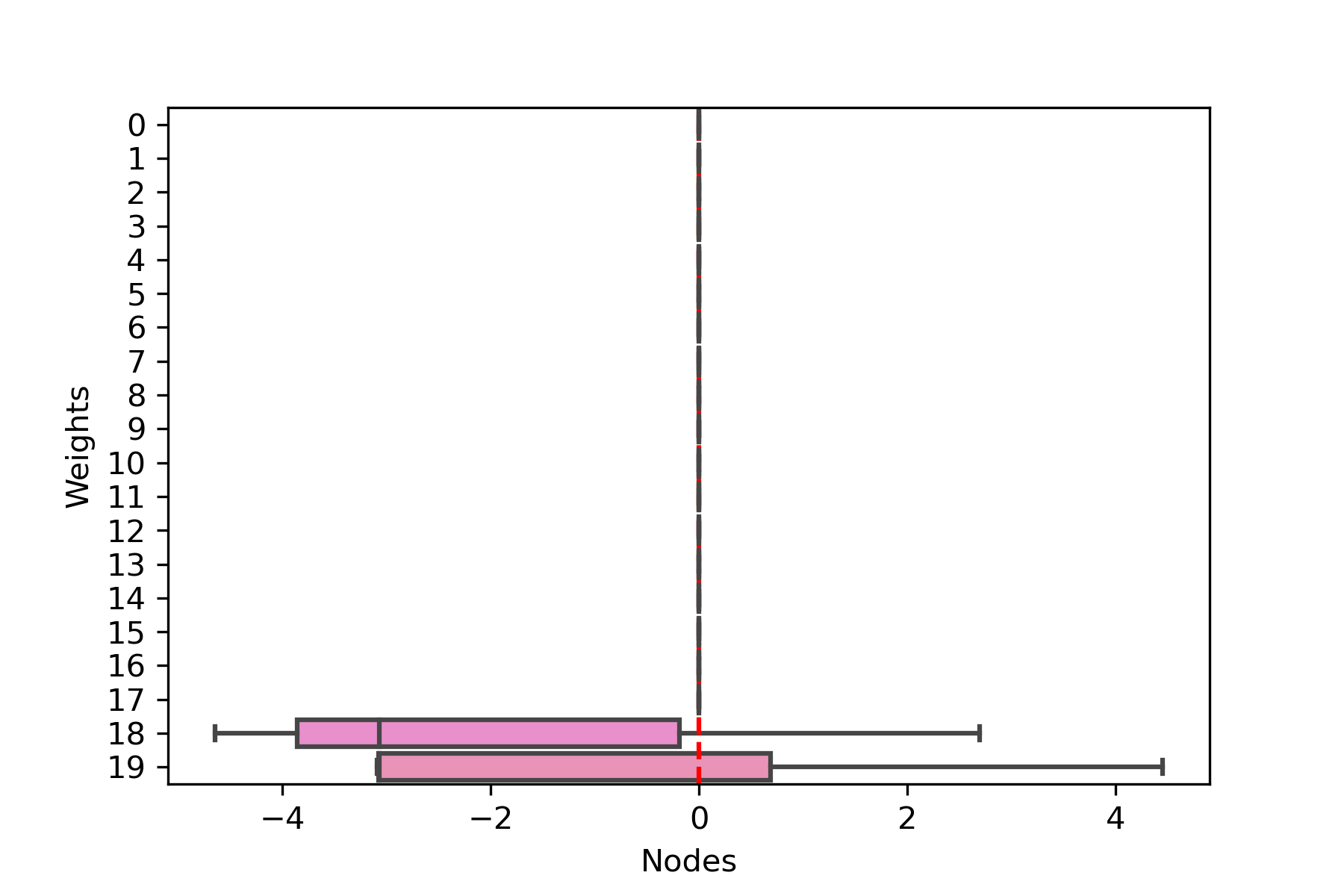}
  \caption{SS-IG}
  \label{fig:sparsity1}
\end{subfigure}
\caption{Node-wise weight magnitudes recovered by VBNN and proposed SS-IG model in the synthetic regression data generated using 2-2-1 network. The boxplots show the distribution of incoming weights into a given hidden layer node.}
\label{fig:simulation-1_20_nodes}
\end{figure}

\subsection{Simulation Study - II}
We consider a nonlinear regression example where we generate the data from the following model:
\begin{equation*}
    y = \frac{7x_2}{1+x_1^2} + \sin(x_3x_4) + 2x_5 + \varepsilon,
\end{equation*}
where $\varepsilon \sim N(0,1)$. Further all the covariates are i.i.d. $N(0,1)$ and independent of $\varepsilon$. We generated 3000 data entries to create the training data for the experiment. Additional 1000 observations were generated for testing. We modeled this data using 2-hidden layer neural network which consists of 20 neurons per hidden layer. Sigmoid activation function is administered for each model used for comparative analysis. \SJ{Table \ref{Node_selection_sim_results} provides the RMSEs on train and test dataset as well as layer-wise node sparsity estimates for SS-IG, SV-BNN, HS-BNN, and VBNN models.} Our model is extremely well at pruning redundant nodes which leads to the most compact model compared to the other sparse models: SV-BNN and HS-BNN. Moreover it exhibits lower root mean squared error (RMSE) values \SJ{on test data} among the sparse models while showing similar predictive performance compared to the densely connected VBNN. This experiment further underscores the major \SJ{benefit of our proposed approach} to generate very compact models which could reduce computational times and memory usage at inference stage.
\begin{table}[t] 
\caption{Performance of the proposed SS-IG, SV-BNN, HS-BNN, and VBNN models in simulation study II. Each model was trained for 10k epochs with learning rate $5\times10^{-3}$. Mean and S.D. of RMSE values and median sparsity estimates were calculated from last 1000 epochs (with jump of 10 giving us sample of 100). \SJ{The sparsity estimates are given as a tuple of 2 values representing layer-1 and layer-2 node sparsities.}}
\label{Node_selection_sim_results}
    \begin{center}
        \begin{tabular}{lccc}
            \toprule
            Model  & \multicolumn{1}{c}{Train RMSE} & \multicolumn{1}{c}{Test RMSE} & \multicolumn{1}{c}{Sparsity Estimate} \\
            \midrule
            SS-IG  & 1.2087$\pm$0.0490 & 1.1947$\pm$0.0587 & (0.35,0.05) \\
            SV-BNN & 1.2897$\pm$0.0323 & 1.2760$\pm$0.0363 & (0.45,0.35) \\
            HS-BNN & 1.2580$\pm$0.0305 & 1.2436$\pm$0.0394 & (1.00, 1.00)  \\
            VBNN & 1.1661$\pm$0.0335 & 1.1614$\pm$0.0349 & NA  \\
            \bottomrule
        \end{tabular}
    \end{center}
\end{table}

\subsection{UCI regression datasets} 
We apply our model to traditional UCI regression datasets \citep{Dua2019UCI} and contrast our performance against SV-BNN, HS-BNN, and VBNN models. We follow the protocol proposed by \citep{Hernandez-Lobato-2015} and train a single layer neural network with sigmoid activations. For smaller datasets - \textit{Concrete, Wine, Power Plant, Kin8nm}, we take 50 nodes in the hidden layer, while for larger datasets - \textit{Protein, Year}, we take 100 nodes in the hidden layer. We spilt data randomly while maintaining 9:1 train-test ratio in each case and for smaller datasets we repeat this technique 20 times. In \textit{Protein} data we perform 5 repetitions while in \textit{Year} data we use a single random split (more details in the appendix~\ref{AppendixB}). For the comparative analysis, we benchmark against SV-BNN, HS-BNN and VBNN. Moreover, VBNN test RMSEs serve as baseline in each dataset. \SJ{Table~\ref{UCI_regression_results} summarises our results including the sparsity estimate representing hidden layer-1 node sparsity (since there is only one hidden layer in the networks considered).} 

\begin{table}[b] 
\caption{Results on UCI regression datasets}
\label{UCI_regression_results}
    \begin{small}\addtolength{\tabcolsep}{-2pt}
    \begin{center}
        \begin{tabular}{lccccccc}
            \toprule
            \multicolumn{1}{c}{} & \multicolumn{1}{l}{} & \multicolumn{4}{c}{Test RMSE} & \multicolumn{2}{c}{Sparsity Estimate} \\ \cmidrule(lr){3-6}\cmidrule(lr){7-8}
             Dataset & $n(k_0)$  & \multicolumn{1}{c}{SS-IG} & \multicolumn{1}{c}{SV-BNN} & \multicolumn{1}{c}{HS-BNN} & \multicolumn{1}{c}{VBNN} & \multicolumn{1}{c}{SS-IG} & \multicolumn{1}{c}{SV-BNN} \\ 
            \midrule
            Concrete & 1030 (8) & 7.92$\pm$0.68 & 8.22$\pm$0.70 & 5.34$\pm$0.53 & 7.34$\pm$0.62 & 0.42$\pm$0.06 & 0.98$\pm$0.02 \\ 
            Wine & 1599 (11)& 0.66$\pm$0.05 & 0.65$\pm$0.05 & 0.66$\pm$0.05 & 0.64$\pm$0.05 & 0.18$\pm$0.05 & 0.87$\pm$0.04 \\ 
            Power Plant & 9568 (4) & 4.28$\pm$0.20 & 4.32$\pm$0.19 & 4.34$\pm$0.18 & 4.27$\pm$0.17 & 0.18$\pm$0.03 & 0.24$\pm$0.03 \\ 
            Kin8nm & 8192 (8) & 0.09$\pm$0.00 & 0.11$\pm$0.01 & 0.10$\pm$0.00 & 0.09$\pm$0.00 & 0.43$\pm$0.04 & 0.47$\pm$0.04 \\ 
            Protein & 45730 (9) & 4.85$\pm$0.05 & 4.93$\pm$0.06 & 4.59$\pm$0.02 & 4.78$\pm$0.06 & 0.81$\pm$0.03 & 0.93$\pm$0.03\\ 
            Year & 515345 (90) & 8.68$\pm$NA & 8.78$\pm$NA & 9.33$\pm$NA & 8.67$\pm$NA & 0.71$\pm$NA & 0.78$\pm$NA\\ 
            \bottomrule
        \end{tabular}
    \end{center}
    \end{small}
\end{table}

We achieve lower RMSEs compared to SV-BNN and HS-BNN in \textit{Power Plant, Kin8nm,} and \textit{Year} datasets and in other cases we achieve comparable RMSE values. In all the datasets, our predictive performance is close to the dense baseline of VBNN. \SJ{We provide node sparsity estimates in our SS-IG and SV-BNN models.} HS-BNN was not able to achieve sparse structure which is consistent with the results provided in the appendix of \citep{Ghosh-JMLR-2018}. In contrast to HS-BNN, our model sparsifies the model during training without requiring ad-hoc thresholding rule for pruning. Table \ref{UCI_regression_results} demonstrates that our model uniformly achieves better sparsity than SV-BNN. In particular, \textit{Concrete} and \textit{Wine} datasets show the high compressive ability of our model over SV-BNN leading to very compact models for inference.

\subsection{\SJ{Image classification datasets}}
Here, we benchmark the empirical performance of our proposed SS-IG method on network architectures and image classification datasets used in practice.
\vspace{2mm}

\noindent {\bf Baselines.} We compare our model against VBNN model which serves as a dense baseline to gauge the trade-off between predictive performance and sparsity. Moreover, to highlight the complementary behavior in memory and computational efficiency of node selection compared to edge selection achieved via Bayesian spike-and-slab prior framework, we compare our model against the edge selection model, SV-BNN.
\vspace{2mm}

\noindent {\bf Network architectures.} We consider 2 neural network model architectures: (i) multi-layer perceptron (MLP), and (ii) Lenet-Caffe. In MLP model, we take 2 hidden layers with 400 neurons in each layer. Output layer has 10 neurons since there are 10 classes in both datasets. Next, Lenet-Caffe model has 2 convolutional layers with 20 and 50 feature maps respectively with filter size $5\times5$ for both layers. In SS-IG model, for convolution layers, we prune output channels (similar to neurons in linear layers) using our spike-and-slab prior where each output channel is assigned an Bernoulli variable to collectively prune parameters incident on that channel. We apply $2\times2$ max pooling layer after each convolution layer. The flattened feature layer after second convolution layer has size $4*4*50=800$ serving as input to the fully connected block, where there are 2 hidden layers with 800 and 500 neurons respectively. The output layer has 10 neurons.
\vspace{2mm}

\noindent{\bf Datasets.} We apply each network architecture on 2 image classification datasets: (i) MNIST: dataset of 60,000 small square 28×28 pixel grayscale images of handwritten single digits between 0 and 9, and (ii) Fashion-MNIST: dataset of 60,000 small square 28×28 pixel grayscale images of items of 10 types of clothing. We preprocess the images in the MNIST data by dividing their pixel values by 126. In Fashion-MNIST data, we horizontally flip images at random with probability of 0.5.
\vspace{2mm}

\noindent {\bf Metrics.}
We quantify the predictive performance using the accuracy of the test data (MNIST and Fashion-MNIST). Besides the test accuracy, we evaluate our model against SV-BNN using the metrics that relate to the model compression and computational complexity. First the {\it compression ratio} is the ratio of number of nonzero weights in the compressed network versus the dense model and is an indicator of storage cost at test-time. Next, we present layer-wise node sparsities in MLP experiments to highlight the computational speedups at test-time. In Lenet-Caffe experiments, we provide the {\it floating point operations (FLOPs) ratio} which is the ratio of number of FLOPs required to predict y from x during test time in the compressed network versus its dense counterpart. We have detailed the FLOPs calculation in neural networks in Appendix~\ref{AppendixB}.
\vspace{2mm}

\noindent {\bf Nonlinear activation.} We use swish activations \citep{Elfwing-et-al-2018, Ramachandran-et-al-2017} instead of ReLUs in our proposed SS-IG model to avoid the dying neuron problem where ReLU neurons become inactive and only output 0 for any input \citep{Lu-2020-DyingReLU}. Specifically in large scale datasets turning off a node with more than 100 incoming edges adversely impacts the training process of ReLU networks. Smoother activation functions such as sigmoid, tanh, swish etc help alleviate this problem. We choose swish since it has the best performance. For VBNN and SV-BNN, we use ReLU activations as recommended by their authors.

\subsubsection*{MLP Experiments}
The results of MLP network experiments on MNIST and Fashion-MNIST are presented in Figure~\ref{fig:MLP-Experiments}. We provide test data accuracy, model compression ratio, and layer-wise node sparsities in each experiment. 

\begin{figure}[htp]
\centering
\begin{subfigure}[b]{0.375\textwidth}
    \centering
    \includegraphics[width=\textwidth]{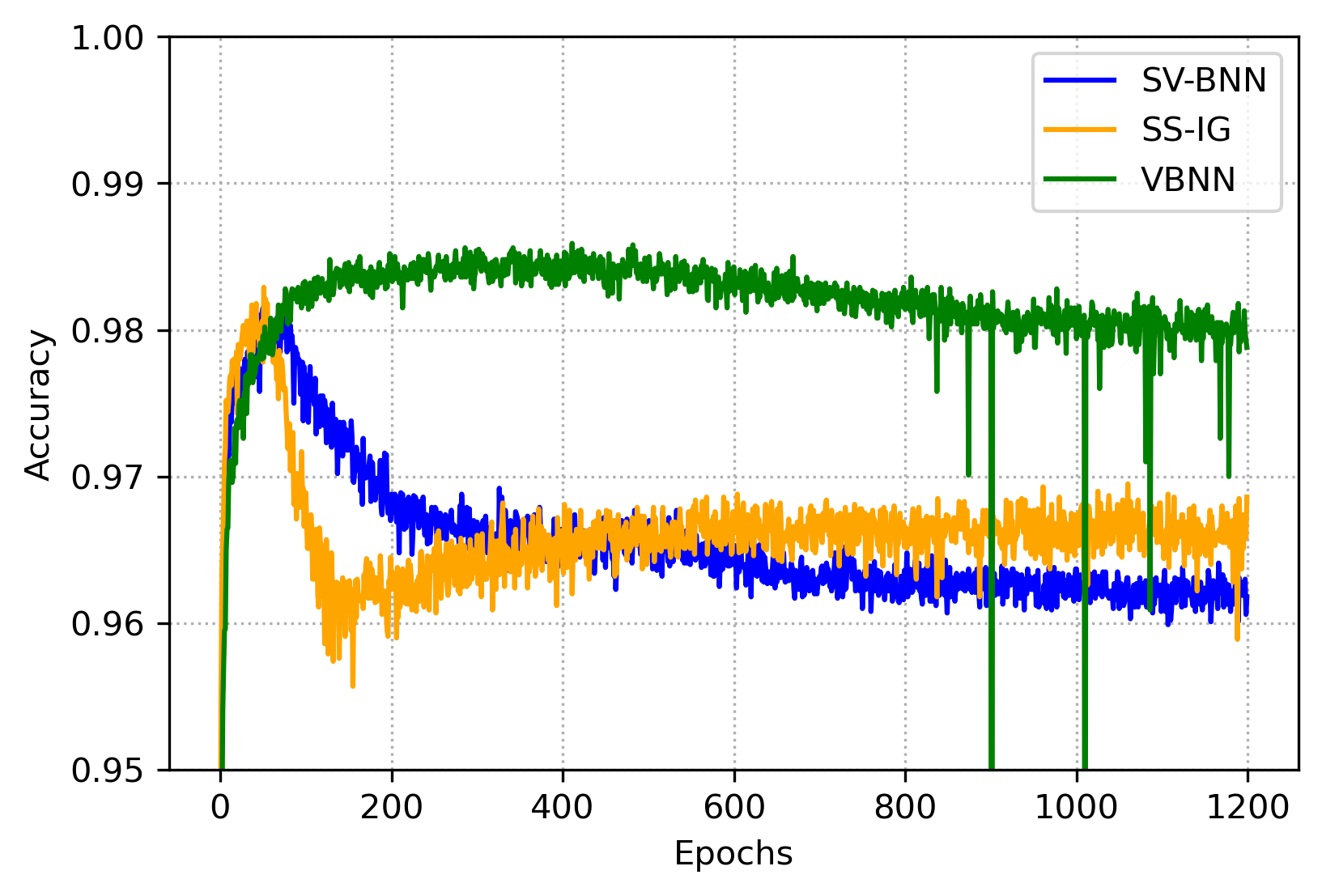}
    \caption{{\small Test accuracy}}    
    \label{fig:MLP-MNIST-Test-Acc}
\end{subfigure}
\begin{subfigure}[b]{0.375\textwidth}
    \centering
    \includegraphics[width=\textwidth]{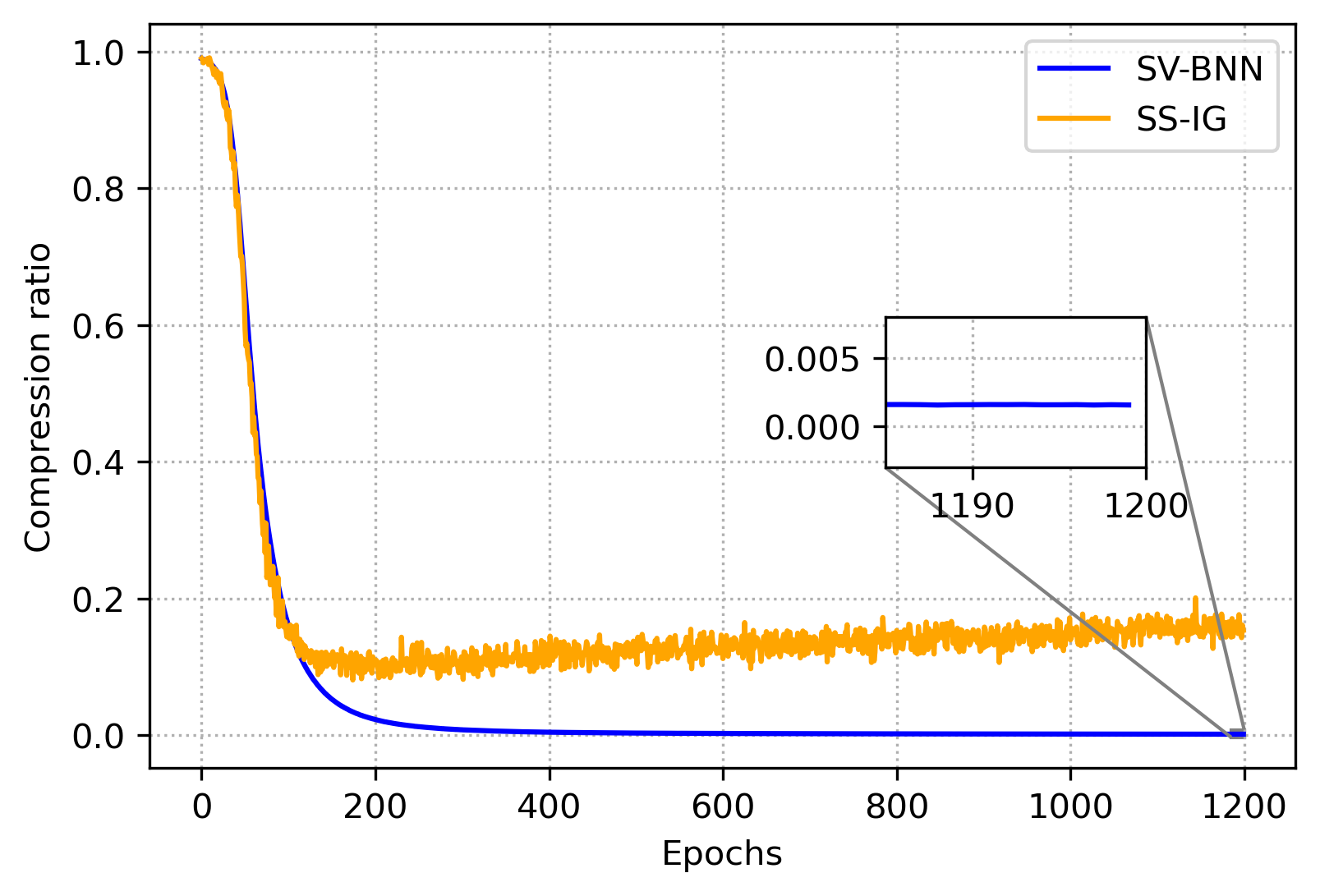}
    \caption{{\small Compression ratio}}    
    \label{fig:MLP-MNIST-Compression-Ratio}
\end{subfigure}
\vskip\baselineskip
\begin{subfigure}[b]{0.375\textwidth}
    \centering
    \includegraphics[width=\textwidth]{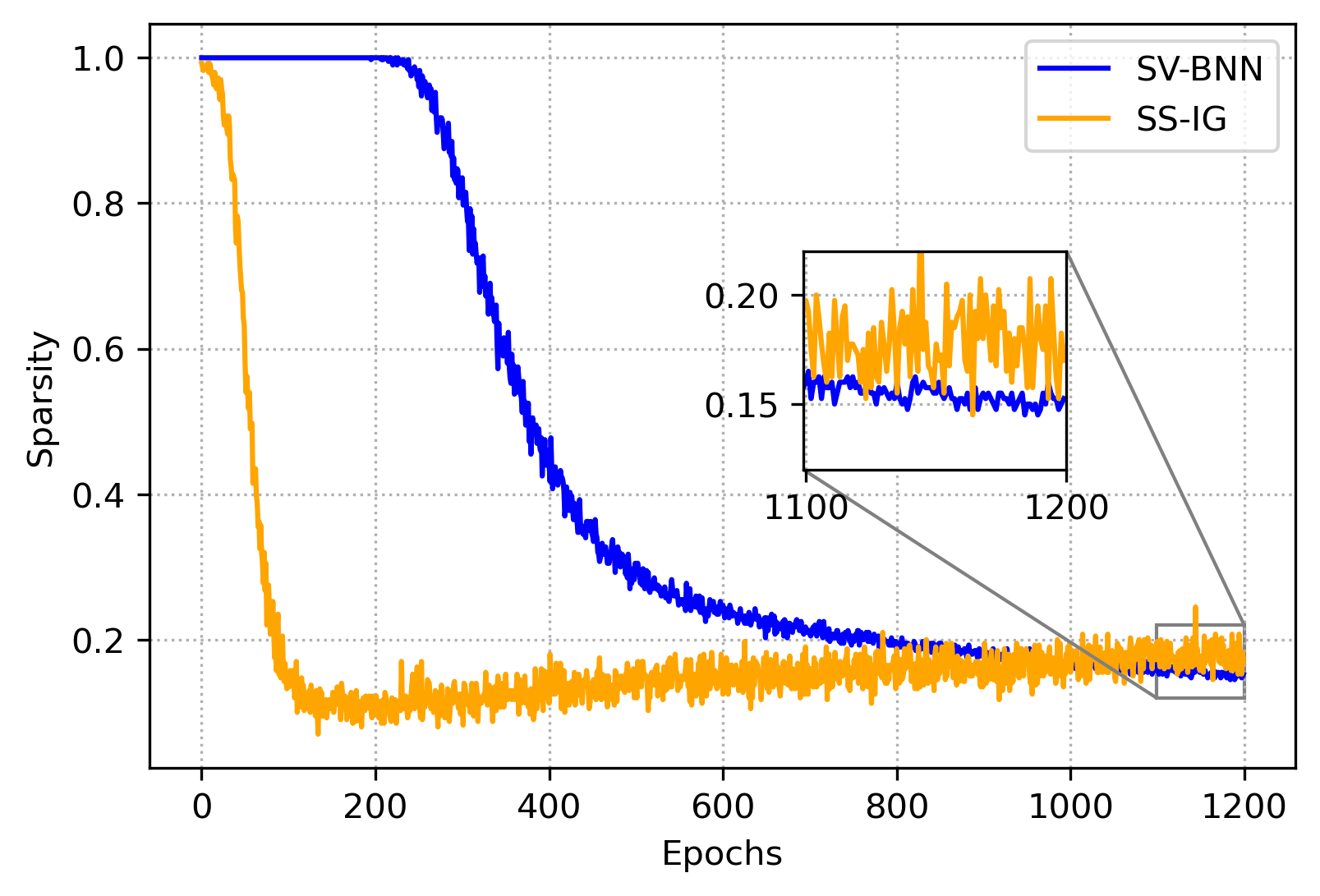}
    \caption{{\small Layer-1 node sparsity}}    
    \label{fig:MLP-MNIST-layer1-sparsity}
\end{subfigure}
\begin{subfigure}[b]{0.375\textwidth}
    \centering
    \includegraphics[width=\textwidth]{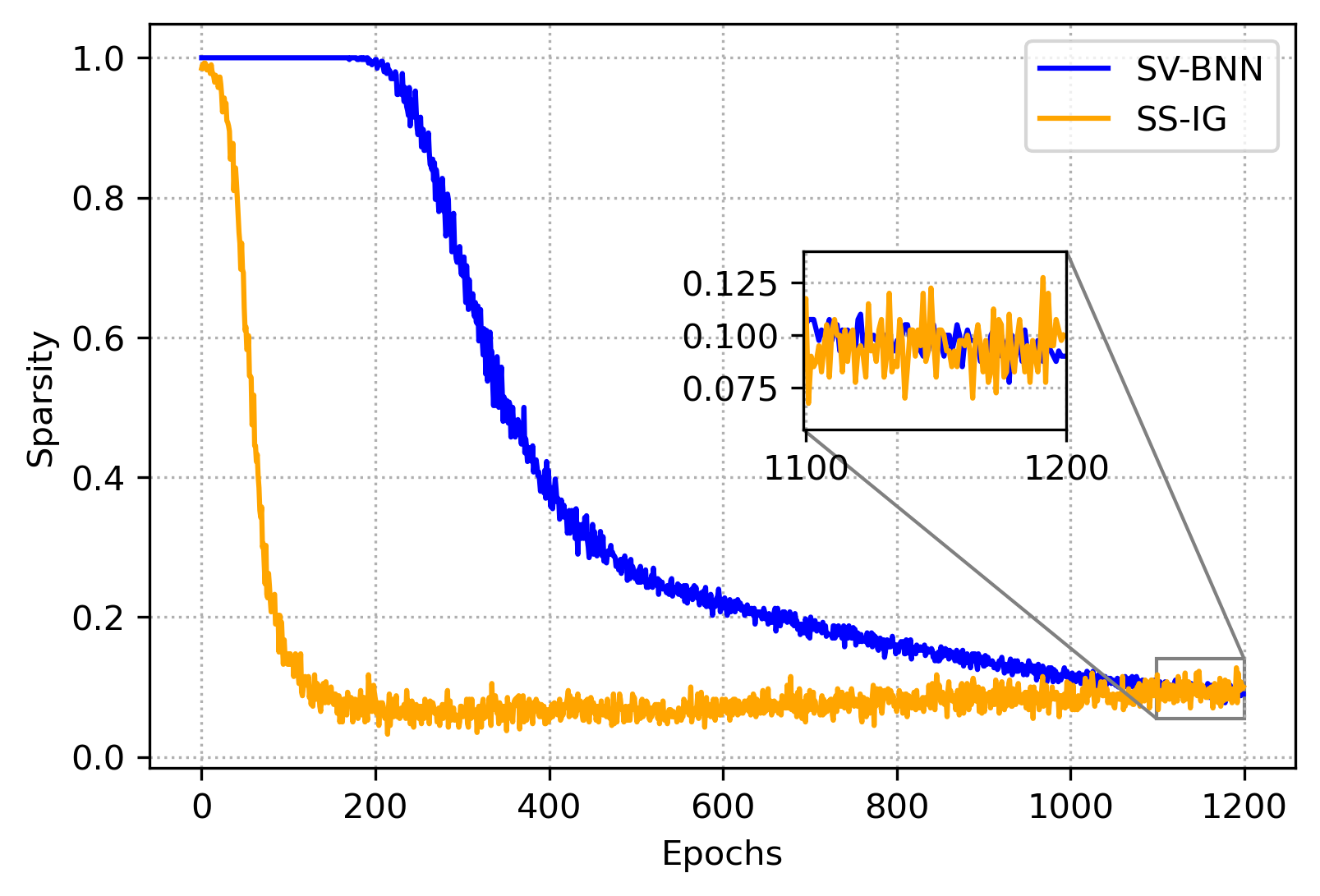}
    \caption{{\small Layer-2 node sparsity}}    
    \label{fig:MLP-MNIST-layer2-sparsity}
\end{subfigure}
\vskip\baselineskip
\begin{subfigure}[b]{0.375\textwidth}
    \centering
    \includegraphics[width=\textwidth]{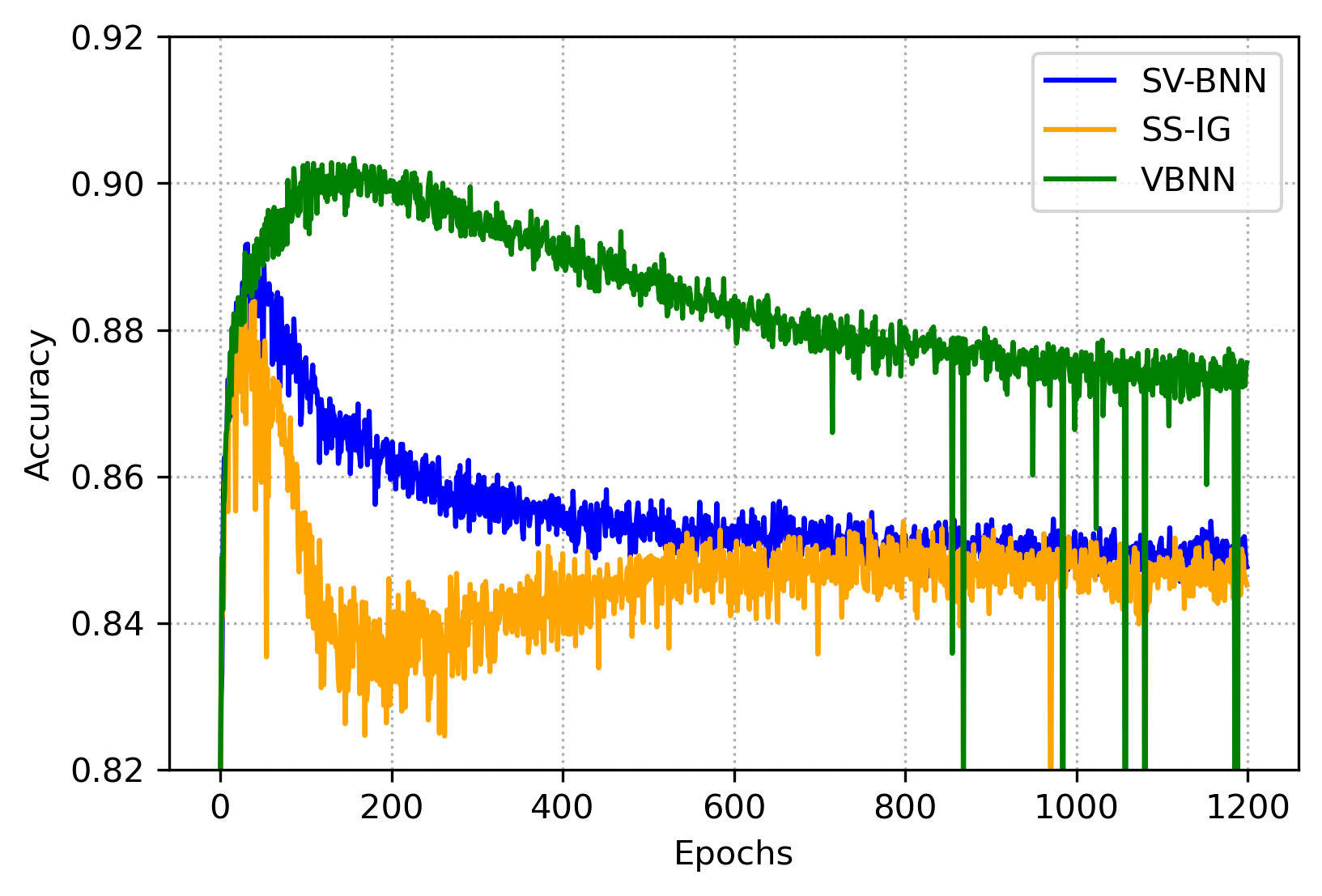}
    \caption{{\small Test accuracy}}    
    \label{fig:MLP-Fashion-MNIST-Test-Acc}
\end{subfigure}
\begin{subfigure}[b]{0.375\textwidth}
    \centering
    \includegraphics[width=\textwidth]{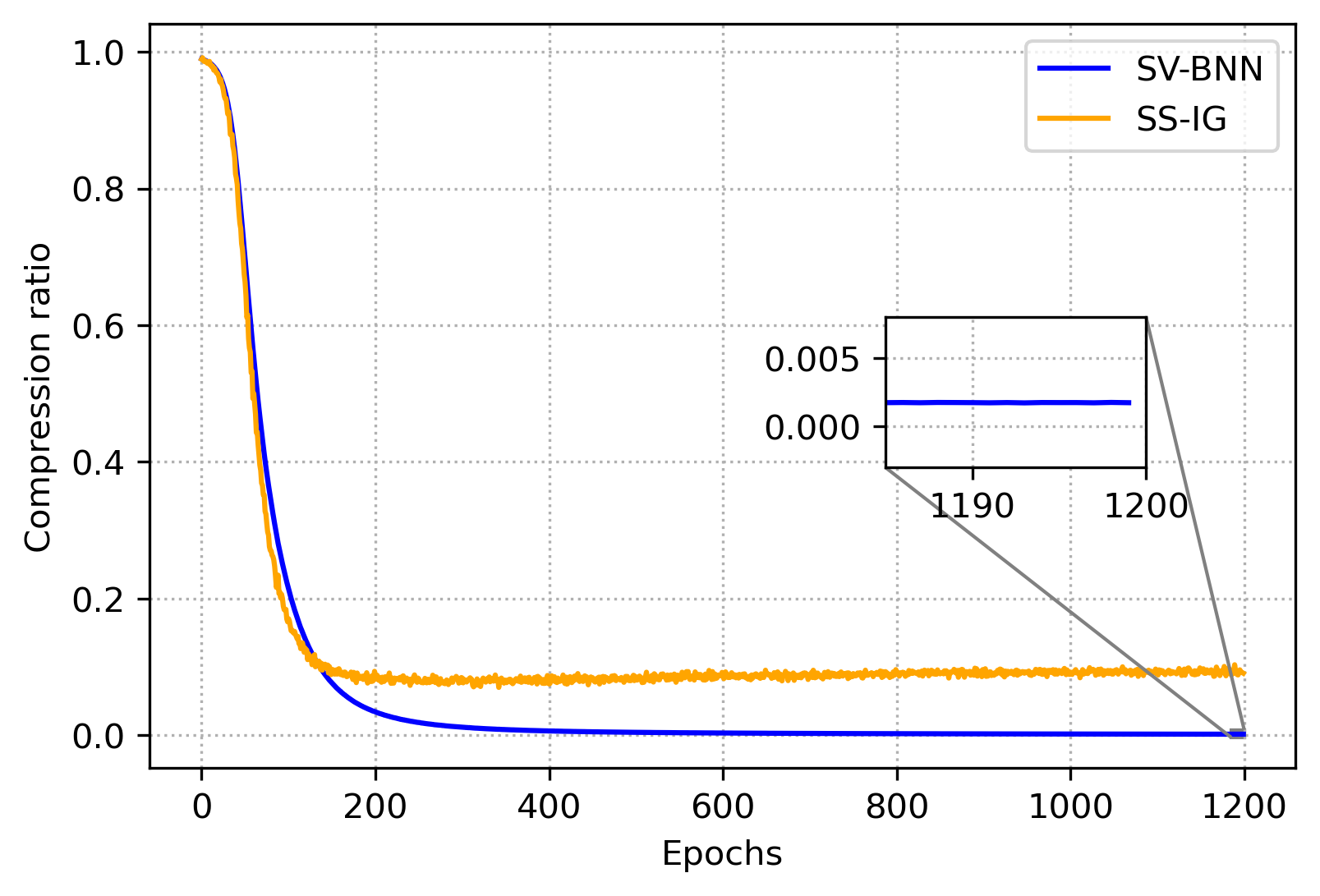}
    \caption{{\small Compression ratio}}    
    \label{fig:MLP-Fashion-MNIST-Compression-Ratio}
\end{subfigure}
\vskip\baselineskip
\begin{subfigure}[b]{0.375\textwidth}
    \centering
    \includegraphics[width=\textwidth]{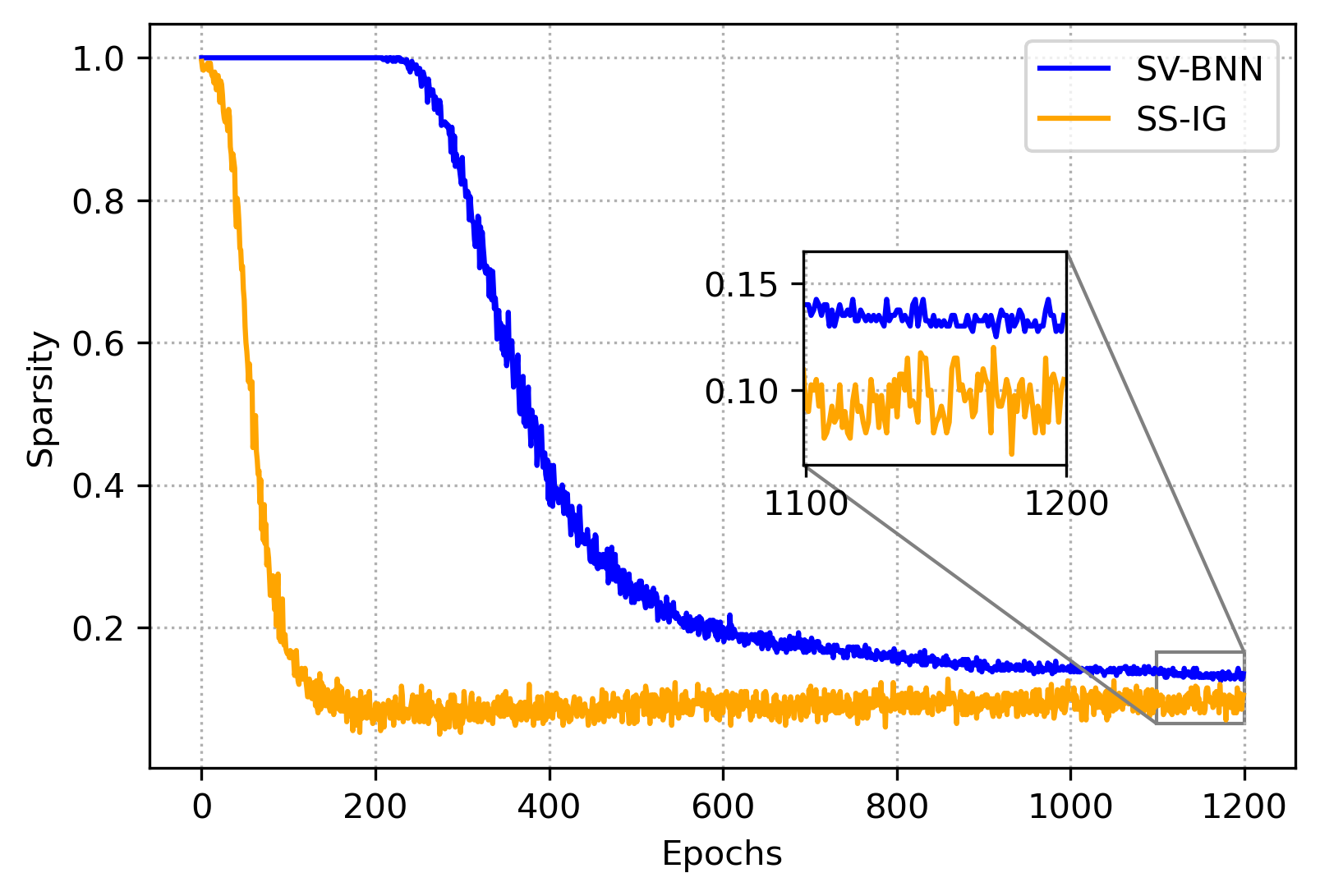}
    \caption{{\small Layer-1 node sparsity}}    
    \label{fig:MLP-Fashion-MNIST-layer1-sparsity}
\end{subfigure}
\begin{subfigure}[b]{0.375\textwidth}
    \centering
    \includegraphics[width=\textwidth]{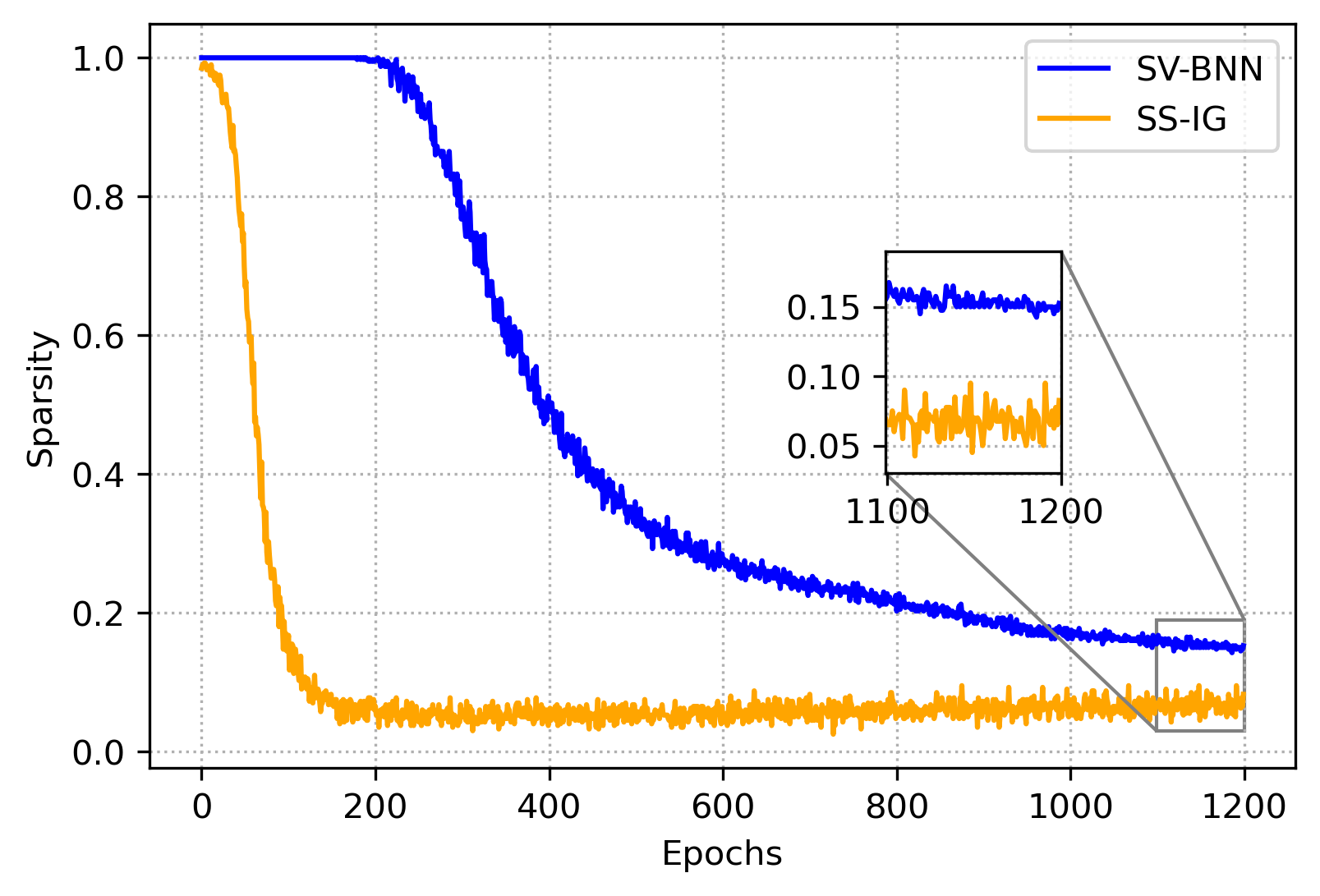}
    \caption{{\small Layer-2 node sparsity}}    
    \label{fig:MLP-Fashion-MNIST-layer2-sparsity}
\end{subfigure}
\caption{MLP architecture experiment results. First two rows (a)-(d) represent the MLP on MNIST experiment results. Bottom two rows (e)-(h) represent the MLP on Fashion-MNIST experiment results.}
\label{fig:MLP-Experiments}
\end{figure}

In MLP-MNIST experiment (Figure~\ref{fig:MLP-MNIST-Test-Acc} - \ref{fig:MLP-MNIST-layer2-sparsity}), we observe that VBNN and SS-IG models only require $\sim 400$ epochs to achieve stable predictive performance (Figure~\ref{fig:MLP-MNIST-Test-Acc}). In contrast, SV-BNN slightly degrades after 600 epochs and takes longer to achieve convergence in layer-wise node sparsities compared to our approach (Figure~\ref{fig:MLP-MNIST-layer1-sparsity} and \ref{fig:MLP-MNIST-layer2-sparsity}). Moreover, for SS-IG model, we observe that as we start to learn sparse network our model shows peak test accuracy when most of the nodes are present in the model and it starts to drop as we learn sparser network and ultimately the test accuracy stabilizes when the node sparsities converge. Furthermore, SV-BNN has better model compression ratio (Figure~\ref{fig:MLP-MNIST-Compression-Ratio}) in this experiment at the expense of lower predictive performance. Our method is prunes off $\sim 80\%$ of first hidden layer nodes and $\sim 90\%$ of second hidden layer nodes at the expense of $\sim 2\%$ accuracy loss due to sparsification compared to the dense VBNN.

In MLP-Fashion-MNIST experiment (Figure~\ref{fig:MLP-Fashion-MNIST-Test-Acc} - \ref{fig:MLP-Fashion-MNIST-layer2-sparsity}), we observe that VBNN model takes $\sim 200$ epochs and our model takes $\sim 600$ epochs for convergence. SV-BNN model takes longer to achieve convergence in layer-wise node sparsities (Figure~\ref{fig:MLP-Fashion-MNIST-layer1-sparsity} and \ref{fig:MLP-Fashion-MNIST-layer2-sparsity}). We also observe the complementary behavior of our model and SV-BNN in memory and computational efficiency where our model achieves better layer-wise node sparsities and  SV-BNN has better model compression ratio (Figure~\ref{fig:MLP-Fashion-MNIST-Compression-Ratio}) with both models having similar predictive performance (Figure~\ref{fig:MLP-Fashion-MNIST-Test-Acc}). Furthermore, our method prunes off $\sim 90\%$ of first hidden layer nodes and $\sim 92\%$ of second hidden layer nodes at the expense of $\sim 3\%$ accuracy loss due to sparsification compared 
to the densely connected VBNN.

\subsubsection*{Lenet-Caffe Experiments}
The results of more complex Lenet-Caffe network experiments on MNIST and Fashion-MNIST are presented in Figure~\ref{fig:Lenet-Caffe-Experiments}. We provide test data accuracy, model compression ratio, and FLOPs ratio in each experiment over 1200 epochs. Here, FLOPs ratio serve as a collective indicator of layer-wise node sparsities since FLOPs are directly related to how many neurons or channels are remaining in linear or convolution layers respectively.

\begin{figure}[h]
\centering
\begin{subfigure}[b]{0.325\textwidth}
    \centering
    \includegraphics[width=\textwidth]{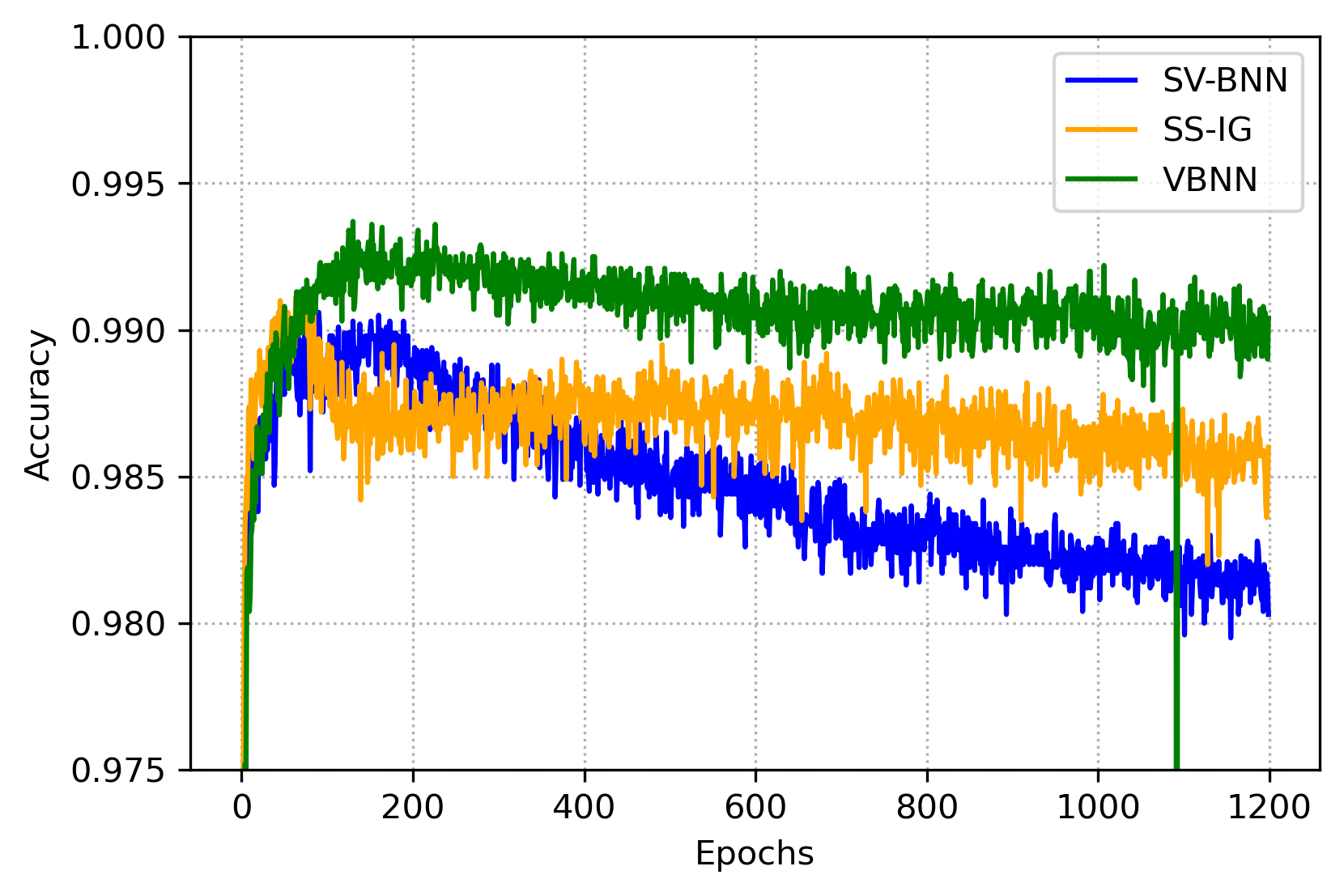}
    \caption{{\small Test accuracy}}    
    \label{fig:Lenet-Caffe-MNIST-Test-Acc}
\end{subfigure}
\begin{subfigure}[b]{0.325\textwidth}
    \centering
    \includegraphics[width=\textwidth]{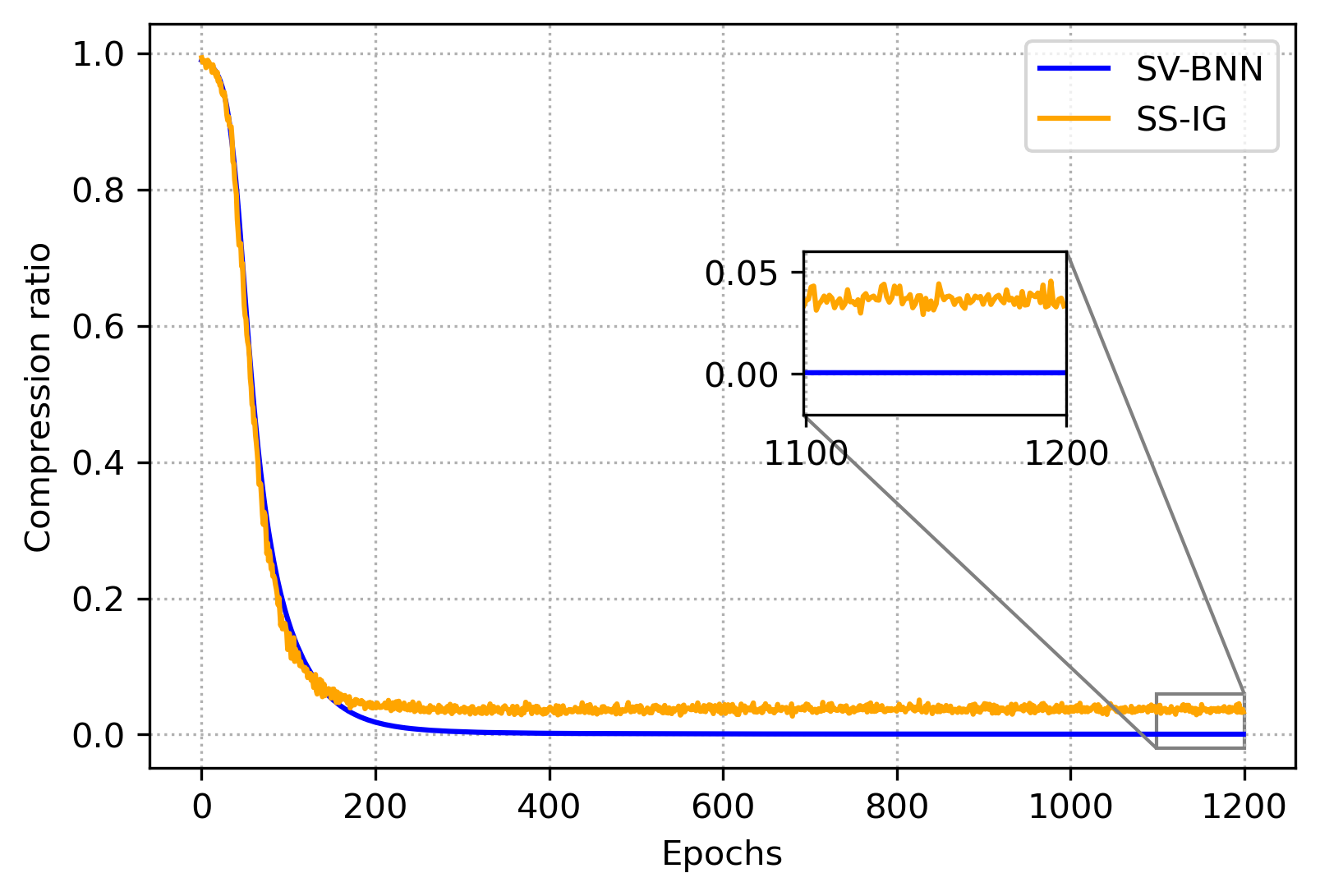}
    \caption{{\small Compression ratio}}    
    \label{fig:Lenet-Caffe-MNIST-Compression-Ratio}
\end{subfigure}
\begin{subfigure}[b]{0.325\textwidth}
    \centering
    \includegraphics[width=\textwidth]{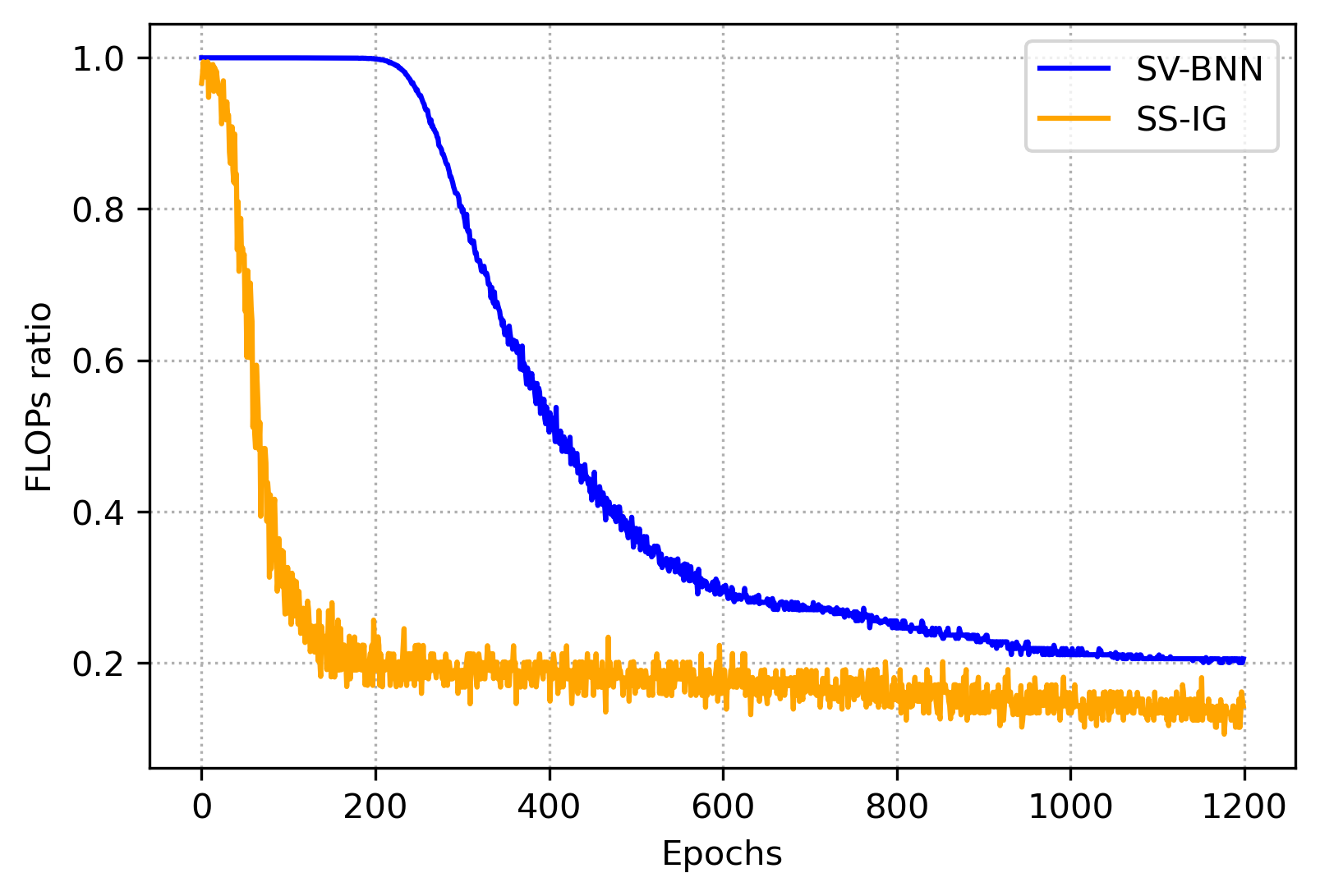}
    \caption{{\small FLOPs ratio}}    
    \label{fig:Lenet-Caffe-MNIST-Flops-Ratio}
\end{subfigure}
\vskip\baselineskip
\begin{subfigure}[b]{0.325\textwidth}
    \centering
    \includegraphics[width=\textwidth]{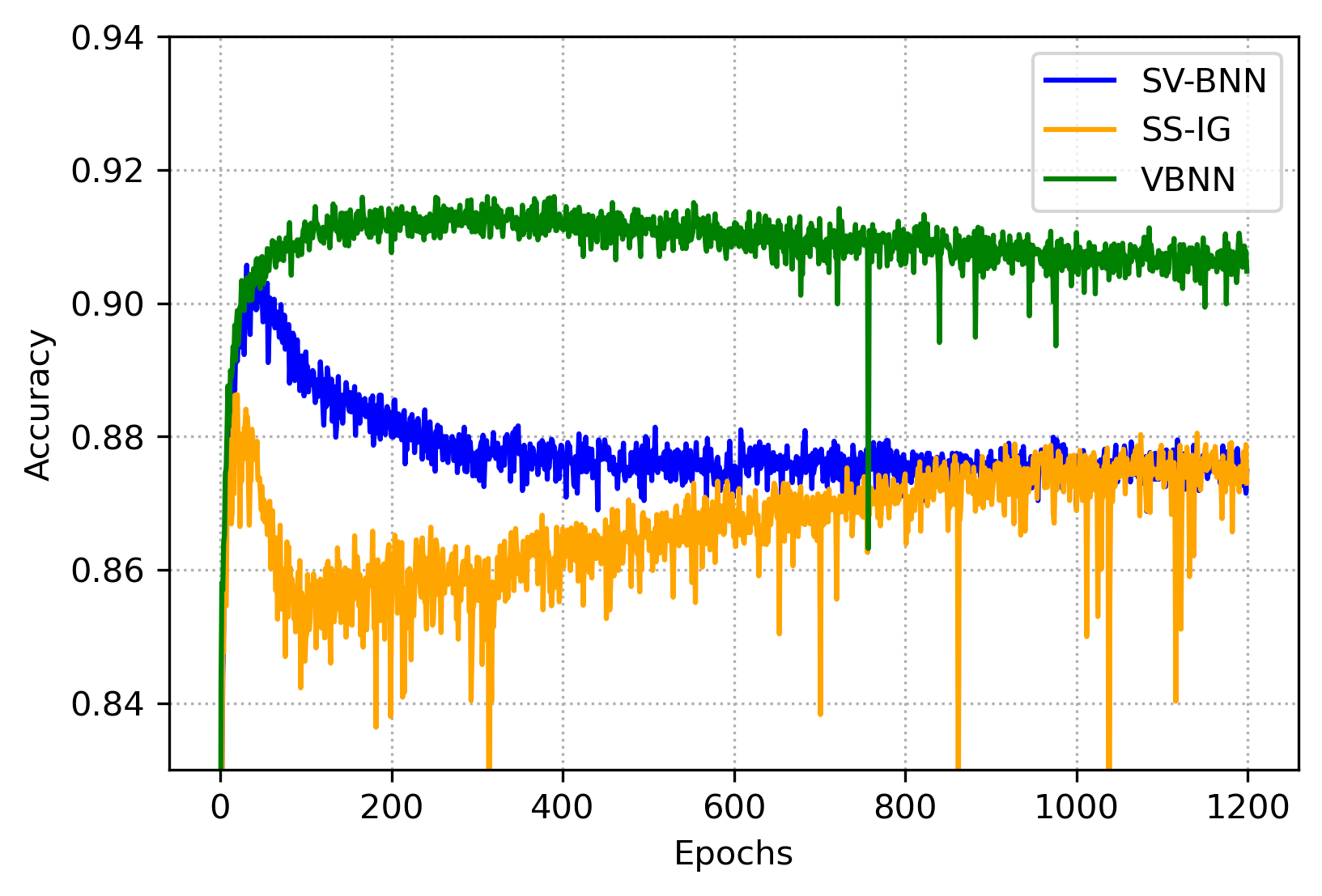}
    \caption{{\small Test accuracy}}    
    \label{fig:Lenet-Caffe-Fashion-MNIST-Test-Acc}
\end{subfigure}
\begin{subfigure}[b]{0.325\textwidth}
    \centering
    \includegraphics[width=\textwidth]{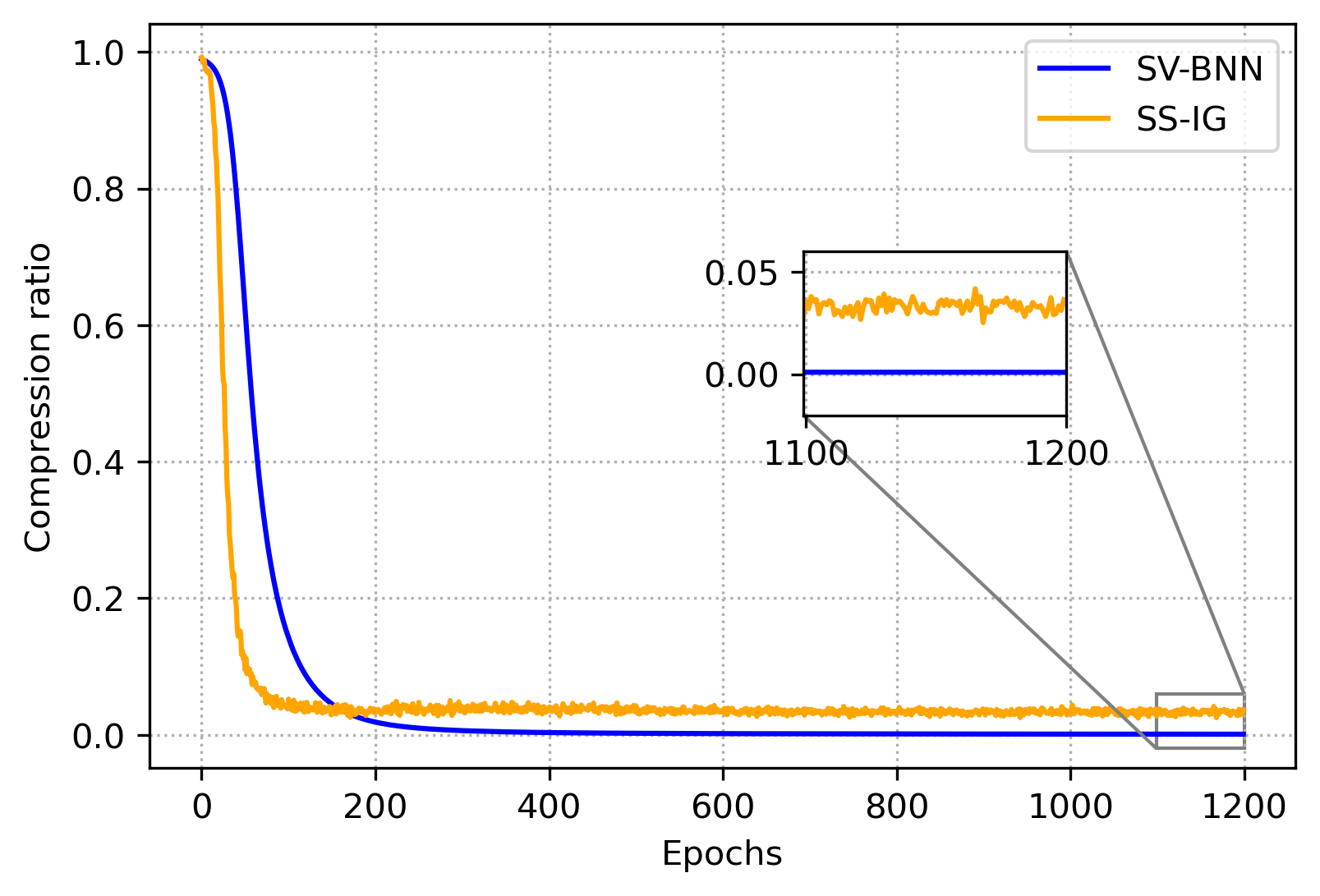}
    \caption{{\small Compression ratio}}    
    \label{fig:Lenet-Caffe-Fashion-MNIST-Compression-Ratio}
\end{subfigure}
\begin{subfigure}[b]{0.325\textwidth}
    \centering
    \includegraphics[width=\textwidth]{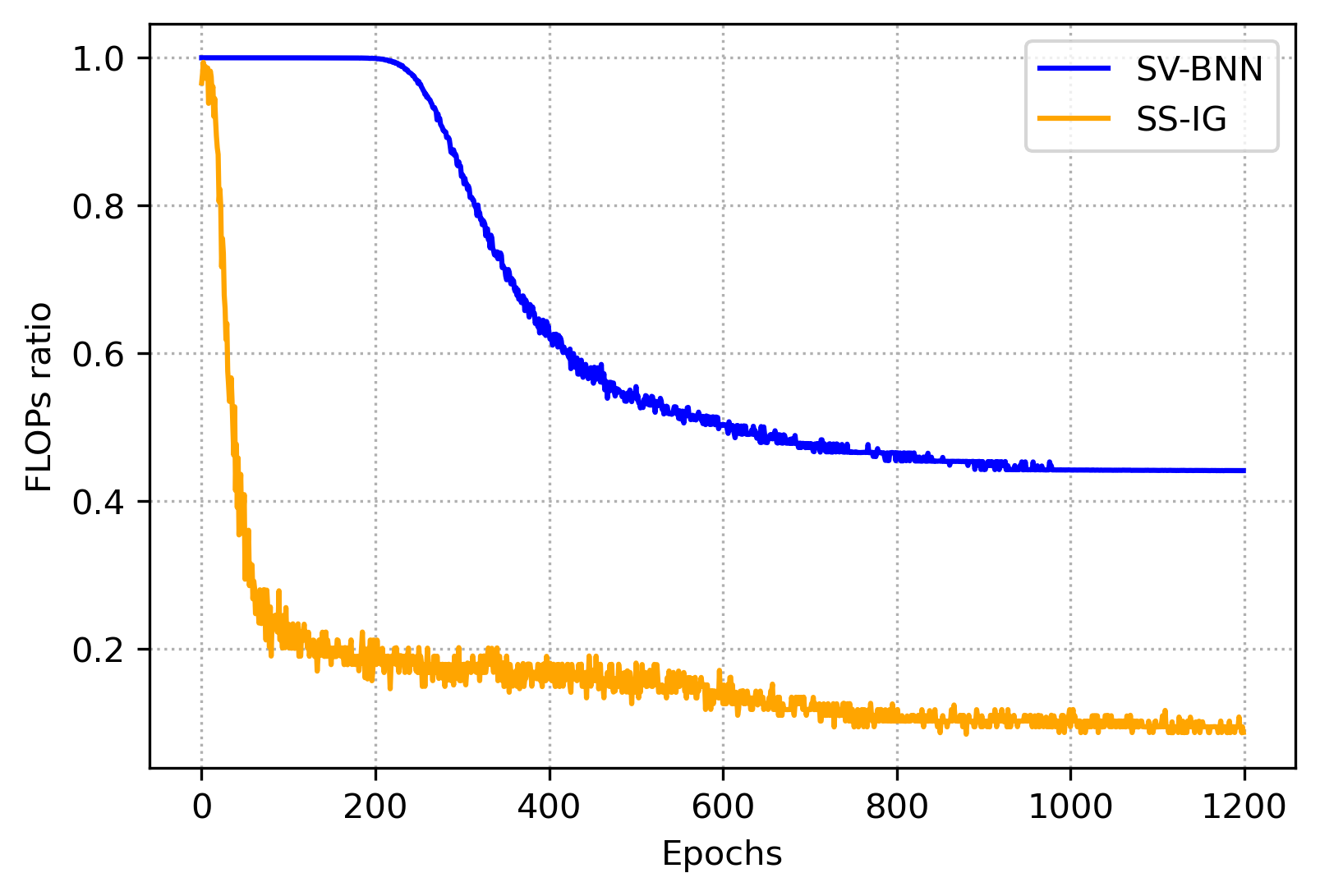}
    \caption{{\small FLOPs ratio}}    
    \label{fig:Lenet-Caffe-Fashion-MNIST-Flops-Ratio}
\end{subfigure}
\caption{Lenet-Caffe architecture experiment results. Top row (a)-(c) represent the Lenet-Caffe on MNIST experiment results. Bottom row (d)-(f) represent the Lenet-Caffe on Fashion-MNIST experiment results.}
\label{fig:Lenet-Caffe-Experiments}
\end{figure}

In Lenet-Caffe-MNIST experiment (Figure~\ref{fig:Lenet-Caffe-MNIST-Test-Acc} - \ref{fig:Lenet-Caffe-MNIST-Flops-Ratio}), we observe that our model has better predictive accuracy than SV-BNN (Figure~\ref{fig:Lenet-Caffe-MNIST-Test-Acc}). Moreover, we achieve $~10\%$ more reduction in Flops (Figure~\ref{fig:Lenet-Caffe-MNIST-Flops-Ratio})) compared to SV-BNN whereas SV-BNN achieves better model compression than our approach (Figure~\ref{fig:Lenet-Caffe-MNIST-Compression-Ratio}). Lastly, our method is able to reduce the FLOPs of the model during inference at test-time by $~90\%$ at the expense of $\sim 0.5\%$ accuracy loss due to sparsification compared to the densely connected VBNN.

In Lenet-Caffe-Fashion-MNIST experiment (Figure~\ref{fig:Lenet-Caffe-Fashion-MNIST-Test-Acc} - \ref{fig:Lenet-Caffe-Fashion-MNIST-Flops-Ratio}), we observe that both SS-IG and SV-BNN have similar test accuracies at convergence (Figure~\ref{fig:Lenet-Caffe-Fashion-MNIST-Test-Acc}). However, our model has $~40\%$ less FLOPs (Figure~\ref{fig:Lenet-Caffe-Fashion-MNIST-Flops-Ratio}) during inference stage compared to SV-BNN which again achieves better model compression (Figure~\ref{fig:Lenet-Caffe-Fashion-MNIST-Compression-Ratio}). This highlights the complementary nature of our method of node selection that leads to a structurally sparse model with significantly lower (almost 5 times) FLOPs compared to weight pruning approach, SV-BNN, which induces unstructured sparsity in the pruned network leading to significant model compression with low storage cost. Lastly, our method leads to a sparse model with only $~8\%$ of the FLOPs as compared to VBNN at the expense of $\sim 3\%$ accuracy loss underscoring the trade-off between predictive accuracy and sparsity. 

\section{Conclusion and Discussion} 

\label{Sec:Discussion}
Deep learning has been harnessed by big industrial corporations in recent years to improve their products. However, as deep learning models are pushed into smaller and smaller embedded devices, such as smart cameras recognizing visitors at your front door, designing resource efficient neural networks for real-time, on-device inference is of practical importance. Our work addresses this computational bottleneck by compressing neural networks by inducing structured sparsity during training. The estimation of posterior allows us to quantify uncertainties around the parameter estimates which can be vital in medical diagnostics. 

In this paper, we have proposed sparse deep Bayesian neural networks using spike-and-slab priors for optimal node recovery. Our method incorporates layer-wise prior inclusion probabilities and recovers underlying structurally sparse model effectively. Our theoretical developments highlight the conditions required for the posterior consistency of the variational posterior to hold. With layer-wise characterisation of prior inclusion probabilities we show that the proposed sparse BNN approximations can achieve predictive performance comparable to dense networks. Our results relax the constraints of equal number of nodes and uniform bounds on weights thereby achieving optimal node recovery on more generic neural network structure. The closeness of a true function to the topology induced by layer-wise node distribution depends on the degree of smoothness of the true underlying function. In this work, this has not been studied in depth and forms a promising direction for future work.

Note, in contrast to previous works, our work assumes a spike-and-slab prior on the entire vector of incoming weights and bias onto a node. \SJ{We underscore the fact that node selection has complementary behavior with edge selection approaches as established by our empirical experiments. Node selection offers significant computational speedup whereas edge selection achieves significant model compression at test-time. The demonstration of the efficacy of our node selection approach opens the avenue for exploration of sophisticated group sparsity priors for node selection. Our detailed experiments show the subnetwork selection ability of our method which underscores the notion that deep neural networks can be heavily pruned without losing predictive performance. The experiment with convolution neural network (Lenet-Caffe) highlights the generalizability of our approach from mere multi layer perceptron to complex deep learning models.} Although our method performs model reduction while maintaining predictive power, some further improvements may be obtained by choosing the number of layers in a data-driven fashion and can be a part of future work. 


\appendix

\newpage

\section{Proofs of theoretical results} 

\label{AppendixA}
\subsection{Definitions}

\begin{definition}[Sieve] \label{def:sieve}
Consider a sequence of function classes $\mathcal{F}_1\subseteq \mathcal{F}_2 \subseteq \dots \subseteq \mathcal{F}_n \subseteq \mathcal{F}_{n+1} \subseteq \dots \subseteq \mathcal{F}$, where $\forall f \in \mathcal{F}, \hspace{1mm} \exists \hspace{1mm} f_n \in \mathcal{F}_n$ s.t. $d(f,f_n)\to 0$ as $n \to \infty$ where $d(.,.)$ is some pseudo-metric on $\mathcal{F}$. More precisely, $\cup_{n=1}^\infty \mathcal{F}_n$ is dense in $\mathcal{F}$. $\mathcal{F}_n$ is called a sieve space of $\mathcal{F}$ with respect to the pseudo-metric $d(.,.)$, and the sequence $\{f_n\}$ is called a sieve \citep{Grenander-1981}. 
\end{definition}

\begin{definition}[Covering number] \label{def:covering-no}
Let $(V,||.||)$ be a normed space,  and $\mathcal{F} \subset V$. $\{V_1,\cdots, V_N \}$ is an $\varepsilon-$covering of $\mathcal{F}$ if $\mathcal{F} \subset \cup_{i=1}^N B(V_i,\varepsilon)$, or equivalently, $\forall$ $\varrho \in \mathcal{F}$, $\exists$ $i$ such that $||\varrho-V_i||<\varepsilon$. The covering number of $\mathcal{F}$ denoted by $N(\varepsilon,\mathcal{F},||.||)=\min\{n: \exists\: \varepsilon-\text{ covering over }\mathcal{F}\text{ of size } n \}$ \citep{Pollard-1991}.
\end{definition}

\subsection{General Lemmas}

\begin{lemma}
    \label{lem:mod-kl}
    Let $g_1$ and $g_2$ be any two density functions.
 Then
$$E_{g_1}(\left|\log (g_1/g_2)\right|) \leq d_{\rm KL}(g_1,g_2)+2/e$$  
\end{lemma}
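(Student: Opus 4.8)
The plan is to isolate the absolute value by splitting $\log(g_1/g_2)$ into its signed value plus twice its negative part. Writing $t = \log(g_1/g_2)$ and using the identity $|t| = t + 2\max(-t,0)$, I would take expectations under $g_1$ to obtain
\[
E_{g_1}\big(|\log(g_1/g_2)|\big) = d_{\rm KL}(g_1,g_2) + 2\,E_{g_1}\big(\max(\log(g_2/g_1),0)\big),
\]
since $E_{g_1}(\log(g_1/g_2))$ is by definition $d_{\rm KL}(g_1,g_2)$. Thus the lemma reduces to the single bound $E_{g_1}\big(\max(\log(g_2/g_1),0)\big) \le 1/e$.

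To establish this, I would first restrict attention to the region $\{g_2 > g_1\}$, on which $\max(\log(g_2/g_1),0) = \log(g_2/g_1)$, the integrand vanishing elsewhere. On this region I would apply the elementary inequality $\log x \le x/e$ (valid for all $x>0$, with equality at $x=e$, proved by minimizing $x/e - \log x$) at $x = g_2/g_1$, which gives the pointwise bound $g_1 \log(g_2/g_1) \le g_1 \cdot \tfrac{g_2}{e\, g_1} = g_2/e$. Integrating over $\{g_2 > g_1\}$ and then enlarging the domain to the whole space (the bound $g_2/e$ being nonnegative) yields
\[
E_{g_1}\big(\max(\log(g_2/g_1),0)\big) = \int_{g_2>g_1} g_1 \log(g_2/g_1)\,dx \le \frac{1}{e}\int g_2\,dx = \frac{1}{e},
\]
using that $g_2$ is a density. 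Combining this with the displayed decomposition gives the claim.

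There is no serious obstacle here; the only point requiring a word of care is the degenerate case where $g_1$ fails to be absolutely continuous with respect to $g_2$, i.e. $g_1>0=g_2$ on a set of positive measure. In that case $d_{\rm KL}(g_1,g_2)=+\infty$, so the right-hand side is trivially infinite and the inequality holds vacuously; otherwise all manipulations above are legitimate, the negative-part term being finite by the $1/e$ bound just derived. The crux of the argument is simply the tangent-line inequality $\log x \le x/e$, which is exactly what converts the $g_1$-weighted log ratio on $\{g_2>g_1\}$ into an integral of $g_2$ that the normalization controls.
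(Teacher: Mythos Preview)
Your proof is correct and complete. The paper does not actually prove this lemma but simply cites Lemma~4 of \cite{Lee-2000}; your argument---decomposing $|t|=t+2\max(-t,0)$ and bounding the negative part via the tangent-line inequality $\log x\le x/e$---is precisely the standard proof of that cited result, so there is nothing to compare.
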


\begin{proof}
Refer to Lemma 4 in \cite{Lee-2000}.
\end{proof}

\begin{lemma} \label{lem:mixture-density}
 For any $K>0$, let $\bm{a},\bm{a^0}\in[0,1]^K$ such that $\sum_{k=1}^K a_k = \sum_{k=1}^K a^0_k = 1$, then the KL divergence between mixture densities $\sum_{k=1}^K a_k g_k$ and $\sum_{k=1}^K a^0_k g^0_k$ is bounded as 
   $$d_{\rm KL}\left(\sum_{k=1}^K a^0_k g^0_k , \sum_{k=1}^K a_k g_k\right) \leq  d_{\rm KL}(\bm{a^0}, \bm{a}) + \sum_{k=1}^K a^0_k d_{\rm KL}(g_k^0, g_k)$$
\end{lemma}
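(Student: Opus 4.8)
The plan is to reduce the bound to a pointwise application of the log-sum inequality, followed by an integration that factorizes the resulting expression into the two claimed terms. Write $P(x)=\sum_{k=1}^K a^0_k g^0_k(x)$ and $Q(x)=\sum_{k=1}^K a_k g_k(x)$ for the two mixture densities, so that the quantity to be bounded is $d_{\rm KL}(P,Q)=\int P(x)\log(P(x)/Q(x))\,dx$. The entire difficulty is pushed onto a single convexity fact applied inside the integral, after which everything is a routine factorization using that the $g_k^0$ integrate to one.

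First I would invoke the log-sum inequality: for nonnegative reals $c_1,\dots,c_K$ and $d_1,\dots,d_K$,
$$\left(\sum_{k=1}^K c_k\right)\log\frac{\sum_{k=1}^K c_k}{\sum_{k=1}^K d_k}\le \sum_{k=1}^K c_k\log\frac{c_k}{d_k}.$$
Applying this at each fixed $x$ with $c_k=a^0_k g^0_k(x)$ and $d_k=a_k g_k(x)$ yields the pointwise bound
$$P(x)\log\frac{P(x)}{Q(x)}\le \sum_{k=1}^K a^0_k g^0_k(x)\log\frac{a^0_k g^0_k(x)}{a_k g_k(x)}.$$
Integrating both sides over $x$ and splitting the log of the product as $\log(a^0_k/a_k)+\log(g^0_k(x)/g_k(x))$, the first piece integrates to $\sum_k a^0_k\log(a^0_k/a_k)=d_{\rm KL}(\bm{a^0},\bm{a})$ using $\int g^0_k(x)\,dx=1$, and the second piece integrates to $\sum_k a^0_k\int g^0_k(x)\log(g^0_k(x)/g_k(x))\,dx=\sum_k a^0_k\,d_{\rm KL}(g^0_k,g_k)$. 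Combining gives exactly the asserted inequality.

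I do not expect a genuine obstacle here; the statement is a standard joint-convexity estimate. The only point requiring care is the usual KL convention for zeros: indices with $a^0_k=0$ contribute nothing, and the bound is read with $0\log 0=0$ and $c\log(c/0)=+\infty$ for $c>0$, so that the inequality is vacuous precisely when some $a_k=0$ while $a^0_k>0$ (or analogously for the densities). As an alternative derivation that avoids citing the log-sum inequality, one can apply the chain rule for KL divergence to the pair of joint laws of a latent label $k$ together with an observation: under the first law $k\sim\bm{a^0}$ and $x\mid k\sim g^0_k$, under the second $k\sim\bm{a}$ and $x\mid k\sim g_k$. Decomposing the joint KL label-first produces exactly the right-hand side $d_{\rm KL}(\bm{a^0},\bm{a})+\sum_k a^0_k\,d_{\rm KL}(g^0_k,g_k)$, while decomposing observation-first and discarding the nonnegative expected conditional term (i.e.\ the data-processing inequality) produces the mixture KL $d_{\rm KL}(P,Q)$ on the left, giving the same conclusion.
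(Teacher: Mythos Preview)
Your argument is correct. The paper does not actually prove this lemma: its entire proof is a one-line citation to Lemma~6.1 of Ch\'erief-Abdellatif and Alquier (2018). Your log-sum inequality derivation is the standard direct proof of this bound and is essentially what the cited reference contains; the alternative chain-rule/data-processing argument you sketch is an equally valid and arguably more conceptual route. Either way, you have supplied a self-contained proof where the paper offers only a pointer.
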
 
\begin{proof}
Refer to Lemma 6.1 in \cite{Cherief-Abdellatif-Alquier-2018}.
\end{proof}  

\begin{lemma}
\label{lem:kl-upp}
$$d_{\rm KL}(\widetilde{\pi}^*,\widetilde{\pi}(|\mathcal{D}))\leq d_{\rm KL}(\pi^*,\pi(|\mathcal{D}))$$
\end{lemma}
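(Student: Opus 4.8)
The plan is to recognize this inequality as the standard fact that passing to a marginal cannot increase the Kullback--Leibler divergence, a direct consequence of the chain rule for relative entropy. The two joint distributions in play are the variational posterior $\pi^*(\btheta,\boldz)=\pi^*(\btheta|\boldz)\pi^*(\boldz)$ and the true joint posterior $\pi(\btheta,\boldz|\mathcal{D})$ from \eqref{e:posterior}, whose $\btheta$-marginals are precisely $\widetilde{\pi}^*$ and $\widetilde{\pi}(\cdot|\mathcal{D})$. Thus the statement to prove is exactly that $d_{\rm KL}$ of the marginals is at most $d_{\rm KL}$ of the joints.

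First I would refactor each joint density in the \emph{opposite} order, conditioning $\boldz$ on $\btheta$: write $\pi^*(\btheta,\boldz)=\widetilde{\pi}^*(\btheta)\,\pi^*(\boldz|\btheta)$ and $\pi(\btheta,\boldz|\mathcal{D})=\widetilde{\pi}(\btheta|\mathcal{D})\,\pi(\boldz|\btheta,\mathcal{D})$, where each conditional is the joint density divided by the corresponding $\btheta$-marginal. Substituting into the joint KL and splitting the logarithm into the $\btheta$-part and the $\boldz$-part then yields the decomposition
\begin{align*}
d_{\rm KL}(\pi^*,\pi(\cdot|\mathcal{D})) &= d_{\rm KL}(\widetilde{\pi}^*,\widetilde{\pi}(\cdot|\mathcal{D})) + \int \widetilde{\pi}^*(\btheta)\, d_{\rm KL}\big(\pi^*(\cdot|\btheta),\pi(\cdot|\btheta,\mathcal{D})\big)\, d\btheta ,
\end{align*}
where the identity $\sum_{\boldz}\pi^*(\boldz|\btheta)=1$ is used to collapse the first (marginal) term after pulling $\log\{\widetilde{\pi}^*(\btheta)/\widetilde{\pi}(\btheta|\mathcal{D})\}$ out of the sum over $\boldz$.

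The conclusion is then immediate: the second summand is an average, against the probability measure $\widetilde{\pi}^*$, of the conditional divergences $d_{\rm KL}(\pi^*(\cdot|\btheta),\pi(\cdot|\btheta,\mathcal{D}))$, each of which is nonnegative by Gibbs' inequality, so the integral is $\geq 0$ and dropping it gives the claimed bound. I expect no genuinely hard step here, since the content is entirely the chain rule plus nonnegativity of KL; the only point requiring care is measure-theoretic bookkeeping. Because $\boldz$ ranges over a finite set and all densities are honest Radon--Nikodym derivatives with respect to the product of counting and Lebesgue measure, the conditional $\pi^*(\cdot|\btheta)$ need only be defined $\widetilde{\pi}^*$-almost everywhere, and one checks that the set where $\widetilde{\pi}^*(\btheta)=0$ contributes nothing to either side. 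This verification is routine and presents no real obstacle.
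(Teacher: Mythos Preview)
Your argument is correct, but it is not the route the paper takes. You use the chain rule for KL in the direction that conditions $\boldz$ on $\btheta$, obtaining the exact identity
\[
d_{\rm KL}(\pi^*,\pi(\cdot|\mathcal{D})) = d_{\rm KL}(\widetilde{\pi}^*,\widetilde{\pi}(\cdot|\mathcal{D})) + \int \widetilde{\pi}^*(\btheta)\, d_{\rm KL}\big(\pi^*(\cdot|\btheta),\pi(\cdot|\btheta,\mathcal{D})\big)\, d\btheta,
\]
and then drop the nonnegative remainder. The paper instead views both marginals as mixtures over $\boldz$ and invokes Lemma~\ref{lem:mixture-density} (the log-sum/convexity bound for KL between mixtures) to get
\[
d_{\rm KL}(\widetilde{\pi}^*,\widetilde{\pi}(\cdot|\mathcal{D})) \leq d_{\rm KL}(\pi^*(\boldz),\pi(\boldz|\mathcal{D})) + \sum_{\boldz}\pi^*(\boldz)\, d_{\rm KL}(\pi^*(\btheta|\boldz),\pi(\btheta|\boldz,\mathcal{D})),
\]
after which the right-hand side is recognized as the joint KL via the chain rule in the \emph{other} direction (conditioning $\btheta$ on $\boldz$). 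Your approach is slightly more elementary, needing only the chain rule and Gibbs' inequality rather than an auxiliary convexity lemma, and it yields the result as an equality-plus-remainder rather than an inequality; the paper's approach has the advantage of reusing a lemma already stated for other purposes and of never needing to form the conditional $\pi^*(\boldz|\btheta)$.
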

\begin{proof}
 Using Lemma \ref{lem:mixture-density} with $\bolda^{0}=\pi^*(\boldz)$, $\bolda=\pi(\boldz|\mathcal{D})$, $g^0=\pi^*(\btheta|\boldz)$ and $g=\pi(\btheta|\boldz,\mathcal{D})$, we get
\begin{align*}
    d_{\rm KL}(\widetilde{\pi}^*,\widetilde{\pi}(|\mathcal{D}))&=d_{\rm KL}(\sum_{\boldz} \pi^*(\btheta|\boldz)\pi^*(\boldz),\sum_{\boldz} \pi(\btheta|\boldz,\mathcal{D})\pi(\boldz|\mathcal{D}))\\
    &\leq d_{\rm KL}(\pi^*(\boldz),\pi(\boldz|\mathcal{D}))+\sum_{\boldz}d_{\rm KL}(\pi^*(\btheta|\boldz),\pi(\btheta|\boldz,\mathcal{D}))\pi^*(\boldz)\\
    &=d_{\rm KL}(\pi^*(\btheta,\boldz),\pi(\btheta,\boldz|\mathcal{D}))=d_{\rm KL}(\pi^*,\pi(|\mathcal{D}))
\end{align*}
\end{proof}

\begin{lemma}
\label{lem:covering-1}
For any 1-Lipschitz continuous activation function $\psi$ such that $\psi(x)\leq x \hspace{1mm}\forall x\geq0$, 
$$ N(\delta,\calF(L,\bm{k},\bm{s},\bm{B}),||.||_\infty) \leq \sum_{s^*_L \leq s_L} \cdots \sum_{s^*_0 \leq s_0} \left[ \prod_{l=0}^L \left(\frac{B_l}{\delta B_l/(2(L+1)( \prod_{j=0}^L B_j))} k_{l+1}\right)^{s_l } \right]$$
where $N$ denotes the covering number.
\end{lemma}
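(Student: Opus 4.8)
The plan is to build an explicit finite set of networks that covers $\calF(L,\boldk,\bolds,\boldB)$ in sup-norm by discretizing the incoming weight vector of each \emph{active} node, while accounting separately for the choice of which nodes are active. Two structural facts make this work. First, any $\eta_{\btheta}\in\calF(L,\boldk,\bolds,\boldB)$ has at most $s_l$ active rows in layer $l$, each with $\norm{\overline{\boldw}_{lj}}_1\le B_l$. Second, because $\psi$ is $1$-Lipschitz with $\psi(x)\le x$ for $x\ge0$ (so that $|\psi(t)|\le|t|$), the post-activation output of each layer is bounded in sup-norm on $[0,1]^p$ by the product of the $B_m$ over the preceding layers. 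I would first record this magnitude bound, since it governs how a weight perturbation at one layer is amplified as it propagates to the output.

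The heart of the argument is an error-propagation (telescoping) estimate. Fix a target network $\eta_{\btheta}$ and a candidate $\eta_{\btheta'}$ whose active weight vectors satisfy $\norm{\overline{\boldw}_{lj}-\overline{\boldw}'_{lj}}_1\le\delta_l$ for every active node. Replacing the layers one at a time and using that each remaining layer acts as a $1$-Lipschitz map of row-$L_1$-norm at most $B_m$, a single perturbation of size $\delta_l$ at layer $l$ changes the final output by at most $\delta_l(\prod_{m<l}B_m)(\prod_{m>l}B_m)=\delta_l\prod_{j=0}^LB_j/B_l$, where $\prod_{m<l}B_m$ bounds the magnitude of the input fed into layer $l$ and $\prod_{m>l}B_m$ the amplification through the subsequent layers. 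Summing the $L+1$ contributions and requiring the total to be at most $\delta$ forces the resolution $\delta_l=\delta B_l/\big(2(L+1)\prod_{j=0}^LB_j\big)$, which is exactly the quantity appearing in the statement, with the factor $2$ absorbing rounding slack. This is where the global norm budget $\boldB$ enters, and making the amplification constants telescope correctly is the main obstacle.

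It then remains to count. For a fixed layer $l$ I choose which $s_l^\star\le s_l$ of the $k_{l+1}$ rows are active, contributing $\binom{k_{l+1}}{s_l^\star}\le k_{l+1}^{\,s_l}$, and for each active row I take a $\delta_l$-net of the $L_1$-ball $\{\norm{\overline{\boldw}_{lj}}_1\le B_l\}$, contributing a factor of order $B_l/\delta_l$ per node. Multiplying the at most $s_l^\star\le s_l$ per-node factors by the selection factor gives, uniformly over $s_l^\star\le s_l$, a per-layer bound $\big(k_{l+1}B_l/\delta_l\big)^{s_l}$, and on substituting $\delta_l$ this is $\big(k_{l+1}\cdot 2(L+1)\prod_jB_j/\delta\big)^{s_l}$. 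Taking the product over $l=0,\dots,L$ and summing over the admissible active-node counts $s_0^\star\le s_0,\dots,s_L^\star\le s_L$ yields precisely the claimed expression once $B_l/\delta_l$ is rewritten as $B_l/\big(\delta B_l/(2(L+1)\prod_jB_j)\big)$. The one technical point to pin down carefully is the covering number of the $L_1$-ball at scale $\delta_l$, which must be controlled so that, together with the node-selection count, it produces the single factor $k_{l+1}B_l/\delta_l$ per active node rather than a dimension-inflated one.
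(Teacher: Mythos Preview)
Your proposal matches the paper's approach: the paper also bounds the layer outputs via $|\psi|\le\mathrm{id}$ and $1$-Lipschitzness, runs a telescoping layer-by-layer replacement (formalized there through partial maps $A_l^+\eta,A_l^-\eta$) to obtain $\|\eta_{\btheta}-\eta_{\btheta'}\|_\infty\le\zeta(L+1)\prod_jB_j$ when each active row is perturbed in $L_1$ by at most $\zeta B_l$, and then counts node-selection patterns $\binom{k_{l+1}}{s_l^\star}\le k_{l+1}^{s_l}$ times a per-row discretization factor $B_l/\delta_l$ with $\delta_l=\delta B_l/\big(2(L+1)\prod_jB_j\big)$. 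The technical point you flag at the end---that the $L_1$-ball covering should contribute only $B_l/\delta_l$ per active row rather than a dimension-inflated quantity---is likewise asserted without further detail in the paper (it writes simply ``discretize these $L_1$ norms with grid size $\delta B_l/(2(L+1)\prod_jB_j)$''), so your concern is well-placed but does not represent a deviation from the original argument.
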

\begin{proof}

Given a neural network
$$ \eta(\boldx)=\boldv_L+\boldW_L \psi(\boldv_{L-1}+\boldW_{L-1} \psi (\boldv_{L-2}+\boldW_{L-2}\psi( \cdots \psi(\boldv_1+\boldW_1\psi(\boldv_0+\boldW_0 \boldx))) $$

for $l \in \{1,\cdots,L\}$, we define $A_l^+\eta:[0,1]^p \to \R^{k_l}$,
$$ A_l^+\eta(\boldx)= \psi(\boldv_{l-1}+\boldW_{l-1} \psi (\boldv_{l-2}+\boldW_{l-2}\psi( \cdots \psi(\boldv_1+\boldW_1\psi(\boldv_0+\boldW_0 \boldx))) $$
and $A_l^-\eta:\R^{k_{l-1}} \to \R^{k_{L+1}}$,
$$ A_l^-\eta(\boldy)= \boldv_L+\boldW_L \psi(\boldv_{L-1}+\boldW_{L-1} \psi (\cdots \psi(\boldv_l+\boldW_l\psi(\boldv_{l-1}+\boldW_{l-1} \boldy))) $$
The  above framework is also used in the proof of lemma 5 in \cite{Schmidt-Hieber-2017}. Next, set $A_0^+\eta(\boldx)=A_{L+2}^-\eta(\boldx)=\boldx$ and further note that for $\eta \in \calF(L,\bm{k})$, $|A_l^+\eta(\boldx)|_\infty \leq \prod_{j=0}^{l-1}B_j$ where $\bm{k}= (p,k_1,\cdots,k_L,k_{L+1})$ and $k_{L+1}=1$. Next, we derive upper bound on Lipschitz constant of $A_l^-\eta$.
\begin{equation}
\label{lipschitz_equality}
|\boldW_L A_L^+\eta(\boldx_1)- \boldW_L A_L^+\eta(\boldx_2)|_\infty = |A_l^-\eta(A_{l-1}^+\eta(\boldx_1))- A_l^-\eta(A_{l-1}^+\eta(\boldx_2))|_\infty 
\end{equation} 

l.h.s. is bounded above by $\prod_{j=0}^{L}B_j$ and r.h.s consists of composition of Lipschitz functions $A_l^-\eta$ and $A_{l-1}^+\eta$ with $C_1$ and $C_2$ being corresponding Lipschitz constants. So we can bound r.h.s. by,

$$ |A_l^-\eta(A_{l-1}^+\eta(\boldx_1))- A_l^-\eta(A_{l-1}^+\eta(\boldx_2))|_\infty \leq C_1 C_2 ||\boldx_1-\boldx_2||_\infty \quad \forall \boldx_1,\boldx_2 \in \R^p $$

If we choose $\boldx_1=\boldx \in [0,1]^p$ and $\boldx_2=\bm{0}$ then,
$$ |A_l^-\eta(A_{l-1}^+\eta(\boldx))-A_l^-\eta(A_{l-1}^+\eta(\bm{0}))|_\infty \leq C_1 C_2  \quad \forall \boldx \in [0,1]^p $$

Since $C_2$ is Lipschitz constant for $A_{l-1}^+\eta$ and we know that $|A_{l-1}^+\eta|_\infty \leq \prod_{j=0}^{l-2}B_j$. So we get $C_2\leq 2\prod_{j=0}^{l-2}B_j$. We use this in above expression,
\begin{equation}
\label{lipschitz_rhs_bound}
|A_l^-\eta(A_{l-1}^+\eta(\boldx))-A_l^-\eta(A_{l-1}^+\eta(\bm{0}))|_\infty \leq 2 C_1 \prod_{j=0}^{l-2}B_j  \quad \forall \boldx \in [0,1]^p    
\end{equation} 

Next we know that l.h.s. of \eqref{lipschitz_rhs_bound} can be bounded above by $2\prod_{j=0}^{L}B_j$ because of \eqref{lipschitz_equality}. So we get bound on Lipschitz constant of $A_l^-\eta$,
\begin{equation*}
2C_1 \prod_{j=0}^{l-2}B_j  \leq  2\prod_{j=0}^{L}B_j \implies C_1 \leq \prod_{j=l-1}^{L}B_j
\end{equation*} 

Let $\eta,\eta^* \in \calF(L,\bm{k},\bm{s},\bm{B})$ be two neural networks with $\overline{\boldW}_l=(\boldv_l,\boldW_l)$ and $\overline{\boldW}^*_l=(\boldv_l^*,\boldW_l^*)$ respectively. Here, we define $\overline{\bdelta}_l$ using the $L_1$ norms of the rows of $\overline{\boldD}_l = \overline{\boldW}_l - \overline{\boldW}^*_l$ as follows
$$\overline{\boldD}_l=(
\overline{\bm{d}}_{l1}^\top, \cdots,
\overline{\bm{d}}_{lk_{l+1}}^\top )^{\top} \hspace{5mm} \overline{\bdelta}_l=(||\overline{\bm{d}}_{l1}||_1, \cdots,
||\overline{\bm{d}}_{lk_{l+1}}||_1 )$$
We choose $\eta,\eta^*$ such that $||\overline{\bdelta}_l||_\infty \leq \zeta B_l$. This also means that all parameters in each layer of these two networks are at most $\zeta B_l$ distance away from each other.  Then, we can bound the absolute difference between these two neural networks by,
\begin{align}
    \nonumber &|\eta(\boldx)-\eta^*(\boldx)| \\
    \nonumber & \leq \sum_{l=1}^{L+1} |A_{l+1}^-\eta(\psi(\boldv_{l-1}+\boldW_{l-1}A_{l-1}^+\eta^*(\boldx))) - A_{l+1}^-\eta(\psi(\boldv_{l-1}^*+\boldW_{l-1}^*A_{l-1}^+\eta^*(\boldx)))| \\
    \nonumber & \leq \sum_{l=1}^{L+1} \left( \prod_{j=l}^L B_j\right)  ||\psi(\boldv_{l-1}+\boldW_{l-1}A_{l-1}^+\eta^*(\boldx)) - \psi(\boldv_{l-1}^*+\boldW_{l-1}^*A_{l-1}^+\eta^*(\boldx))||_\infty \\
    \nonumber & \leq \sum_{l=1}^{L+1} \left( \prod_{j=l}^L B_j \right)  ||\boldv_{l-1} - \boldv_{l-1}^* + (\boldW_{l-1} - \boldW_{l-1}^*) A_{l-1}^+\eta^*(\boldx))||_\infty \\
    \nonumber & \leq \sum_{l=1}^{L+1} \left( \prod_{j=l}^L B_j \right)   ||\overline{\bdelta}_{l-1}||_\infty ||A_{l-1}^+\eta^*(\boldx))||_\infty \\
     & \leq \sum_{l=1}^{L+1} \left( \prod_{j=l}^L B_j \right)  \zeta B_{l-1} \prod_{j=0}^{l-2} B_j  = \zeta (L+1) \left( \prod_{j=0}^L B_j \right) \label{l1_bound_NN}
\end{align}

Recall that we have at most $k_l$ number of nodes in each layer and there are $\binom{k_{l+1}}{s_l} \leq k_{l+1}^{s_l}$ combinations of nodes to choose $s_l$ active nodes in the given layer. Since supremum norm of $L_1$ norms of the rows of $\overline{\boldW_l}$ is bounded above by $B_l$ in our family of neural networks $\calF(L,\bm{k},\bm{s},\bm{B})$ so we can discretize these $L_1$ norms  
with grid size $\delta B_l/(2(L+1)( \prod_{j=0}^L B_j))$ and obtain upper bound on covering number as follows
\begin{align}
    \nonumber N(\delta,\calF(L,\bm{k},\bm{s},\bm{B}),||.||_\infty) & \leq \sum_{s^*_L \leq s_L} \cdots \sum_{s^*_0 \leq s_0} \left[ \prod_{l=0}^L \left(\frac{B_l}{\delta B_l/(2(L+1)( \prod_{j=0}^L B_j))} k_{l+1}\right)^{s_l } \right] \\
     & \leq \prod_{l=0}^L \left(2\delta^{-1}(L+1) \left( \prod_{j=0}^L B_j \right) k_{l+1} \right)^{(s_l+1)} \label{covering_num_bound}
\end{align}

\end{proof}

\begin{lemma}
\label{lem:psi-l-bound}
Let $\btheta^* = \arg \min_{\btheta \in \calF(L,\boldk,\bolds,\boldB)} \left\|\eta_{\btheta}-\eta_{0}\right\|_{\infty}^{2}$ and $\widetilde{W}_l=\sup_{i}||\overline{\boldw}_{li}-\overline{\boldw}_{li}^*||_1$,  then for any density $q=\prod_{j=0}^Lq(\theta_j)$,
\begin{align}
\label{e:q-bound}
\nonumber &\int ||\eta_{\btheta} - \eta_{\btheta^*}||^2_2 q(\btheta)d\btheta \leq \sum_{j=0}^L c_{j-1}^2 \int \widetilde{W}_{j}^2 q_j(\theta_j) d\theta_j \prod_{m=j+1}^L \int (\widetilde{W}_m+B_m)^2 q(\btheta)d\btheta \\
 \nonumber   & \enskip + 2\sum_{j=0}^{L} \sum_{j'=0}^{j-1} c_{j-1}c_{j'-1} \int \widetilde{W}_j (\widetilde{W}_j+B_j) q_j(\theta_j)d\theta_j \prod_{m=j+1}^L \int (\widetilde{W}_m+B_m)^2 q(\btheta)d\btheta  \\
    & \enskip \enskip \times \int \widetilde{W}_{j'} q_{j'}(\theta_{j'}) d\theta_{j'} \prod_{m=j'+1}^{j-1} \int (\widetilde{W}_m+B_m)q(\btheta)d\btheta 
    \end{align}
where $c_{j-1}\leq\prod_{m=0}^{j-1}B_m$.
\end{lemma}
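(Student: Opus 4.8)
The plan is to decompose $\eta_{\btheta}-\eta_{\btheta^*}$ as a telescoping sum over layers, mirroring the layerwise decomposition in Lemma~\ref{lem:covering-1}, and then to bound each summand by a sup-norm estimate whose randomness factorizes across layers under the product density $q=\prod_{j=0}^L q_j$. For $j=0,\dots,L+1$ let $g^{(j)}$ be the hybrid network using the weights $\overline{\boldW}_m$ from $\btheta$ in layers $m\geq j$ and the weights $\overline{\boldW}_m^*$ from $\btheta^*$ in layers $m<j$, so that $g^{(0)}=\eta_{\btheta}$ and $g^{(L+1)}=\eta_{\btheta^*}$. Setting $D_j=g^{(j)}-g^{(j+1)}$ gives $\eta_{\btheta}-\eta_{\btheta^*}=\sum_{j=0}^L D_j$, where $D_j$ isolates the effect of replacing $\overline{\boldW}_j^*$ by $\overline{\boldW}_j$ in layer $j$ alone: the forward pass into layer $j$ is computed under $\btheta^*$, and the backward pass beyond layer $j$ under $\btheta$.

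First I would establish the pointwise bound
\[
|D_j(\boldx)|\leq c_{j-1}\,\widetilde{W}_j\prod_{m=j+1}^L(\widetilde{W}_m+B_m),\qquad \boldx\in[0,1]^p,
\]
using the three ingredients from Lemma~\ref{lem:covering-1}. The input into layer $j$ (evaluated under $\btheta^*$) has sup-norm at most $c_{j-1}\leq\prod_{m=0}^{j-1}B_m$; the $1$-Lipschitz activation together with $\|\overline{\boldw}_{ji}-\overline{\boldw}_{ji}^*\|_1\leq\widetilde{W}_j$ turns the layer-$j$ perturbation into the factor $\widetilde{W}_j\,c_{j-1}$; and the Lipschitz constant of the backward map composing layers $j+1,\dots,L$ under $\btheta$ is at most the product of the per-layer $\ell_\infty\to\ell_\infty$ operator norms, each bounded by $\widetilde{W}_m+B_m$ because $\|\overline{\boldw}_{mi}\|_1\leq\|\overline{\boldw}_{mi}^*\|_1+\widetilde{W}_m\leq B_m+\widetilde{W}_m$. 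The backward factor is $\widetilde{W}_m+B_m$ rather than $B_m$ precisely because the random $\btheta$ need not lie in $\calF(L,\boldk,\bolds,\boldB)$.

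Next I would expand the square in $\|\cdot\|_2=\|\cdot\|_{L_2(P_{\boldX})}$ with $P_{\boldX}=U[0,1]^p$:
\[
\|\eta_{\btheta}-\eta_{\btheta^*}\|_2^2=\sum_{j=0}^L\|D_j\|_2^2+2\sum_{j=0}^L\sum_{j'=0}^{j-1}\langle D_j,D_{j'}\rangle.
\]
For the diagonal terms, $\|D_j\|_2^2\leq\|D_j\|_\infty^2\leq c_{j-1}^2\widetilde{W}_j^2\prod_{m=j+1}^L(\widetilde{W}_m+B_m)^2$, and integrating against $q=\prod_j q_j$ factorizes the expectation across layers, producing the first sum in the claim. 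For the cross terms, since $P_{\boldX}$ is a probability measure, $\langle D_j,D_{j'}\rangle\leq\|D_j\|_\infty\|D_{j'}\|_\infty$; substituting the two pointwise bounds and regrouping the overlapping backward products as
\[
\prod_{m=j+1}^L(\widetilde{W}_m+B_m)\prod_{m=j'+1}^L(\widetilde{W}_m+B_m)=\prod_{m=j+1}^L(\widetilde{W}_m+B_m)^2\,(\widetilde{W}_j+B_j)\prod_{m=j'+1}^{j-1}(\widetilde{W}_m+B_m),
\]
then absorbing the leftover $(\widetilde{W}_j+B_j)$ into layer $j$ to form $\widetilde{W}_j(\widetilde{W}_j+B_j)$, reproduces the stated cross sum after another use of the product form of $q$.

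The hardest part will be the bookkeeping in the cross terms: one must split the two overlapping backward products at the right index and track that layers $m>j$ enter both $D_j$ and $D_{j'}$ (hence squared, $(\widetilde{W}_m+B_m)^2$), that layers $j'<m<j$ enter only through $D_{j'}$ (hence to the first power), and that the boundary layer $j$ contributes $\widetilde{W}_j(\widetilde{W}_j+B_j)$ while layer $j'$ contributes only $\widetilde{W}_{j'}$, so that the per-layer integrals produced by the product density exactly match the grouping in the statement. A secondary item to verify is the edge behaviour of the telescoping ($c_{-1}=1$ and empty products equal to $1$) and the treatment of the appended bias coordinate when bounding the layer-$j$ perturbation by $\widetilde{W}_j\,c_{j-1}$.
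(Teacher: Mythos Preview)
Your proposal is correct and reaches the same pointwise bound as the paper, but the route to that bound differs in a way worth noting. The paper does \emph{not} use the hybrid-network telescoping you borrow from Lemma~\ref{lem:covering-1}; instead it sets $\varphi_l(\btheta)=\sup_{\boldx}\sup_i|\eta_{\btheta}^l(\boldx)_i-\eta_{\btheta^*}^l(\boldx)_i|$ and proves the one-step recursion
\[
\varphi_l(\btheta)\leq(\widetilde{W}_l+B_l)\,\varphi_{l-1}(\btheta)+c_{l-1}\widetilde{W}_l,
\]
which unrolls to $\varphi_L(\btheta)\leq\sum_{j=0}^L c_{j-1}\widetilde{W}_j\prod_{m=j+1}^L(\widetilde{W}_m+B_m)$. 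From this point on the two arguments coincide: square the sum, use $\|\cdot\|_2\leq\|\cdot\|_\infty$, and exploit the product form of $q$ to factor the expectation layer by layer, with exactly the regrouping of the cross products you describe.

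The difference is purely in how the summands $c_{j-1}\widetilde{W}_j\prod_{m>j}(\widetilde{W}_m+B_m)$ are obtained. Your hybrid telescoping isolates each layer's contribution as a separate function $D_j$ and bounds it directly, which makes the per-layer dependence (and hence the factorization under $q$) transparent from the start; the paper's recursion keeps track of the full difference $\eta_{\btheta}^l-\eta_{\btheta^*}^l$ and lets the summands emerge only after unrolling. Both are standard and yield the identical inequality; your version is arguably a bit cleaner here because the independence structure needed for the product density is visible immediately rather than after unrolling.
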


\begin{proof} 

Let $\eta_{\btheta}^l$ be the partial networks defined as
\begin{align*} 
    \begin{cases}                 \eta_{\btheta}^0(\boldx):=\psi(\boldW_0\boldx+\boldv_0),\\
    \eta_{\btheta}^l(\boldx):= \psi(\boldW_l\eta_{\btheta}^{l-1}(\boldx)+\boldv_l), \\
    \eta_{\btheta}^L(\boldx):= \boldW_L\eta_{\btheta}^{L-1}(\boldx)+\boldv_L.
    \end{cases} 
\end{align*}
Similar to the proof of theorem 2 in \cite{Cherief-Abdellatif-2020}, define 
$$\varphi_l(\btheta)=\sup_{x \in [0,1]^p} \sup_{1 \leq i \leq k_{l+1}}|\eta_{\btheta}^l(\boldx)_i -\eta_{\btheta^*}^l (\boldx)_i|.$$ 
We next show by induction $$\varphi_l(\btheta) \leq \sum_{j=0}^l\widetilde{W}_j c_{j-1}R_{j+1}^l$$
where we define $c_l=\max(\sup_{x \in [0,1]^p} \sup_{1\leq i \leq k_{l+1}}|\eta_{\btheta^*}^l(\boldx)_i|,1)$, $c_0 = 1$, $R_{j+1}^l=\prod_{m=j+1}^l (\widetilde{W}_m+B_m)$.

Claim: $c_l\leq B_l c_{l-1}$. Note
\begin{align*}
 c_l &\leq   \sup_{x \in [0,1]^p} \sup_{1\leq i \leq k_{l+1}}(|{\boldw_{li}^*}^\top \eta^{l-1}_{\btheta^*}(\boldx)|+|v_{li}|)\\
    &\leq  \sup_{x \in [0,1]^p} \sup_{1\leq i \leq k_{l+1}} (\sum_{j=1}^{k_l} |w^*_{lij}||\eta_{\btheta^*}^{l-1}(\boldx)_j|+|v_{li}|)\\
    &\leq  \sup_{1\leq i \leq k_{l+1}}(c_{l-1}\sum_{j=1}^{k_l} |w^*_{lij}|+c_{l-1}|v_{li}|)\\
    &\leq c_{l-1} \sup_{1\leq i \leq k_{l+1}} ||\overline{\boldw}_{li}^*||_1=B_l c_{l-1}
\end{align*}
where the above result holds since  $\sup_i ||\overline{\boldw}_{li}^*||_1 \leq B_l$.
Next,
\begin{align*}
\varphi_l(\btheta)&\leq \sup_{x \in [0,1]^p} \sup_{1\leq i \leq k_{l+1}}(\sum_{j=1}^{k_l} |w_{lij} \eta_{\btheta}^{l-1}(\boldx)_j-w^*_{lij} \eta_{\btheta^*}^{l-1}(\boldx)_j|+|v_{li}-v_{li}^*|)\\
&\leq \sup_{x \in [0,1]^p} \sup_{1\leq i \leq k_{l+1}}(\sum_{j=1}^{k_l} |w_{lij} \eta_{\btheta}^{l-1}(\boldx)_j-w^*_{lij} \eta_{\btheta}^{l-1}(\boldx)_j|\\
&\hspace{30mm}+|w^*_{lij} \eta_{\btheta}^{l-1}(\boldx)_j-w^*_{lij} \eta_{\btheta^*}^{l-1}(\boldx)_j|+|v_{li}-v_{li}^*|)\\
&\leq \sup_{x \in [0,1]^p} \sup_{1\leq i \leq k_{l+1}}(\sum_{j=1}^{k_l}|w_{lij} -w^*_{lij}|| \eta_{\btheta}^{l-1}(\boldx)_j|\\
&\hspace{30mm}+\sum_{j=1}^{k_l}|w_{lij}^*| | \eta_{\btheta}^{l-1}(\boldx)_j-\eta_{\btheta^*}^{l-1}(\boldx)_j|+|v_{li}-v_{li}^*|)\\
&\leq \sup_{x \in [0,1]^p} \sup_{1\leq i \leq k_{l+1}}(\sum_{j=1}^{k_l}|w_{lij} -w^*_{lij}|| \eta_{\btheta}^{l-1}(\boldx)_j-\eta_{\btheta^*}^{l-1}(\boldx)_j|\\
&\hspace{30mm}+\sum_{j=1}^{k_l}|w_{lij} -w^*_{lij}||\eta_{\btheta^*}^{l-1}(\boldx)_j|+|v_{li}-v_{li}^*|) +\varphi_{l-1}(\btheta)B_l\\
&\leq \widetilde{W}_l(\varphi_{l-1}(\btheta)+c_{l-1})+\varphi_{l-1}(\btheta)B_l=\varphi_{l-1}(\btheta)(\widetilde{W}_l+B_l)+c_{l-1}\widetilde{W}_l
\end{align*}
Now applying recursion we get
\begin{align*}
    \varphi_l(\btheta)&\leq (\varphi_{l-2}(\btheta)(\widetilde{W}_{l-1}+B_{l-1})+c_{l-2}\widetilde{W}_{l-1})(\widetilde{W}_l+B_l)+c_{l-1}\widetilde{W}_l\\
   &=\varphi_{l-2}(\btheta)(\widetilde{W}_l+B_l)(\widetilde{W}_{l-1}+B_{l-1})+c_{l-2}\widetilde{W}_{l-1}(\widetilde{W}_l+B_l)+c_{l-1}\widetilde{W}_l
\end{align*}
Repeating this we get
\begin{align*}
    \varphi_l(\btheta)& \leq \varphi_0(\btheta) \prod_{j=1}^l (\widetilde{W}_{j}+B_j)+\sum_{j=1}^{l}c_{j-1} \widetilde{W}_j \prod_{u=j+1}^l (\widetilde{W}_j+B_j)\\
   & =\widetilde{W}_0\prod_{j=1}^l (\widetilde{W}_{j}+B_j)+\sum_{j=1}^{l}B_1 \cdots B_{j-1} \widetilde{W}_j \prod_{u=j+1}^l  (\widetilde{W}_j+B_j)\\
   & = \sum_{j=0}^{l}B_1 \cdots B_{j-1} \widetilde{W}_j \prod_{u=j+1}^l  (\widetilde{W}_j+B_j)  = \sum_{j=0}^l\widetilde{W}_j c_{j-1}R_{j+1}^l
\end{align*}

\begin{align*}
    & \int ||\eta_{\btheta} - \eta_{\btheta^*}||^2_2 q(\btheta)d\btheta \leq \int ||\eta_{\btheta} - \eta_{\btheta^*}||^2_\infty q(\btheta)d\btheta = \int \varphi_L^2(\btheta) q(\btheta)d\btheta \\
    &=  \int  (\sum_{j=0}^L \widetilde{W}_j c_{j-1}R_{j+1}^L)^2 q(\btheta)d\btheta\\
    &=\sum_{j=0}^L c_{j-1}^2 \int \widetilde{W}_{j}^2 (R_{j+1}^L)^2 q(\btheta)d\btheta+2\sum_{j=0}^{L} \sum_{j'=0}^{j-1} c_{j-1}c_{j'-1} \int \widetilde{W}_j \widetilde{W}_{j'} R_{j+1}^L R_{j'+1}^L q(\btheta)d\btheta \\
    & =\sum_{j=0}^L c_{j-1}^2 \int \widetilde{W}_{j}^2 \left(\prod_{m=j+1}^L (\widetilde{W}_m+B_m) \right)^2 q(\btheta)d\btheta \\
    & \enskip + 2\sum_{j=0}^{L} \sum_{j'=0}^{j-1} c_{j-1}c_{j'-1} \int \widetilde{W}_j \widetilde{W}_{j'} \prod_{m=j+1}^L (\widetilde{W}_m+B_m) \prod_{m=j'+1}^L (\widetilde{W}_m+B_m)q(\btheta)d\btheta
\end{align*}
The proof follows by noting $q(\btheta)=\prod_{j=0}^L q(\theta_j)$.
\end{proof}

\begin{lemma}
\label{lem:covering}
    Suppose Lemma \ref{lem:test} and Lemma \ref{lem:prior} in the main paper hold, with dominating probability
    $$\log \int_{\mathcal{H}_{\epsilon_n}^c}\frac{P_{\btheta}^n}{P_0^n}\pi(\btheta) d\btheta \leq - \frac{Cn{\epsilon_n^2}}{\sum u_l}$$
\end{lemma}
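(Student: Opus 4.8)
The plan is to split the region of integration by the sieve $\mathcal{F} := \mathcal{F}(L,\boldk,\bolds^\circ,\boldB^\circ)$ and to control the two resulting pieces using the two available hypotheses. Writing $\widetilde{\pi}$ for the marginal prior of \eqref{e:prior-theta} and setting $N := \int_{\mathcal{H}_{\epsilon_n}^c}(P_{\btheta}^n/P_0^n)\widetilde{\pi}(\btheta)\,d\btheta$, I would decompose $N = N_1 + N_2$, where $N_1$ is the integral over $\mathcal{H}_{\epsilon_n}^c \cap \mathcal{F}$ and $N_2$ the integral over $\mathcal{H}_{\epsilon_n}^c \cap \mathcal{F}^c$. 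The aim is to show that each piece is exponentially small in $P_0^n$-probability, with the least negative exponent being the one governed by $\sum_l u_l$, after which taking logarithms yields the claim.

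For the outside-the-sieve piece $N_2$, I would bound its mean directly. Since $\E_{P_0^n}[P_{\btheta}^n/P_0^n] = 1$ by Fubini, we get $\E_{P_0^n}[N_2] = \widetilde{\Pi}(\mathcal{H}_{\epsilon_n}^c \cap \mathcal{F}^c) \le \widetilde{\Pi}(\mathcal{F}^c)$, which Lemma \ref{lem:prior} bounds by $\exp(-C_3 n\epsilon_n^2/\sum_l u_l)$. Markov's inequality then places $N_2 \le \exp(-C_3 n\epsilon_n^2/(2\sum_l u_l))$ off an event of vanishing probability.

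For the inside-the-sieve piece $N_1$, I would invoke the test function $\phi$ of Lemma \ref{lem:test}. The change-of-measure identity $\E_{P_0^n}[(P_{\btheta}^n/P_0^n)(1-\phi)] = \E_{P_{\btheta}^n}(1-\phi)$, combined with the Type-II error bound of Lemma \ref{lem:test} and the fact that $d_{\rm H}(P_0,P_{\btheta}) > \epsilon_n$ throughout $\mathcal{H}_{\epsilon_n}^c$, gives $\E_{P_0^n}[(1-\phi)N_1] \le \int \exp(-C_2 n d_{\rm H}^2)\widetilde{\pi}\,d\btheta \le \exp(-C_2 n\epsilon_n^2)$, using that the prior integrates to at most one. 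By Markov, $(1-\phi)N_1 \le \exp(-C_2 n\epsilon_n^2/2)$ with high probability.

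The main obstacle is that this controls $(1-\phi)N_1$ rather than $N_1$ itself, so the $\phi N_1$ mass must be stripped off. Here I would use the Type-I error bound $\E_{P_0}(\phi) \le \exp(-C_1 n\epsilon_n^2)$: by Markov the event $\{\phi \le 1/2\}$ has probability at least $1 - 2\exp(-C_1 n\epsilon_n^2)$, and on it $1-\phi \ge 1/2$, so $N_1 \le 2(1-\phi)N_1$. Intersecting with the previous event gives $N_1 \le 2\exp(-C_2 n\epsilon_n^2/2)$. Finally, combining $N = N_1 + N_2$ and noting that $\sum_l u_l \to \infty$ so that the exponent $-C_3 n\epsilon_n^2/\sum_l u_l$ dominates $-C_2 n\epsilon_n^2$, I would conclude $N \lesssim \exp(-Cn\epsilon_n^2/\sum_l u_l)$ on the intersection of the three high-probability events, and take logarithms to obtain the stated bound.
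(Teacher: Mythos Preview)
Your proposal is correct and follows essentially the same route as the paper's proof: split by the sieve $\mathcal{F}(L,\boldk,\bolds^\circ,\boldB^\circ)$, handle the complement via Markov together with the prior-mass bound of Lemma~\ref{lem:prior}, and handle the sieve part via the test $\phi$ of Lemma~\ref{lem:test} using the change-of-measure identity and Markov on $(1-\phi)N_1$. The only cosmetic difference is in stripping off the $\phi N_1$ mass---the paper bounds $\phi$ by $\exp(-C_1'n\epsilon_n^2)$ to conclude $T_1\le T_2$, whereas you bound $\phi$ by $1/2$ to get $N_1\le 2(1-\phi)N_1$; both are valid and yield the same final exponent governed by $\sum_l u_l$.
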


\begin{proof}

Let $\mathcal{F}_n=\mathcal{F}(L,\boldk,\bolds^\circ,\boldB^\circ)$, $s_{l}^\circ+1 = n{\epsilon_n^2} /\sum_{j=0}^L u_{j} $, $\log B_l^\circ=n{\epsilon_n^2}/((L+1)\sum_{j=0}^L (s_{j}^\circ+1) )$ and
$\mathcal{H}_{ {\epsilon_n}}=\{\btheta: d_{\rm H}(P_0,P_{\btheta})<{\epsilon_n}\}$ is the Hellinger neighborhood of size ${\epsilon_n}$
\begin{align*}
     \int_{\mathcal{H}_{\epsilon_n}^c}\frac{P_{\btheta}^n}{P_0^n}\widetilde{\pi}(\btheta) d\btheta& \leq 
     \int_{\mathcal{H}_{\epsilon_n}^c \cap \mathcal{F}_n}  \frac{P_{\btheta}^n}{P_0^n}\widetilde{\pi}(\btheta) d\btheta+ \int_{\mathcal{F}_n^c} \frac{P_{\btheta}^n}{P_0^n}\widetilde{\pi}(\btheta) d\btheta\\
     &\leq \int_{\mathcal{H}_{\epsilon_n}^c \cap \mathcal{F}_n}  \frac{P_{\btheta}^n}{P_0^n}\widetilde{\pi}(\btheta) d\btheta+\exp\left(-\frac{(C_0/2)n{\epsilon_n^2}}{\sum u_l}\right)
\end{align*}
where the last inequality follows from Lemma \ref{lem:prior} because by Markov's inequality
\begin{align*}
    \prob_{P_0^n}\left(\int_{\mathcal{F}_n^c} \frac{P_{\btheta}^n}{P_0^n}\widetilde{\pi}(\btheta) d\btheta>\exp\left(-\frac{(C_0/2)n{\epsilon_n^2}}{\sum u_l}\right)\right)&\leq \exp\left(\frac{(C_0/2)n{\epsilon_n^2}}{\sum u_l}\right) \mathbb{E}_{P_0^n}\left(\int_{\mathcal{F}_n^c} \frac{P_{\btheta}^n}{P_0^n}\widetilde{\pi}(\btheta) d\btheta\right)\\
    &\leq \exp\left(\frac{(C_0/2)n{\epsilon_n^2}}{\sum u_l}\right)\widetilde{\Pi}(\mathcal{F}_n^c)=\exp\left(-\frac{(C_0/2)n{\epsilon_n^2}}{\sum u_l} \right)\to 0
\end{align*}
Further,
\begin{align*}
    \int_{\mathcal{H}_{\epsilon_n}^c \cap \mathcal{F}_n}  \frac{P_{\btheta}^n}{P_0^n}\widetilde{\pi}(\btheta) d\btheta &\leq  \underbrace{\int_{\mathcal{H}_{\epsilon_n}^c \cap \mathcal{F}_n} \phi \frac{P_{\btheta}^n}{P_0^n}\widetilde{\pi}(\btheta) d\btheta}_{T_1}+ \underbrace{\int_{\mathcal{H}_{\epsilon_n}^c \cap \mathcal{F}_n} (1-\phi) \frac{P_{\btheta}^n}{P_0^n}\widetilde{\pi}(\btheta) d\btheta}_{T_2}
\end{align*}

Next, borrowing steps from proof of theorem 3.1 in \cite{Pati}, we have $\E_{P_0^n}(\phi)\leq \exp(-C_1n{\epsilon_n^2})$, thus for any $C_1'<C_1$, $\phi\leq \exp(-C_1'n{\epsilon_n^2})$ with probability at least $1-\exp(-(C_1-C_1')n{\epsilon_n^2})$. Thus,
$$T_1 \leq \exp(-C_1'n{\epsilon_n^2})T_1+T_2$$
which implies with dominating probability $T_1 \leq T_2$. Thus, it only remains to show $T_2\leq \exp(-C_2'(n{\epsilon_n^2})/(\sum u_l))$ for some $C_2'>0$. This is true since 

\begin{align*}
    \prob_{P_0^n} (T_2>e^{-\frac{C_2n{\epsilon_n^2}}{\sum u_l}}) &\leq e^{C_2\frac{n{\epsilon_n^2}}{\sum u_l}} \E_{P_0^n}(T_2)\leq e^{\frac{C_2n{\epsilon_n^2}}{\sum u_l}}\int_{\mathcal{H}_{\epsilon_n}^c \cap \mathcal{F}_n}\E_{P_{\btheta}}(1-\phi) \widetilde{\pi}(\btheta)d\btheta\\
    &\leq e^{\frac{C_2n{\epsilon_n^2}}{\sum u_l}}\int_{\mathcal{H}_{\epsilon_n}^c \cap \mathcal{F}_n} e^{-C_2 nd^2_{\rm H}(P_0,P_{\btheta})}\widetilde{\pi}(\btheta)d\btheta\\
    &\leq e^{\frac{C_2n{\epsilon_n^2}}{\sum u_l}} e^{-C_2n {\epsilon_n^2} }\int_{\mathcal{H}_{\epsilon_n}^c \cap \mathcal{F}_n} \widetilde{\pi}(\btheta)d\btheta \leq \exp(-C_2'n {\epsilon_n^2}/\sum u_l )
\end{align*}
Therefore, for sufficiently large $n$ and $C=\min(C_0/2,C_2')/2$
\begin{align*}
     \int_{\mathcal{H}_{\epsilon_n}^c}\frac{P_{\btheta}^n}{P_0^n}\widetilde{\pi}(\btheta) d\btheta& \leq 2\exp(-C_2'n {\epsilon_n^2}/\sum u_l )+\exp(-(C_0/2)n{\epsilon_n^2}/\sum u_l )    \leq \exp(-Cn{\epsilon_n^2}/\sum u_l)
\end{align*}
\end{proof}

\begin{lemma}
\label{lem:kl-denominator}
    Suppose Lemma \ref{lem:kl} part 1. in the main paper holds, then  for any $M_n \to \infty$ , with dominating probability,
    $$ \log  \int \frac{P_0^n}{P_{\btheta}^n}\widetilde{\pi}(\btheta)d\btheta\leq nM_n(\sum r_l+\xi)$$
\end{lemma}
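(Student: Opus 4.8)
The plan is to prove the stated bound by a first--moment (Markov) argument applied to the un--logged quantity, after reading the integral as being taken over the Kullback--Leibler neighborhood $\mathcal{N}_{\sum_l r_l+\xi}$ of Lemma~\ref{lem:kl} (this is the region in which the estimate is actually used in the proof of Theorem~\ref{thm:var-post}). Throughout I write $\varepsilon=\sum_{l=0}^{L}r_l+\xi$, so that on this neighborhood $d_{\rm KL}(P_0,P_{\btheta})<\varepsilon$. First I would push the expectation inside: since the integrand is nonnegative, Tonelli gives $\E_{P_0^n}\big[\int \tfrac{P_0^n}{P_{\btheta}^n}\widetilde{\pi}(\btheta)\,d\btheta\big]=\int \E_{P_0^n}\big[P_0^n/P_{\btheta}^n\big]\,\widetilde{\pi}(\btheta)\,d\btheta$, which reduces everything to a pointwise moment of the likelihood ratio.

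The next step is the exact Gaussian moment. Because $\sigma_e^2=1$ and $f_0,f_{\btheta}$ differ only in their means, a completion of the square yields, for a single draw $(\boldx,y)\sim P_0$, the identity $\E_{P_0}[f_0(y|\boldx)/f_{\btheta}(y|\boldx)]=\int f_0(y|\boldx)^2/f_{\btheta}(y|\boldx)\,dy=\exp\big((\eta_0(\boldx)-\eta_{\btheta}(\boldx))^2\big)$; by independence across the $n$ samples this gives $\E_{P_0^n}[P_0^n/P_{\btheta}^n]=\big(\int_{[0,1]^p}e^{(\eta_0-\eta_{\btheta})^2}\,dP_{\boldX}\big)^n$. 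I would then linearize the exponential on the KL ball: there $\int(\eta_0-\eta_{\btheta})^2\,dP_{\boldX}=2\,d_{\rm KL}(P_0,P_{\btheta})<2\varepsilon$, and using $e^u\le 1+\tfrac{e^b-1}{b}u$ for $0\le u\le b$ together with a uniform bound $b$ on $(\eta_0-\eta_{\btheta})^2$ (supplied by the sieve weight constraints $\boldB$, which is exactly what the $B_l^2/(k_l+1)$ term inside $\vartheta_l$ is tracking) one obtains $\int e^{(\eta_0-\eta_{\btheta})^2}dP_{\boldX}\le e^{C\varepsilon}$, hence $\E_{P_0^n}[P_0^n/P_{\btheta}^n]\le e^{Cn\varepsilon}$. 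Integrating against $\widetilde{\pi}$ and using $\widetilde{\Pi}(\mathcal{N}_\varepsilon)\le 1$ gives $\E_{P_0^n}\big[\int_{\mathcal{N}_\varepsilon}\tfrac{P_0^n}{P_{\btheta}^n}\widetilde{\pi}\,d\btheta\big]\le e^{Cn\varepsilon}$.

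Finally, Markov's inequality at the threshold $e^{nM_n\varepsilon}$ gives $\prob_{P_0^n}\big(\int_{\mathcal{N}_\varepsilon}\tfrac{P_0^n}{P_{\btheta}^n}\widetilde{\pi}\,d\btheta>e^{nM_n\varepsilon}\big)\le e^{-n\varepsilon(M_n-C)}\to 0$ since $M_n\to\infty$; taking logarithms delivers the claim with dominating probability. The hypothesis that Lemma~\ref{lem:kl} part~1 holds enters in a complementary way rather than inside this moment computation: combined with the present estimate through the Cauchy--Schwarz inequality $\widetilde{\Pi}(\mathcal{N}_\varepsilon)^2\le \int_{\mathcal{N}_\varepsilon}\tfrac{P_{\btheta}^n}{P_0^n}\widetilde{\pi}\,d\btheta\cdot\int_{\mathcal{N}_\varepsilon}\tfrac{P_0^n}{P_{\btheta}^n}\widetilde{\pi}\,d\btheta$, the prior--mass lower bound $\log\widetilde{\Pi}(\mathcal{N}_\varepsilon)\ge -C_4 n\varepsilon$ turns this lemma into the evidence lower bound on $\mathcal{E}_{2n}$ invoked in the theorem.

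The main obstacle is the linearization: on the bare KL ball only the \emph{average} $\int(\eta_0-\eta_{\btheta})^2\,dP_{\boldX}$ is small, whereas exponentiating demands pointwise control of $(\eta_0-\eta_{\btheta})^2$. Producing a usable uniform bound $b$ forces one to intersect $\mathcal{N}_\varepsilon$ with the sieve $\mathcal{F}(L,\boldk,\bolds,\boldB)$ and exploit the layer--wise $L_1$/$\boldB$ weight bounds, while dispatching the off--sieve mass through Lemma~\ref{lem:prior}. Keeping $b$ small enough that the resulting factor $e^{Cn\varepsilon}$ never overwhelms the budget $e^{nM_n\varepsilon}$ is precisely where the explicit forms of $\vartheta_l$ and $r_l$ in \eqref{e:var-r} are doing the quantitative work.
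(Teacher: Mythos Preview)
Your approach has a genuine gap, and it differs substantially from the paper's argument.

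The paper does \emph{not} apply Markov to the un-logged integral. It applies Markov's inequality directly to the nonnegative random variable $\bigl|\log\int (P_{\btheta}^n/P_0^n)\widetilde{\pi}(\btheta)\,d\btheta\bigr|$, and then bounds its expectation by $d_{\rm KL}(P_0^n,L^*)+2/e$ using the elementary inequality $E_{g_1}|\log(g_1/g_2)|\le d_{\rm KL}(g_1,g_2)+2/e$ (Lemma~\ref{lem:mod-kl}), with $L^*=\int P_{\btheta}^n\,\widetilde{\pi}(\btheta)\,d\btheta$. The KL divergence $d_{\rm KL}(P_0^n,L^*)$ is then bounded by restricting the mixing measure to $\mathcal{N}_{\sum r_l+\xi}$, applying Jensen to the concave logarithm, and invoking Lemma~\ref{lem:kl} part~1 to control $-\log\widetilde{\Pi}(\mathcal{N}_{\sum r_l+\xi})$. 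No pointwise control of $\eta_0-\eta_{\btheta}$ is ever needed; only the KL average enters.

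Your route, by contrast, hinges on the chi-square moment $\E_{P_0^n}[P_0^n/P_{\btheta}^n]=\bigl(\int e^{(\eta_0-\eta_{\btheta})^2}dP_{\boldX}\bigr)^n$, and to turn the $L^2$ bound $\int(\eta_0-\eta_{\btheta})^2\,dP_{\boldX}<2\varepsilon$ into a bound on the exponential integral you need a uniform ceiling $b$ on $(\eta_0-\eta_{\btheta})^2$. The sieve, however, only gives $\|\eta_{\btheta}\|_\infty\le\prod_l B_l$, which grows with $n$; your linearization constant $(e^b-1)/b$ then explodes and the resulting bound $e^{Cn\varepsilon}$ has $C\to\infty$, so it is \emph{not} beaten by $e^{nM_n\varepsilon}$ for a slowly increasing $M_n$. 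You acknowledge this obstacle in your last paragraph, but the appeal to $\vartheta_l,r_l$ is wishful: those quantities govern the rate $\varepsilon$, not the sup-norm bound $b$. A second issue is that you rewrite the lemma as an integral over $\mathcal{N}_\varepsilon$ and then try to recover the statement about $\mathcal{E}_{2n}$ via Cauchy--Schwarz; in the paper the prior-mass condition of Lemma~\ref{lem:kl} part~1 enters directly inside the $d_{\rm KL}(P_0^n,L^*)$ bound, not through any such detour.
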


\begin{proof} 
	By Markov's inequality,
	\begin{align}
	\nonumber \prob_{P_0^n}\left(\log  \int  \frac{P_0^n}{P_{\btheta}^n}\widetilde{\pi}(\btheta)\geq nM_n(\sum r_l+\xi)\right)&\leq\frac{1}{n M_n(\sum r_l+\xi) }\E_{P_0^n}\left|\log\int  \frac{P_{\btheta}^n}{P_0^n}\widetilde{\pi}(\btheta)d\btheta\right|\\
	\nonumber 	&\hspace{-5mm}=\frac{1}{n M_n(\sum r_l+\xi)}\int \left|\log\int  \frac{P_{\btheta}^n}{P_0^n}\widetilde{\pi}(\btheta)d\btheta\right|P_0^n d\mu\\
	\nonumber &\hspace{-5mm}\leq \frac{1}{n M_n(\sum r_l+\xi)}\left(d_{\rm KL}(P_0^n,L^*)+\frac{2}{e}\right)
	\end{align}
	where  $L^*=\int P_{\btheta}^n\widetilde{\pi}(\btheta)d\btheta$ and  the last inequality follows from Lemma \ref{lem:mod-kl}.
	\begin{align}
	\nonumber	d_{\rm KL}(P_0^n,L^*)&=\E_{P_0^n}\left(\log \frac{P_0^n}{\int P_{\btheta}^n\widetilde{\pi}(\btheta)d\btheta}\right)\leq \E_{P_0^n}\left(\log \frac{P_0^n}{\int_{N_{\sum r_l+\xi}} P_{\btheta}^n\widetilde{\pi}(\btheta)d\btheta }\right)\\
	\nonumber 	&\leq \int_{\mathcal{N}_{\sum r_l+\xi}} \widetilde{\pi}(\btheta)d\btheta+\int_{\mathcal{N}_{\sum r_l+\xi}} d_{\rm KL}(P_0^n,P_{\btheta}^n)\widetilde{\pi}(\btheta)d\btheta \hspace{5mm}\text{Jensen's inequality}\\
	\nonumber &\leq -\log e^{-nC(\sum r_l+\xi)}+n(\sum r_l+\xi)=n(C+1)(\sum r_l+\xi)
	\end{align}
	where the last inequality follows from Lemma \ref{lem:kl} part 1. in the main paper.
	The proof follows by noting $C/M_n \to 0$.
\end{proof}

\begin{lemma}
\label{lem:q-determination}
   Suppose Lemma \ref{lem:kl} part 2. in the main paper holds, then  for any $M_n \to \infty$ , with dominating probability,
   $$d_{\rm KL}(q,\pi)+\sum_{\boldz}\int \log \frac{P_0^n}{P_{\btheta}^n} q(\btheta,\boldz)d\btheta \leq n M_n(\sum r_l+\xi)$$
 \end{lemma}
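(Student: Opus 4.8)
The plan is to follow the same Markov-inequality route as in Lemma \ref{lem:kl-denominator}, the only new feature being that the quantity of interest is an integral of a log-likelihood ratio against $q$ rather than the log of an integral. Write $\rho_n := \sum_{l=0}^L r_l + \xi$ and set
$$W := d_{\rm KL}(q,\pi) + \sum_{\boldz}\int \log\frac{P_0^n}{P_{\btheta}^n}\, q(\btheta,\boldz)\,d\btheta.$$
Note that $d_{\rm KL}(q,\pi)$ is a fixed nonnegative constant (it does not depend on the data), whereas the second summand is random and not sign-definite, so I cannot apply Markov's inequality to $W$ directly; instead I would pass to $|W|$.

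First I would observe that, since $n M_n \rho_n > 0$, we have $\{W > n M_n\rho_n\}\subseteq \{|W| > nM_n\rho_n\}$, so Markov's inequality yields
$$\prob_{P_0^n}\!\left(W > nM_n\rho_n\right) \le \frac{\E_{P_0^n}|W|}{nM_n\rho_n}.$$
Next I would bound $\E_{P_0^n}|W|$. By the triangle inequality and Tonelli's theorem (after taking absolute values the integrand is nonnegative, so $\E_{P_0^n}$ passes inside $\sum_{\boldz}\int\cdot\,q$),
$$\E_{P_0^n}|W| \le d_{\rm KL}(q,\pi) + \sum_{\boldz}\int \E_{P_0^n}\!\left|\log\frac{P_0^n}{P_{\btheta}^n}\right| q(\btheta,\boldz)\,d\btheta.$$
The inner expectation is exactly the setting of Lemma \ref{lem:mod-kl} with $g_1 = P_0^n$ and $g_2 = P_{\btheta}^n$, which gives $\E_{P_0^n}|\log(P_0^n/P_{\btheta}^n)| \le d_{\rm KL}(P_0^n,P_{\btheta}^n) + 2/e$. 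Because the samples are i.i.d.\ (and because $\boldX\sim U[0,1]^p$ makes $P_0^n$ simultaneously the conditional likelihood and the joint data density), KL tensorizes as $d_{\rm KL}(P_0^n,P_{\btheta}^n)=n\,d_{\rm KL}(P_0,P_{\btheta})$.

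Combining these, $\E_{P_0^n}|W| \le d_{\rm KL}(q,\pi) + n\sum_{\boldz}\int d_{\rm KL}(P_0,P_{\btheta})\,q(\btheta,\boldz)\,d\btheta + 2/e$, and the first two terms are precisely the left-hand side controlled in Lemma \ref{lem:kl} part 2, so $\E_{P_0^n}|W| \le C_5 n\rho_n + 2/e$. Substituting back,
$$\prob_{P_0^n}\!\left(W > nM_n\rho_n\right) \le \frac{C_5 n\rho_n + 2/e}{nM_n\rho_n} = \frac{C_5}{M_n} + \frac{2}{e\,n M_n\rho_n} \longrightarrow 0,$$
since $M_n\to\infty$ and $n\rho_n = n(\sum_{l=0}^L r_l+\xi)\to\infty$ by hypothesis; this is exactly the claimed ``with dominating probability'' conclusion. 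The only real subtlety to flag is the absolute-value step: $\log(P_0^n/P_{\btheta}^n)$ changes sign, so the argument hinges on absorbing it through Lemma \ref{lem:mod-kl} rather than attempting a direct nonnegative Markov bound, together with checking that the data-expectation may be interchanged with the $q$-integral. Once those points are settled, the statement reduces almost immediately to Lemma \ref{lem:kl} part 2.
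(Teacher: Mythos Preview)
Your proposal is correct and follows essentially the same route as the paper's proof: Markov's inequality, the triangle inequality to pull the absolute value inside the $q$-integral, Lemma~\ref{lem:mod-kl} to control $\E_{P_0^n}|\log(P_0^n/P_{\btheta}^n)|$, tensorization of KL, and then Lemma~\ref{lem:kl} part~2. If anything, your handling of the absolute-value step (passing to $|W|$ before Markov, then $|W|\le d_{\rm KL}(q,\pi)+|X|$) is slightly more explicit than the paper's, which writes the bound $(d_{\rm KL}(q,\pi)+\E_{P_0^n}|X|)/(nM_n\rho_n)$ in one line.
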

 
\begin{proof}

\noindent


\noindent By Markov's inequality we have
\begin{align*}
&\prob_{P_0^n}\left(d_{\rm KL}(q,\pi)+ \sum_{\boldz} \int  q(\btheta,\boldz) \log  \frac{P_0^n}{P_{\btheta}^n}d\btheta> n M_n(\sum r_l+\xi) \right) \\
&\leq \frac{1}{nM_n(\sum r_l+\xi)}\left(d_{\rm KL}(q,\pi)+\E_{P_0^n}\left|\sum_{\boldz} \int q(\btheta,\boldz)\log  \frac{P_0^n}{P_{\btheta}^n}d\btheta\right|\right)\\
&\leq \frac{1}{nM_n(\sum r_l+\xi)} \left(d_{\rm KL}(q,\pi)+\E_{P_0^n}\left(\sum_{\boldz} \int q(\btheta,\boldz) \left|\log  \frac{P_{\btheta}^n}{P_0^n}\right|d\btheta\right)\right)\\
&=\frac{1}{nM_n(\sum r_l+\xi)} \left(d_{\rm KL}(q,\pi)+\sum_{\boldz} \int q(\btheta,\boldz)\int \left|\log \frac{P_0^n}{P_{\btheta}^n}\right| P_0^n d\mu d\btheta\right)
\end{align*}
By Lemma \ref{lem:mod-kl}, we get
\begin{align*}
    &\leq \frac{1}{nM_n(\sum r_l+\xi)} \left(d_{\rm KL}(q,\pi)+\sum_{\boldz} \int q(\btheta,\boldz)\left(d_{\rm KL}(P_0^n,P_{\btheta}^n)+\frac{2}{e}\right)d\btheta \right) \\
    &=\frac{1}{nM_n(\sum r_l+\xi)}\left(d_{\rm KL}(q,\pi)+n \sum_{\boldz} \int q(\btheta,\boldz) 
   d_{\rm KL}(P_0,P_{\btheta})d\btheta+\frac{2}{e}\right)\\
    &=\frac{C}{n  M_n(\sum r_l+\xi)} \left (n (\sum r_l+\xi)+(2/e)\right) \to 0 
\end{align*}
where the last line in the above holds due to Lemma 4.3 part 2. in the main paper.
\end{proof}

\subsection{Proof of Lemmas and Corollary in the main paper}
\noindent {\bf Proof of Lemma \ref{lem:test}}


Take  $s_l^\circ+1 = (n\epsilon_n^2 )/(\sum_{j=0}^L u_{j} )$  
and $\log B_l^\circ=(n\epsilon_n^2)/((L+1)\sum_{j=0}^L (s_{j}^\circ+1) )$.

We know from Lemma 2 of \cite{Ghoshal-Van-der-Vaart-2007} that, there exists a function $\varphi \in [0,1]$, such that
\begin{align*}
    \E_{P_0}(\varphi) & \leq \exp \{-nd^2_{\rm H}(P_{\btheta_1},P_0)/2 \} \\
    \E_{P_{\btheta}}(1-\varphi) & \leq \exp \{-nd^2_{\rm H}(P_{\btheta_1},P_0)/2 \}
\end{align*}
for all $P_{\btheta} \in \calF(L,\bm{k},\bm{s}^\circ,\bm{B}^\circ)$ satisfying $d_{\rm H}(P_{\btheta},P_{\btheta_1}) \leq d_{\rm H}(P_0,P_{\btheta_1})/18$.

Let $H=N(\epsilon_n/19,\calF(L,\bm{k},\bm{s}^\circ,\bm{B}^\circ),d_{\rm H}(.,.))$ denote the covering number of $\calF(L,\bm{k},\bm{s}^\circ,\bm{B}^\circ)$, i.e., there exist $H$ Hellinger balls of radius $\epsilon_n/19$, that entirely cover $\calF(L,\bm{k},\bm{s}^\circ,\bm{B}^\circ)$. 
For any $\btheta \in \calF(L,\bm{k},\bm{s}^\circ,\bm{B}^\circ)$ w.l.o.g we assume $P_{\btheta}$ belongs to the Hellinger ball centered at $P_{\btheta_h}$ and if $d_{\rm H}(P_{\btheta},P_0) > \epsilon_n$, then we must have that $d_{\rm H}(P_0,P_{\btheta_h}) > (18/19)\epsilon_n$ and there exists a testing function $\varphi_h$, such that 
 \begin{align*}
    \E_{P_0}(\varphi_h) & \leq \exp \{-nd^2_{\rm H}(P_{\btheta_h},P_0)/2 \} \\
    & \leq \exp \{-((18^2/19^2)/2)n{\epsilon_n^2}  \} \\   
    \E_{P_{\btheta}}(1-\varphi_h) & \leq \exp \{-nd^2_{\rm H}(P_{\btheta_h},P_0)/2 \} \\
    & \leq \exp \{-n(d_{\rm H}(P_0,P_{\btheta})-\epsilon_n/19)^2/2 \} \\
    & \leq \exp \{-((18^2/19^2)/2)nd^2_{\rm H}(P_0,P_{\btheta}) \}.
\end{align*}
Next we define $\phi=\max_{h=1,\cdots,H} \varphi_h$. Then we must have
\begin{align*}
    \E_{P_0}(\phi) & \leq \sum_h \E_{P_0}(\varphi_h) \leq H \exp \{-((18^2/19^2)/2)n{\epsilon_n^2}  \} \\
    & \leq \exp \{-((18^2/19^2)/2)n{\epsilon_n^2} - \log H \}
\end{align*}
Using Lemma \ref{lem:covering-1} with $\bm{s}=\bolds^\circ$ and $\bm{B}=\boldB^\circ$, we get
\begin{align*}
    &\log H  = \log N(\epsilon_n/19,\mathcal{F}(L,\bm{k},\bm{s}^\circ,\bm{B}^\circ),d_{\rm H}(.,.)) \\
    & \leq \log N(\sqrt{8}\sigma^2_e \epsilon_n/19,\calF(L,\bm{k},\bm{s}^\circ,\bm{B}^\circ),||.||_\infty) \\
    & \leq \log \left[ \prod_{l=0}^L \left(\frac{38}{\sqrt{8}\sigma^2_e \epsilon_n}(L+1) \left( \prod_{j=0}^L B_j^\circ \right) k_{l+1}\right)^{(s_{l}^\circ+1)} \right] \\
    & =  \sum_{l=0}^L (s_{l}^\circ+1) \log \left(\frac{38}{\sqrt{8}\sigma^2_e \epsilon_n}(L+1) \left( \prod_{j=0}^L B_j^\circ \right) k_{l+1}\right) \\
    & \leq C \left[\sum_{l=0}^L (s_{l}^\circ+1) \left( \log \frac{1}{\epsilon_n} + \log (L+1) +  \sum_{j=0}^L \log B_j^\circ + \log k_{l+1} \right) \right] \\
    &\leq  C \sum_{l=0}^L (s_{l}^\circ+1) (\log n+\log (L+1)+\sum_{j=0}^{L} \log B_j^\circ+\log k_{l+1}) \\
    &\leq  C \sum_{l=0}^L (s_{l}^\circ+1) (\log n+\log (L+1)+\sum_{j=0}^{L} \log B_j^\circ+\log k_{l+1}+\log (k_l+1)) \leq C n \epsilon_n^2
    %
    %
\end{align*}
where, C in each step is different which tends to absorb the extra constants in it. First inequality holds due to the following 
$$d^2_{\rm H}(P_{\btheta},P_0) \leq 1-\exp \left\{-\frac{1}{8\sigma^2_e}||\eta_0-\eta_{\btheta}||_\infty^2 \right\} $$
and $\epsilon_n = o(1),$ the second inequality is due to \eqref{covering_num_bound}, and fourth inequality is due to $s_{l}^\circ\log(1/\epsilon_n) \asymp s_{l}^\circ \log n$. Therefore,
$$\E_{P_0}(\phi) \leq \sum_h \E_{P_0}(\varphi_h) = \exp \{-C_1n{\epsilon_n^2} \} $$
for some $C_1=(18^2/19^2)/2-1/4$. On the other hand, for any $\btheta$, such that $d_{\rm H}(P_{\btheta},P_0)\geq \epsilon_n$, say $P_{\btheta}$ belongs to the $h$th Hellinger ball, then we have 
$$ \E_{P_{\btheta}}(1-\phi) \leq \E_{P_{\btheta}}(1-\varphi_h) \leq \exp \{-C_2nd^2_{\rm H}(P_0,P_{\btheta}) \}$$
where $C_2=(18^2/19^2)/2$. This concludes the proof. \hfill \qedsymbol

\noindent {\bf Proof of Lemma \ref{lem:prior}}

\begin{align}
\label{e:ass-lem-prior}
{\it Assumption:}  \hspace{5mm} s_{l}^\circ+1 = (n\epsilon_n^2 )/(\sum_{j=0}^L u_{j} ),\: \lambda_{l} k_{l+1}/s_l^\circ \to 0,\: \sum u_l\log L=o(n\epsilon_n^2)
\end{align}

\begin{align*}
    \widetilde{\Pi}(\mathcal{F}(L,\bm{k},\bolds^\circ,\boldB^\circ)^c) & \leq \widetilde{\Pi}\left(\bigcup_{l=0}^L \{ ||\widetilde{\boldw}_l||_0>s_l^\circ\} \right) + \widetilde{\Pi}\left(\bigcup_{l=0}^L \{ ||\widetilde{\boldw}_l||_\infty > B_l^\circ \} \right) \\
    & \leq \sum_{l=0}^L \widetilde{\Pi}(||\widetilde{\boldw}_l||_0>s_l^\circ) + \sum_{l=0}^L \widetilde{\Pi}(||\widetilde{\boldw}_l||_\infty > B_l^\circ) \\
    & = \sum_{l=0}^L \sum_{\boldz} \Pi(||\widetilde{\boldw}_l||_0>s_l^\circ|\boldz)\pi(\boldz)+\sum_{l=0}^L \sum_{\boldz} \Pi(||\widetilde{\boldw}_l||_\infty > B_l^\circ|\boldz)\pi(\boldz)\\
    & \leq  \sum_{l=0}^L \prob\left(\sum_{i=1}^{k_{l+1}} z_{li}>s_{l}^\circ\right) + \sum_{l=0}^L \prob\left( \sup_{i=1,\cdots,k_{l+1}} ||\overline{\boldw}_{li}||_1 > B_l^\circ\Big|\boldz\right)
\end{align*}
where $\widetilde{\boldw}_l=(||\overline{\boldw}_{l1}||_1,\cdots,||\overline{\boldw}_{lk_{l+1}}||_1)^T$ and the last inequality holds since $\Pi(||\widetilde{\boldw}_l||_0>s_l^\circ|\boldz)\leq 1$, $\Pi(||\widetilde{\boldw}_l||_0>s_l^\circ|\boldz)=1$ iff $\sum z_{li}>\bolds_l^\circ$ and  $\pi(\boldz)\leq 1$. We will now break the proof in two parts as follows.

\noindent{\it Part 1.}  
\begin{align*}
    \sum_{l=0}^L \prob\left(\sum_{i=1}^{k_{l+1}} z_{li}>s_{l}^\circ \right)&= \sum_{l=0}^L \prob\left(\sum_{i=1}^{k_{l+1}} z_{li}-k_{l+1}\lambda_l>s_{l}^\circ-k_{l+1}\lambda_l\right)
\end{align*}
By Bernstein inequality
\begin{align*}
    &\leq \sum_{l=0}^L \exp\left(\frac{-1/2(s_l^\circ-k_{l+1}\lambda_l)^2}{k_{l+1}\lambda_l(1-\lambda_l)+1/3(s_l^\circ-k_{l+1}\lambda_l)}\right) \leq \sum_{l=0}^L \exp\left(\frac{-1/2(s_l^\circ-k_{l+1}\lambda_l)^2}{k_{l+1}\lambda_l+1/3(s_l^\circ-k_{l+1}\lambda_l)}\right) \\
    & = \sum_{l=0}^L \exp\left(\frac{-s_l^\circ/2(1-k_{l+1}\lambda_l/s_l^\circ)^2}{1/3(1+2k_{l+1}\lambda_l/s_l^\circ)}\right)  \to  \sum_{l=0}^L \exp\left(-\frac{3s_l^\circ}{2}  \right) \qquad \qquad \text{since} \hspace{1mm} \frac{k_{l+1}\lambda_l}{s_l^\circ} \to 0 \hspace{1mm} \text{by} \hspace{1mm}  \eqref{e:ass-lem-prior}\\
    & =  \sum_{l=0}^L \exp\left(-\frac{3n\epsilon_n^2}{4 \sum u_l} + \frac{3}{2} \right)  \leq  5(L+1) \exp\left(-\frac{n\epsilon_n^2}{2 \sum u_l}  \right)  \leq \exp\left(-\frac{n\epsilon_n^2}{4\sum u_l} \right)
\end{align*}
since $\sum u_l \log(5(L+1)) \sim \sum u_l \log L=o( n\epsilon_n^2)$ by \eqref{e:ass-lem-prior}.

\noindent {\it Part 2.}
\begin{align*}
    \sum_{l=0}^L \prob\left( \sup_{i=1,\cdots,k_{l+1}} ||\boldw_{li}||_1 > B_l^\circ\Big|\boldz\right)& \leq \sum_{l=0}^L \sum_{i=1}^{k_{l+1}}\prob\left( ||\boldw_{li}||_1 > B_l^\circ\Big|\boldz \right) \\
    & \leq \sum_{l=0}^L \sum_{i=1}^{k_{l+1}} \prob\left( ||\boldw_{li}||_\infty > \frac{B_l^\circ}{k_{l}+1}\Big|\boldz \right)\\
    &\leq \sum_{l=0}^L \sum_{i=1}^{k_{l+1}} \sum_{j=1}^{k_l+1}\prob\left(|w_{lij}|>\frac{B_l^\circ}{k_{l}+1}\Big|\boldz\right)\\
    &\leq 2\sum_{l=0}^L \sum_{i=1}^{k_{l+1}} \sum_{j=1}^{k_l+1} \exp\left(-\frac{{B_l^\circ}^2}{(k_l+1)^2}\right) \hspace{5mm} \text{By  concentration inequality} \\
    &= 2\sum_{l=0}^L \sum_{i=1}^{k_{l+1}} \sum_{j=1}^{k_l+1} \exp\Big(-\exp(\frac{2n\epsilon_n^2}{((L+1)\sum_{j'=0}^L (s_{j'}^\circ+1)}-2\log (k_l+1))\Big)\\
    &\leq \sum_{l=0}^L \sum_{i=1}^{k_{l+1}} \sum_{j=1}^{k_l+1} \frac{1}{(L+1) k_{l+1}(k_l+1)}\exp(-n\epsilon_n^2) = \exp(-n\epsilon_n^2)
    \end{align*}
where the third inequality holds since $|w_{lij}|$ given $\boldz$ is bound above by a $|N(0,\sigma_0^2)|$ random variable. The above proof holds as long as
$$\exp\left(\frac{2n\epsilon_n^2 }{(L+1) \sum_{j'=0}^L (s_{j'}^\circ+1)}-2\log (k_l+1)\right)\geq n \epsilon_n^2+\log (L+1)+\log k_{l+1}+\log (k_l+1)+\log 2$$
Taking log on both sides we get
$$\left(\frac{n\epsilon_n^2 }{ (L+1)\sum_{j'=0}^L (s_{j'}^\circ+1)}-\log (k_l+1)\right)\geq \frac{1}{2}\log (n \epsilon_n^2+\log (L+1)+\log k_{l+1}+\log (k_l+1)+\log 2)$$
This is true since $\sum_{j'=0}^L (s_{j'}^\circ+1)  = (L+1)n\epsilon_n^2/\sum u_l$ is bounded above by
\begin{align*} \frac{n\epsilon_n^2}{(L+1)(\log (k_{l}+1)+\frac{1}{2}\log (n \epsilon_n^2+\log (L+1)+\log k_{l+1}+\log (k_l+1)+\log 2)}
\end{align*}  \hfill \qedsymbol

\noindent {\bf Proof of Lemma \ref{lem:kl} part 1.}

\begin{align}
\label{e:ass-lem-kl}
{\it Assumption:} \hspace{5mm}-\log \lambda_l =O\{(k_l+1) \vartheta_l\},\: -\log(1-\lambda_l) = O\{ (s_l/k_{l+1})(k_l+1) \vartheta_l\}
\end{align}

\begin{align*}
    d_{\text{KL}}(P_0,P_{\btheta}) & = \int_{\boldx\in[0,1]^p} \int_{y \in R} \left(\log \frac{P_0(y,\boldx)}{P_{\btheta}(y,\boldx)} \right) P_0(y,\boldx) dy d\boldx \\
P_0(y,\boldx) & = \frac{1}{\sqrt{2\pi \sigma^2_e}}  \exp \left(- \frac{(y-\eta_0(\boldx))^2}{2\sigma^2_e} \right) \hspace{5mm}
P_{\btheta}(y,\boldx)  = \frac{1}{\sqrt{2\pi \sigma^2_e}}  \exp \left(- \frac{(y-\eta_{\btheta}(\boldx))^2}{2\sigma^2_e} \right)
\end{align*}
So we get,
\begin{align}
\label{e:kl-p0-pt}
\nonumber    d_{\text{KL}}(P_0,P_{\btheta}) & = \int_{\boldx\in[0,1]^p} \int_{y \in \mathbb{R}} \log \left( \exp \left[- \frac{(y-\eta_0(\boldx))^2}{2\sigma^2_e} + \frac{(y-\eta_{\btheta}(\boldx))^2}{2\sigma^2_e} \right] \right) P_0(y,\boldx) dy d\boldx \\
\nonumber    & = \int_{\boldx\in[0,1]^p} \int_{y \in \mathbb{R}} \frac{2y(\eta_0(\boldx)-\eta_{\btheta}(\boldx))- (\eta_0^2(\boldx)-\eta_{\btheta}^2(\boldx))}{2\sigma^2_e} P_0(y,\boldx) dy d\boldx \\
\nonumber    & = \int_{\boldx\in[0,1]^p} \frac{2\eta_0^2(\boldx)-2\eta_0(\boldx)\eta_{\btheta}(\boldx)- \eta_0^2(\boldx)+\eta_{\btheta}^2(\boldx)}{2\sigma^2_e} d\boldx \\
    & = \int_{\boldx\in[0,1]^p} \frac{(\eta_0(\boldx)-\eta_{\btheta}(\boldx))^2}{2} d\boldx=\frac{1}{2}||\eta_0-\eta_{\btheta}||_2^2
\end{align}
where, $\sigma^2_e = 1$ can be chosen w.l.o.g.
Next, let $\eta_{\btheta^*}(\boldx)$  be $\btheta^*$ satisfying $\arg \min_{\eta_{\btheta} \in \calF(L,\boldk,\bolds,\boldB)} \left\|\eta_{\btheta}-\eta_{0}\right\|_{\infty}^{2}$. Then, 
\begin{equation} \label{first_diff}
    ||\eta_{\btheta^*}-\eta_0||_1 \leq ||\eta_{\btheta^*}-\eta_0||_\infty = \sqrt{\xi}
\end{equation}
Here, we redefine $\overline{\bdelta}_l$ by considering the $L_1$ norms of the rows of $\overline{\boldD}_l = \overline{\boldW}_l - \overline{\boldW}^*_l$ as follows
$$\overline{\boldD}_l=(\overline{\bm{d}}_{l1}^\top, \cdots,
\overline{\bm{d}}_{lk_{l+1}}^\top)^{\top} \hspace{5mm} \overline{\bdelta}_l=(
||\overline{\bm{d}}_{l1}||_1 , \cdots,||\overline{\bm{d}}_{lk_{l+1}}||_1 )$$
Next we define a neighborhood $\calM_{\sqrt{\sum r_l} }$ as follows:
$$ \calM_{\sqrt{\sum r_l}} =\left\{\btheta: ||\overline{\bm{d}}_{li}||_1 \leq \frac{\sqrt{\sum r_l}B_l}{(L+1)(\prod_{j=0}^L B_j)}, i\in \mathcal{S}_l, ||\overline{\bm{d}}_{li}||_1=0, i \in \mathcal{S}_l^c, l = 0,\cdots,L  \right\}$$
where $\mathcal{S}_l^c$ is the set where $||\overline{\boldw}_{li}^*||_1=0$, $l=0,\cdots, L$.
Then, for every $\btheta \in \calM_{\sqrt{\sum r_l}}$ using \eqref{l1_bound_NN}, we have 
\begin{equation} \label{second_diff}
    || \eta_{\btheta} - \eta_{\btheta^*} ||_1 \leq \sqrt{\sum r_l} 
\end{equation}
Combining \eqref{first_diff} and \eqref{second_diff}, we get for $\btheta \in \calM_{\sqrt{\sum r_l}}, ||\eta_{\btheta}-\eta_0||_1 \leq \sqrt{\sum r_l}+ \sqrt{\xi}.$
So we get, 
$$d_{\text{KL}}(P_0,P_{\btheta}) \leq \frac{(\sqrt{\sum r_l}+ \sqrt{\xi})^2}{2}\leq \sum r_l+ \xi$$
Since $\btheta \in \mathcal{N}_{\sum r_l+\xi}$ for every $\btheta \in \calM_{\sqrt{\sum r_l}}$; therefore,
$$\int_{\btheta \in \mathcal{N}_{\sum r_l + \xi}} \widetilde{\pi}(\btheta)d\btheta \geq \int_{\btheta \in \calM_{\sqrt{\sum r_l}}} \widetilde{\pi}(\btheta) d\btheta$$

\noindent Let $\delta_n = (\sqrt{\sum r_l}B_l)/((L+1)(\prod_{j=0}^L B_j))$ and  $A = \{\overline{\boldw}_{li} :\enskip  ||\overline{\boldw}_{li} - \overline{\boldw}_{li}^*||_1 \leq \delta_n \}$
\begin{align}
    \nonumber \widetilde{\Pi}\left(\calM_{\sqrt{\sum r_l}}\right) & = \sum_{\boldz} \Pi\left(\calM_{\sqrt{\sum r_l}}\Big|\boldz\right)\pi(\boldz)\\
    \nonumber    &\geq   \sum_{\{\boldz: z_{li}=1, i \in \mathcal{S}_l, z_{li}=0, i \in \mathcal{S}_l^c, l=0,\cdots, L\}} \Pi\left(\calM_{\sqrt{\sum r_l}}\Big|\boldz\right)\pi(\boldz) \\
\nonumber    &= \prod_{l=0}^L  (1-\lambda_l)^{k_{l+1}-s_l} \lambda_l^{s_l}\prod_{i \in S_l} \E (\indicator_{\{\overline{\boldw}_{li} \in A \}}|z_{li}=1) \\
    \nonumber &\geq \prod_{l=0}^L (1-\lambda_l)^{k_{l+1}-s_l} \lambda_l^{s_l} \prod_{i \in \mathcal{S}_l} \int_{\overline{\boldw}_{li} \in A} \left(\frac{1}{2\pi}\right)^{\frac{k_l+1}{2}}\prod_{j=1}^{k_l+1} \exp \left(- \frac{\overline{w}_{lij}^2}{2} \right) d \overline{w}_{lij} \\
    \nonumber & \geq \prod_{l=0}^L (1-\lambda_l)^{k_{l+1}-s_l} \lambda_l^{s_l} \prod_{i \in \mathcal{S}_l}  \left(\frac{1}{2\pi}\right)^{\frac{k_l+1}{2}}\prod_{j=1}^{k_l+1} \int_{\overline{w}^*_{lij}-\frac{\delta_n}{k_l+1}}^{\overline{w}^*_{lij}+\frac{\delta_n}{k_l+1}} \exp \left(- \frac{\overline{w}_{lij}^2}{2} \right) d \overline{w}_{lij} \\
    \nonumber & = \prod_{l=0}^L (1-\lambda_l)^{k_{l+1}-s_l} \lambda_l^{s_l} \prod_{i \in \mathcal{S}_l}  \left(\frac{1}{2\pi}\right)^{\frac{k_l+1}{2}}\prod_{j=1}^{k_l+1} \frac{2\delta_n}{k_l+1}\exp \left(- \frac{\widehat{w}_{lij}^2}{2} \right)\end{align}
  where the third equality follows since  $\E (\indicator_{\{\overline{\boldw}_{li} \in A \}}|z_{li}=0)=1 $ since $||\overline{\boldw}_{li}^*||_1=0$, for $i \in \mathcal{S}_l^c$. The last equality is by mean value theorem, $\widehat{w}_{lij} \in [\overline{w}^*_{lij}-\delta_n/(k_l+1),\overline{w}^*_{lij}+\delta_n/(k_l+1)]$, thus
    \begin{align}
 \nonumber  \hspace{5mm}  & = \prod_{l=0}^L (1-\lambda_l)^{k_{l+1}-s_l} \lambda_l^{s_l} \prod_{i \in \mathcal{S}_l} \exp \Bigg( \frac{k_l+1}{2} \log \frac{1}{2\pi} + (k_l+1) \log \frac{2\delta_n}{k_l+1} -  \sum_{j=1}^{k_l+1} \frac{\widehat{w}_{lij}^2}{2} \Bigg) \\
    & \nonumber = \exp\Bigg[ -\sum_{l=0}^L\Big\{ s_l \log \left( \frac{1}{\lambda_l}\right)+(k_{l+1}-s_l)\log\left(\frac{1}{1-\lambda_l}\right) \\
    & \nonumber \qquad \qquad  \quad  + \sum_{i\in \mathcal{S}_l} \Bigg( - \frac{k_l+1}{2} \log \frac{1}{2\pi} - (k_l+1) \log \frac{2\delta_n}{k_l+1} +  \sum_{j=1}^{k_l+1} \frac{\widehat{w}_{lij}^2}{2} \Bigg) \Bigg\}  \Bigg] \\
    & \nonumber = \exp\Bigg[ -\sum_{l=0}^L\Bigg\{ s_l \log \left( \frac{1}{\lambda_l}\right)+(k_{l+1}-s_l)\log\left(\frac{1}{1-\lambda_l}\right) \\
    & \qquad \qquad \qquad  - \frac{s_l(k_l+1)}{2} \log \frac{1}{2\pi} - s_l(k_l+1) \log \frac{2\delta_n}{k_l+1} + \sum_{i\in \mathcal{S}_l}  \sum_{j=1}^{k_l+1} \frac{\widehat{w}_{lij}^2}{2} \Bigg\}  \Bigg] \label{main_expression}
\end{align}
 Now,
\begin{align}
\label{weight_bound}
    \nonumber \sum_{l=0}^L \sum_{i\in \mathcal{S}_l} \sum_{j=1}^{k_l+1} \frac{\widehat{w}_{lij}^2}{2}  & \leq \frac{1}{2}\sum_{l=0}^L \sum_{i\in \mathcal{S}_l} \sum_{j=1}^{k_l+1} \max ((\overline{w}^*_{lij}-\delta_n/(k_l+1))^2,(\overline{w}^*_{lij}+\delta_n/(k_l+1))^2) \\
    \nonumber & \leq  \sum_{l=0}^L \sum_{i\in \mathcal{S}_l} \sum_{j=1}^{k_l+1}(\overline{w}_{lij}^{*2} + \delta_n^2/(k_{l}+1)^2)  \leq \sum_{l=0}^L \sum_{i\in \mathcal{S}_l} ||\overline{\boldw}_{li}^*||_1^2 +\sum_{l=0}^L \sum_{i\in \mathcal{S}_l}\delta_n^2/(k_l+1) \\
& \leq \sum_{l=0}^L s_l(B_l^2 +1) \leq n \sum r_l \leq n \left( \sum r_l + \xi \right)   
\end{align}
where the above line uses $ \delta_n \to 0 $. Finally
\begin{align}
    \nonumber & \sum_{l=0}^L \left( s_l \log \left( \frac{1}{\lambda_l}\right)+(k_{l+1}-s_l)\log\left(\frac{1}{1-\lambda_l}\right) - \frac{s_l(k_l+1)}{2} \log \frac{1}{2\pi} - s_l(k_l+1) \log \frac{2\delta_n}{k_l+1} \right)\\
    \nonumber & \leq \sum_{l=0}^L \Bigg(C n r_l + \frac{s_l(k_l+1)}{2} \Bigg\{  2\log (k_l+1) + 2\log (L+1) + 2\sum_{m=0,m\neq l}^{L} \log B_m - \log \sum r_l  \Bigg\}\Bigg)\\
   & \leq Cn \sum r_l  \leq Cn \left( \sum r_l + \xi \right) \label{rest_bound}
\end{align}
where the first inequality follows from \eqref{e:ass-lem-kl} and expanding $\delta_n$. The   last inequality follows since $n\sum r_l \to \infty$ which implies $-\log \sum r_l=O(\log n)$. Combining \eqref{weight_bound} and \eqref{rest_bound} and replacing \eqref{main_expression}, the proof follows. \hfill \qedsymbol
    
    \vspace{1mm}
\noindent{\bf Proof of Lemma \ref{lem:kl} part 2.}
\label{lemma:KL_likelihood_bound_lemma}

\begin{align*}
\label{e:ass-lem-kl}
{\it Assumption:} \hspace{5mm}-\log \lambda_l =O\{(k_l+1) \vartheta_l\},\: -\log(1-\lambda_l) = O\{ (s_l/k_{l+1})(k_l+1) \vartheta_l\}
\end{align*}

\noindent Suppose there exists $q \in \mathcal{Q}^{\bf MF}$ such that
\begin{eqnarray}
\nonumber d_{\rm KL}(q,\pi) \leq C_1 n \sum r_l, \label{KL_ineq}\\
\sum_{\boldz}\int_{\bTheta} \left\|\eta_{\btheta}-\eta_{\btheta^*}\right\|_{2}^{2} q(\btheta,\boldz)d\btheta \leq \sum r_l. \label{inf_norm_ineq}
\end{eqnarray}
Recall $\btheta^* = \arg \min_{\btheta \in \btheta(L,p,\bolds,\boldB)} \left\|\eta_{\btheta}-\eta_{0}\right\|_{\infty}^{2}$. By 
 relation \eqref{e:kl-p0-pt},
\begin{align*}
\sum_{\boldz}\int nd_{\rm KL}(P_0,P_{\btheta})q(\btheta,\boldz)d\btheta&= \sum_{\boldz} \frac{n}{2}\int ||\eta_0-\eta_{\btheta}||_2^2 q(\btheta,\boldz)d\btheta \\
&\leq \frac{n}{2}\sum_{\boldz}\int ||\eta_{\btheta^*}-\eta_{\btheta}||_2^2q(\btheta,\boldz)d\btheta+\frac{n}{2}||\eta_{\btheta^*}-\eta_0||_\infty^2\\
&\leq C n (\sum r_l+\xi)
\end{align*}
where the above relation is due to \eqref{inf_norm_ineq} which will complete the proof.


We next construct $q\in\calQ^{\bf MF}$ as
$$\begin{aligned}
& \overline{w}_{lij}|z_{li} \sim z_{li} \calN (\overline{w}_{lij}^*,\sigma_l^2) + (1-z_{li}) \delta_0, \qquad z_{li} \sim \text{Bern} (\gamma_{li}^*) 
\qquad \gamma_{li}^* =  \indicator(||\boldw_{li}^*||_1 \neq 0)
\end{aligned}$$
where $\sigma_l^2 = \frac{s_l}{8n(L+1)}(4^{L-l}  (k_{l}+1)\log( k_{l+1} 2^{k_{l}+1}) \prod_{m=0,m\neq l}^{L} B_m^2)^{-1}.$ 

We next consider the relation \eqref{e:q-bound} in Lemma \ref{lem:psi-l-bound}.

\noindent We upper bound the expectation of the supremum of $L_1$ norm of multivariate Gaussian variables:
$$\int \widetilde{W}_l q(\btheta,\boldz) d\btheta \leq \int \sup_{i}||\overline{\boldw}_{li}-\overline{\boldw}_{li}^*||_1 q(\btheta|\boldz) d\btheta  \leq \int \sup_{i}||\overline{\boldw}_{li}-\overline{\boldw}_{li}^*||_1 q(\btheta|\boldz=\boldsymbol{1}) d\btheta$$
since $q(\boldz)\leq 1$. If $z_{li}=1$, then $||\overline{\boldw}_{li}-\overline{\boldw}_{li}^*||_1=0$, thus the above integral is maximized at $\boldz=\boldsymbol{1}$ where $\boldz=\boldsymbol{1}$ indicates all neurons are present in the network. In this case, all $w_{lij}$ are nothing but independent Gaussian random variables. In this direction we make use of concentration inequalities similar to the proof of theorem 2 in \cite{Cherief-Abdellatif-2020}.
Let, $Y = \sup_{i}||\overline{\boldw}_{li}-\overline{\boldw}_{li}^*||_1 $.
\begin{align*}
\exp(t\E Y) & \leq \E(\exp(tY)) =\E [ \sup_{i} \exp (t||\overline{\boldw}_{li}-\overline{\boldw}_{li}^*||_1)]\\
& \leq \sum_{i=1}^{k_{l+1}} \E [\exp(t \sum_{j=1}^{k_l+1} |\overline{w}_{lij}-\overline{w}^*_{lij}|)] = \sum_{i=1}^{k_{l+1}} \prod_{j=1}^{k_l+1} \E [\exp(t |\overline{w}_{lij}-\overline{w}^*_{lij}|)] \\
& = \sum_{i=1}^{k_{l+1}} \prod_{j=1}^{k_l+1} 2 \exp\left[\frac{\sigma^2_l t^2}{2}\right] \Phi (\sigma_lt)\leq k_{l+1} 2^{k_{l}+1}  \exp\left[(k_{l}+1) \frac{\sigma^2_l t^2}{2}\right]
\end{align*}
Thus, $
\E Y \leq (\log(k_{l+1} 2^{k_{l}+1}) + (k_{l}+1) \sigma^2_l t^2/2)/t$. Let $t=(1/\sigma_l) \sqrt{(2/(k_{l}+1)) \log(k_{l+1} 2^{k_{l}+1})}$,
\begin{align*}
\E Y & \leq \sigma_l \sqrt{\frac{k_{l}+1}{2}}\left[ \sqrt{\log(k_{l+1} 2^{k_{l}+1})} + \sqrt{\log(k_{l+1} 2^{k_{l}+1})} \right] \\
& = \sqrt{ 2 \sigma_l^2 (k_{l}+1)\log( k_{l+1} 2^{k_{l}+1})} \leq \sqrt{ 4 \sigma_l^2 (k_{l}+1)\log( k_{l+1} 2^{k_{l}+1})} 
\end{align*}
Similarly, 
$$\int \widetilde{W}_l^2 q(\btheta,\boldz) d\btheta = \int \sup_{i}(||\overline{\boldw}_{li}-\overline{\boldw}_{li}^*||_1)^2 q(\btheta,\boldz) d\btheta \leq \int \sup_{i}(||\overline{\boldw}_{li}-\overline{\boldw}_{li}^*||_1)^2 q(\btheta|\boldz=\boldsymbol{1})$$
Let, $Y' = \sup_{i}(||\overline{\boldw}_{li}-\overline{\boldw}_{li}^*||_1)^2 $.
\begin{align*}
\exp(t\E Y') & \leq \E(\exp(tY'))= \E [ \sup_{i} \exp (t(||\overline{\boldw}_{li}-\overline{\boldw}_{li}^*||_1)^2)]\\
& \leq \sum_{i=1}^{k_{l+1}} \E [\exp(t (\sum_{j=1}^{k_l+1} |\overline{w}_{lij}-\overline{w}^*_{lij}|)^2)]  \leq \sum_{i=1}^{k_{l+1}} \E [\exp(t (k_{l}+1)\sum_{j=1}^{k_{l}+1} (\overline{w}_{lij}-\overline{w}^*_{lij})^2)] \\
& = \sum_{i=1}^{k_{l+1}} \prod_{j=1}^{k_{l}+1} \E [\exp(t (k_{l}+1) (\overline{w}_{lij}-\overline{w}^*_{lij})^2)]  = \sum_{i=1}^{k_{l+1}} \prod_{j=1}^{k_{l}+1} \left(\frac{1}{1-2t(k_{l}+1)\sigma_l^2} \right)^{\frac{1}{2}}\\
& \leq k_{l+1} \left(\frac{1}{1-2t(k_{l}+1)\sigma_l^2} \right)^{\frac{k_{l}+1}{2}}
\end{align*}
Thus, $\E Y' \leq (\log k_{l+1} - ((k_{l}+1)/2)\log (1-2t(k_{l}+1)\sigma_l^2) )/t$. Let $t=1/(4\sigma_l^2(k_{l}+1))$,
\begin{align*}
\E Y' & \leq 4\sigma_l^2(k_{l}+1)\left[ \log k_{l+1} + \left(\frac{k_{l}+1}{2}\right)\log 2 \right] = 4 \sigma_l^2 (k_{l}+1)\log( k_{l+1} 2^\frac{k_{l}+1}{2}) \\
& \leq 4 \sigma_l^2 (k_{l}+1)\log( k_{l+1} 2^{k_{l}+1})
\end{align*}

Next we also get,
$$\int (\widetilde{W}_l+B_l) q(\btheta,\boldz) d\btheta = \int \widetilde{W}_l q(\btheta,\boldz) d\btheta + B_l \leq \sqrt{ 4 \sigma_l^2 (k_{l}+1)\log( k_{l+1} 2^{k_{l}+1})} + B_l \leq 2B_l $$
\begin{align*}
&\int (\widetilde{W}_l+B_l)^2 q(\btheta,\boldz) d\btheta  = \int \widetilde{W}_l^2 q(\btheta,\boldz) d\btheta + 2B_l \int \widetilde{W}_l q(\btheta,\boldz) d\btheta + B_l^2 \\
& \leq 4 \sigma_l^2 (k_{l}+1)\log( k_{l+1} 2^{k_{l}+1}) + 2B_l \sqrt{ 4 \sigma_l^2 (k_{l}+1)\log( k_{l+1} 2^{k_{l}+1})} + B_l^2  \leq 4B_l^2
\end{align*}
\begin{align*}
&\int \widetilde{W}_l (\widetilde{W}_l+B_l) q(\btheta,\boldz) d\btheta  = \int \widetilde{W}_l^2 q(\btheta,\boldz) d\btheta + B_l \int \widetilde{W}_l q(\btheta,\boldz) d\btheta \\
& \leq 4 \sigma_l^2 (k_{l}+1)\log( k_{l+1} 2^{k_{l}+1}) + B_l \sqrt{ 4 \sigma_l^2 (k_{l}+1)\log( k_{l+1} 2^{k_{l}+1})} \\
& \leq \sqrt{ 4 \sigma_l^2 (k_{l}+1)\log( k_{l+1} 2^{k_{l}+1})} \left(\sqrt{ 4 \sigma_l^2 (k_{l}+1)\log( k_{l+1} 2^{k_{l}+1})} + B_l \right)\\
& \leq 2B_l \sqrt{ 4 \sigma_l^2 (k_{l}+1)\log( k_{l+1} 2^{k_{l}+1})}
\end{align*}
since $\sqrt{ 4 \sigma_l^2 (k_{l}+1)\log( k_{l+1} 2^{k_{l}+1})}$ is bounded above by
\begin{align*}
& \sqrt{ \frac{4 s_l}{8n(L+1)}\Bigg(4^{L-l}  (k_{l}+1)\log( k_{l+1} 2^{k_{l}+1}) \prod_{m=0,m\neq l}^{L} B_m^2 \Bigg)^{-1} (k_{l}+1)\log( k_{l+1} 2^{k_{l}+1})} \\
& = B_l \sqrt{ \frac{s_l}{2n(L+1)}\left(4^{L-l}   \prod_{m=0}^{L} B_m^2 \right)^{-1} } \leq B_l, \text{ The quantity in square root $<$ 1 for large $n$.}
\end{align*}
Let $b_j= (k_{j}+1)\log( k_{j+1} 2^{k_{j}+1})$. From relation \eqref{e:q-bound}, we get
\begin{align*}
& \int ||\eta_{\btheta} - \eta_{\btheta^*}||^2_2 q(\btheta,\boldz)d\btheta  \leq \sum_{j=0}^L c_{j-1}^2 (4 \sigma_j^2 b_j) \Bigg( \prod_{m=j+1}^L 4B_m^2\Bigg) \\
&+ 2\sum_{j=0}^{L} \sum_{j'=0}^{j-1} c_{j-1}c_{j'-1} 
2B_j \sqrt{ 4 \sigma_j^2 b_j} 
\Bigg( \prod_{m=j+1}^L 4B_m^2 \Bigg)  \sqrt{4 \sigma_{j'}^2 b_{j'}} \Bigg( \prod_{m=j'+1}^{j-1} 2B_m \Bigg)\\
& = 4 \sum_{j=0}^L 4^{L-j} \sigma_j^2 b_j \Bigg(\prod_{m=0}^{j-1} B_m^2 \Bigg) \Bigg(\prod_{m=j+1}^L B_m^2 \Bigg) \\
& \enskip + 8\sum_{j=0}^{L} \sum_{j'=0}^{j-1} \Bigg(\prod_{m=0}^{j-1} B_m \Bigg)\Bigg(\prod_{m=0}^{j-1} B_m \Bigg) 2B_j \Bigg( \prod_{m=j+1}^L 4B_m^2 \Bigg) \Bigg( \prod_{m=j'+1}^{j-1} 2B_m \Bigg) \sqrt{\sigma_j^2 b_j} \sqrt{\sigma_{j'}^2b_{j'}} \\
& = 4 \sum_{j=0}^L 2^{2L-2j} \sigma_j^2 b_j \prod_{m=0,m\neq j}^{L} B_m^2 \\
& \enskip + 8\sum_{j=0}^{L} \sum_{j'=0}^{j-1} 4^{L-j} 2^{j-j'}\Bigg(\prod_{m=0}^{j-1} B_m \Bigg)\Bigg(\prod_{m=0}^{j-1} B_m \Bigg) \Bigg( \prod_{m=j+1}^L B_m \Bigg) \Bigg( \prod_{m=j'+1}^{L} B_m \Bigg) \sqrt{\sigma_j^2 b_j} \sqrt{\sigma_{j'}^2b_{j'}} \\
& = 4 \sum_{j=0}^L 2^{2L-2j} \sigma_j^2 b_j\Bigg( \prod_{m=0,m\neq j}^{L} B_m^2 \Bigg) \\
& \enskip + 8\sum_{j=0}^{L} \sum_{j'=0}^{j-1} 2^{L-j} 2^{L-j'}\Bigg( \prod_{m=0,m\neq j}^{L} B_m \Bigg) \Bigg( \prod_{m=0,m\neq j'}^{L} B_m \Bigg)\sqrt{\sigma_j^2 b_j} \sqrt{\sigma_{j'}^2 b_{j'}} \\
& = 4 \Bigg( \sum_{j=0}^L  2^{L-j} \sqrt{\sigma_j^2b_j} \Bigg( \prod_{m=0,m\neq j}^{L} B_m \Bigg)\Bigg)^2= 4 \Bigg( \sum_{j=0}^L \sqrt{ \frac{s_j}{8n(L+1)}} \Bigg)^2\\
&= \frac{1}{2n(L+1)} \Bigg( \sum_{j=0}^L \sqrt{s_j} \Bigg)^2  \leq \frac{\sum_{j=0}^L s_j}{2n} \leq \sum_{j=0}^L r_l
\end{align*}
This concludes the proof of \eqref{inf_norm_ineq}. Next, 
\begin{align}
 \nonumber &d_{\rm KL}(q,\pi) \leq \log\frac{1}{\pi(\boldz)} +  \indicator (\boldz=\boldsymbol{\gamma}^*) d_{\rm KL}\Bigg(\Big\{\prod_{l=0}^{L-1} \prod_{i=1}^{k_{l+1}} \prod_{j=1}^{k_{l}+1} \Big\{ \gamma_{li}^*\calN(\overline{w}_{lij}^*,\sigma_l^2) +(1-\gamma_{li}^*)\delta_0 \Big\}\\
 \nonumber & \qquad  \prod_{j=1}^{k_{L}+1} \calN (\overline{w}_{Lj}^*,\sigma_L^2) \Big\} ,\Big\{ \prod_{l=0}^{L-1} \prod_{i=1}^{k_{l+1}} \prod_{j=1}^{k_{l}+1} \Big\{ z_{li}\calN(0,\sigma^2_0) +(1-z_{li})\delta_0 \Big\} \prod_{j=1}^{k_{L}+1} \calN (0,\sigma^2_0) \Big\} \Bigg) \\
 \nonumber &= \log \frac{1}{ \prod_{l=0}^{L-1} \lambda_l^{s_l} (1-\lambda_l)^{k_{l+1}-s_l}} + \sum_{l=0}^{L-1} \sum_{i=1}^{k_{l+1}} \sum_{j=1}^{k_{l}+1} d_{\rm KL}\Big(\gamma_{li}^* \calN(\overline{w}_{lij}^*,\sigma_l^2) +(1-\gamma_{li}^*) \delta_0 , \\
 \nonumber & \qquad  \gamma_{li}^* \calN(0,\sigma^2_0) +(1-\gamma_{li}^*) \delta_0 \Big) + \sum_{j=1}^{k_{L}+1} d_{\rm KL} \Big( \calN(\overline{w}_{Lj}^*,\sigma_L^2) , \calN(0,\sigma^2_0) \Big)
\\
 \nonumber & = \sum_{l=0}^{L-1} \Bigg(s_l \log \frac{1}{\lambda_l} + (k_{l+1}-s_l) \log \frac{1}{1-\lambda_l} \Bigg) + \sum_{l=0}^{L-1} \sum_{i=1}^{k_{l+1}} \sum_{j=1}^{k_{l}+1} \gamma_{li}^* \Bigg\{\frac{1}{2}\log \frac{\sigma^2_0}{\sigma_l^2} + \frac{\sigma_l^2+{\overline{w}_{lij}^*}^2}{2\sigma^2_0} - \frac{1}{2} \Bigg\}\\
 \nonumber & \qquad + \sum_{j=1}^{k_{L}+1} \Bigg\{\frac{1}{2}\log \frac{\sigma^2_0}{\sigma_L^2} + \frac{\sigma_L^2+{\overline{w}_{Lj}^*}^2}{2\sigma^2_0} - \frac{1}{2} \Bigg\} \\
 \nonumber &\leq \sum_{l=0}^{L-1} C n r_l+ \sum_{l=0}^{L-1} \frac{s_l k_l+s_l}{2} \Bigg[\frac{\sigma_l^2}{\sigma^2_0} + \frac{B_l^2}{\sigma^2_0(k_l+1)}-1  + \log \frac{\sigma^2_0}{\sigma_l^2} \Bigg] + \frac{k_{L}+1}{2}\Bigg[\frac{\sigma_L^2}{\sigma^2_0} + \frac{B_L^2}{\sigma^2_0(k_L+1)}-1  + \log \frac{\sigma^2_0}{\sigma_L^2} \Bigg] 
\end{align}
where the first inequality follows from Lemma \ref{lem:mixture-density}. The inequality in the above line uses $\sum_{j=1}^{k_l+1}{\overline{w}_{lij}^*}^2\leq B_l^2$ and similar to the proof of Lemma 4.1 in \cite{Bai-Guang-2020} uses \eqref{e:ass-lem-kl}.
 
 Let $\sigma^2_0 = 1$ and it could be easily derived that $\sigma_l^2 \leq 1$.
\begin{align}
 \nonumber & d_{\rm KL}(q,\pi)  \leq \sum_{l=0}^{L-1} C n r_l + \sum_{l=0}^{L-1} \frac{s_l}{2} (k_l+1) \Bigg[\frac{B_l^2}{k_l+1} - \log \sigma_l^2 \Bigg] + \frac{(k_{L}+1)}{2} \Bigg[\frac{B_L^2}{k_L+1} - \log \sigma_L^2 \Bigg] \\
 \nonumber & = \sum_{l=0}^{L-1} C n r_l + \sum_{l=0}^{L-1} \frac{s_l}{2} (k_l+1) \Bigg[\frac{B_l^2}{k_l+1} - \log \Bigg(\frac{s_l}{8n(L+1)}\Bigg[4^{L-l}  b_l \prod_{m=0,m\neq l}^{L} B_m^2\Bigg]^{-1} \Bigg) \Bigg] \\
 \nonumber & \enskip +\frac{(k_{L}+1)}{2} \Bigg[\frac{B_L^2}{k_L+1} -  \log \Bigg(\frac{1}{8n(L+1)} \Bigg[b_L \prod_{m=0,m\neq L}^{L} B_m^2\Bigg]^{-1} \Bigg) \Bigg] \\
 \nonumber & = \sum_{l=0}^{L-1} C n r_l + \sum_{l=0}^{L} \frac{s_l}{2} (k_l+1) \Bigg[\frac{B_l^2}{k_l+1} - \log \Bigg(\frac{s_l}{8n(L+1)}\Bigg[4^{L-l}  b_l\prod_{m=0,m\neq l}^{L} B_m^2\Bigg]^{-1} \Bigg) \Bigg] \\
  \nonumber & = \sum_{l=0}^{L-1} C n r_l + \sum_{l=0}^{L} \frac{s_l}{2} B_l^2  + \sum_{l=0}^{L} \frac{s_l}{2} (k_l+1) \log \Bigg(\frac{8n(L+1)}{s_l}\Bigg) \\
  \nonumber & \enskip+ \sum_{l=0}^{L} s_l (k_l+1)(L-l)  \log 2 +  \sum_{l=0}^{L} \frac{s_l}{2} (k_l+1)\log(k_{l}+1) \\
  \nonumber & \enskip + \sum_{l=0}^{L} \frac{s_l}{2} (k_l+1) \log \Big(\log( k_{l+1} 2^{k_{l}+1})\Big) + \sum_{l=0}^{L} s_l (k_l+1) \Bigg( \sum_{m=0,m\neq l}^{L} \log B_m \Bigg) \\
  \nonumber & \leq \sum_{l=0}^{L-1} C n r_l + \sum_{l=0}^{L} \frac{s_l}{2} B_l^2 + \sum_{l=0}^{L} \frac{s_l}{2} (k_l+1) \log \Bigg(\frac{8n(L+1)}{s_l}\Bigg)+ L\sum_{l=0}^{L} s_l (k_l+1)\\
  \nonumber & \enskip +  \sum_{l=0}^{L} \frac{s_l}{2} (k_l+1)(\log(k_{l}+1) + \log ( k_{l+1} + k_l + 1)) + \sum_{l=0}^{L} s_l (k_l+1) \Bigg(\sum_{m=0,m\neq l}^{L} \log B_m \Bigg) \\
  \nonumber & \leq \sum_{l=0}^{L-1} C n r_l + \sum_{l=0}^{L} \frac{s_l}{2} B_l^2 + \sum_{l=0}^{L} \frac{s_l}{2} (k_l+1) \log \Bigg(\frac{8n(L+1)}{s_l}\Bigg) + L\sum_{l=0}^{L} s_l (k_l+1) \\
  \nonumber & \enskip + \sum_{l=0}^{L} s_l (k_l+1) \log ( k_{l+1} + k_l + 1) + \sum_{l=0}^{L} s_l (k_l+1) \Bigg( \sum_{m=0,m\neq l}^{L} \log B_m \Bigg) \\
  \nonumber & \leq \sum_{l=0}^{L-1} C n r_l + \sum_{l=0}^{L} s_l (k_l+1) \Bigg[ \frac{B_l^2}{2(k_l+1)} + \Bigg( \sum_{m=0,m\neq l}^{L} \log B_m \Bigg) + L + \log ( k_{l+1} + k_l + 1)\\
  \nonumber& \enskip + \frac{1}{2}  \log \Bigg(\frac{8n(L+1)}{s_l}\Bigg) \Bigg]\\
\nonumber  &\leq \sum_{l=0}^{L-1} (C+C') n r_l+ C' n r_L \\
  \nonumber & \enskip + \sum_{l=0}^{L} s_l (k_l+1) \Bigg[ \frac{B_l^2}{k_l+1} + \Bigg( \sum_{m=0,m\neq l}^{L} \log B_m \Bigg) + L + \log ( k_{l+1} + k_l + 1) + \log \Bigg(\frac{n}{s_l}\Bigg) \Bigg] \\
  \nonumber & \leq \sum_{l=0}^{L-1} (C+C') n r_l+ C' n r_L  + \sum_{l=0}^L s_l (k_l+1) \vartheta_l \leq C_1 n \sum_{l=0}^L r_l 
\end{align} 
This concludes the proof of \eqref{KL_ineq}. \hfill \qedsymbol

\vspace{1mm}
\noindent{\bf Proof of Corollary \ref{post_contraction_main_theorem}} 

\vspace{1mm}
\noindent The proof is a direct consequence of Theorem \ref{thm:var-post} in the main paper as long as assumptions of  Lemma \ref{lem:prior} and Lemma \ref{lem:kl} parts 1 and 2 hold when $\sigma_0^2=1$, $-\log \lambda_l=\log (k_{l+1})+C_l (k_l+1)\vartheta_l$ and $\epsilon_n=\sqrt{(\sum_{l=0}^L r_l+\xi)\sum_{l=0}^L u_l}$. This what we show next.


\noindent {\it Verifying assumption \eqref{e:ass-lem-prior} under Proof of Lemma \ref{lem:prior}:} Note, $\sum u_l=O(\epsilon_n^2)$, thus
$$ \sum u_l \log L =o(n \epsilon_n^2) \iff  \log L=o(n(\sum r_l+\xi))$$ 
which is indeed true since $\log L =o( L^2)$ and $L^2 \leq  n \sum r_l$. We will show that
$(k_{l+1}\lambda_l)/s_l^\circ \to 0$. With $\lambda_l=(1/k_{l+1})\exp(- C_l(k_l+1)\vartheta_l)$,
\begin{align*}
    \frac{k_{l+1}\lambda_l}{s_l^\circ}&\leq \frac{\sum u_l \exp(-C (k_l+1) \vartheta_l)}{n\epsilon_n^2}=\frac{\exp(-C (k_l+1)\vartheta_l+\log \sum u_l )}{n\epsilon_n^2}\\
    &\leq \frac{\exp(-C  (k_l+1) \vartheta_l +  \vartheta_l)}{n\epsilon_n^2} \to 0
\end{align*}
where the above relation holds since $\log \sum u_l\leq \vartheta_l$, $\vartheta_l \to \infty$, $k_l \to \infty$ and  $n\epsilon_n^2 \to \infty$.

\noindent {\it Verifying assumption \eqref{e:ass-lem-kl} under Proof of Lemma \ref{lem:kl} part 1. and part 2.} Note, $$-\log \lambda_l=\log (k_{l+1})+C_l(k_l+1)\vartheta_l \leq \vartheta_l+C_l(k_l+1)\vartheta_l=O\{(k_l+1)\vartheta_l\} $$
And then, 
$$1-\lambda_l=1-\exp(-C_l \vartheta_l (k_l+1))/k_{l+1}$$
$$-\log(1-\lambda_l)\sim \exp(-C_l \vartheta_l(k_l+1))/k_{l+1}=O\{ (k_{l}+1)s_l\vartheta_l/k_{l+1}\}$$
since $\exp(-C_l \vartheta_l(k_l+1)) \to 0$ and $(k_l+1)s_l \vartheta_l \to \infty$. \hfill \qedsymbol
\newpage

\section{Additional numerical experiments details} 

\label{AppendixB}
\subsection{\SJ{FLOPs Calculation}}
We only count multiply operation for floating point operations (FLOPs) similar to \cite{Zhao2019_VarCNNPrune}. In 2D convolution layer, we assume convolution is implemented as a sliding window and that the nonlinearity function is computed for free. Then, for a 2D convolutional layer (given bias is present) we get FLOPs as:
$$ {\rm FLOPs} = (C_{in,pruned} K_w K_h + 1) O_w O_h C_{out,pruned} $$
where, $C_{in,pruned}, C_{out,pruned}$ are the number of input channels and output channels after pruning. Channels are pruned if all the parameters associated with that channel in convolution mapping are zero. $K_w$ and $K_h$ are the kernel width and height respectively. Finally, $O_w, O_h$ are output width and height where $O_w = (I_w + 2\times P_w - D_w \times (K_w - 1) - 1)/S_w+1$ and $O_h = (I_h + 2\times P_h - D_h \times (K_h - 1) - 1)/S_h+1$. Here, $I_w, I_h$ are input, $P_w,P_h$ are padding, $D_w,D_h$ are dilation, $S_w, S_h$ are stride widths and heights respectively.
\vspace{2mm}

\noindent For fully connected (linear) layers (with bias) we get FLOPs as:
$$ {\rm FLOPs} = (I_{pruned}+1) O_{pruned} $$
where, $I_{pruned}$ is the number of pruned input neurons and $O_{pruned}$ is the number of pruned output neurons. 

\subsection{Variational parameters initialization} 
We initialize the $\gamma_{lj}$'s at a value close to 1 for all of our experiments. This ensures that at epoch 0, we have a fully connected deep neural network. This also warrants that most of the weights do not get pruned off at a very early stage of training which might lead to bad performance. The variational parameters $\mu_{ljj'}$ are initialized using $U(-0.6,0.6)$ for simulation and UCI regression examples whereas for classification Kaiming uniform initialization \citep{Kaiming-He-2015} is used. Moreover, $\sigma_{ljj'}$ are reparameterized using softplus function: $\sigma_{ljj'}=\log(1+\exp(\rho_{ljj'}))$ and $\rho_{ljj'}$ are initialized using a constant value of -6. This keeps initial values of $\sigma_{ljj'}$ close to 0 ensuring that the initial values of network weights stay close to Kaiming uniform initialization.

\subsection{Hyperparameters for training}
We keep MC sample size ($S$) to be 1 during training. We choose learning rate of $3\times 10^{-3}$, batch size of 400, and 10000 epochs in the 20 neurons case of simulation study-I. We use learning rate of $10^{-3}$, batch size of 400, and 20000 epochs in the 100 neurons case of simulation study-I. Next, we use learning rate of $5\times 10^{-3}$, full batch, and 10000 epochs for simulation study-II. In UCI regression datasets, we choose batch size = 128 and run 500 epochs for \textit{Concrete, Wine, Power Plant}, 800 epochs for \textit{Kin8nm}. For \textit{Protein} and \textit{Year} datasets, we choose batch size of 256 and run 100 epochs. For all the UCI regression datasets we keep learning rate of $10^{-3}$. The Adam algorithm is chosen for optimization of model parameters.

\SJ{In image classification datasets, for SS-IG model, we use $10^{-3}$ learning rate and minibatch size of 1024 in all experiments except in Lenet-Caffe on Fashion-MNIST experiment where we use $2\times10^{-3}$ learning rate and 1024 minibatch size. For SV-BNN model, we take $10^{-3}$ learning rate and 1024 minibatch size in all experiments after extensive hyperparameter search. For VBNN model, we take learning rate of $10^{-4}$ and minibatch size of 128 according to \cite{blundell2015weight}. We train each model for 1200 epochs using Adam optimizer in all the image classification experiments provided in main paper.}

\subsection{Fine tuning of constant in prior inclusion probability expression}
Recall the layer-wise prior inclusion probabilities: $\lambda_l=(1/k_{l+1})\exp(-C_l (k_{l}+1)\vartheta_l)$ from the Corollary \ref{post_contraction_main_theorem}. In our numerical experiments, we use this expression to choose an optimal value of $\lambda_l$ in each layer of a given network. The $\lambda_l$ varies as we vary our constant $C_l$ and we next describe how is $C_l$ chosen. The influence of $C_l$  is mainly due to the $k_{l}+1$ term and ${B_l}^2/(k_l+1)$ from $\vartheta_l$ term. We ensure that each incoming weight and bias onto the node from layer $l+1$ is bounded by 1 which leads us to choose $B_l$ to be $k_l+1$. So the leading term from $(k_{l}+1)\vartheta_l$ is $(k_{l}+1)$ and $C_l$ has to be chosen such that we avoid making exponential term from $\lambda_l$ expression close to 0. In our experiments we choose $C_l$ values in the negative order of 10 such that prior inclusion probabilities do not fall below $10^{-50}$. If we instead choose a $\lambda_l$ value very close to 0 then we might prune off all the nodes in each layer or might make the training unstable which is not ideal. Overall the aforementioned strategy of choosing $C_l$ constant values ensure reasonable values for the $\lambda_l$ in each layer.

\subsection{Simulation study I: extra details}
First we provide the network parameters used to generate the data for this simulation experiment. The edge weights in the underlying 2-2-1 network are as follows: $\boldW_0 = \{w_{011}=10, w_{012}=15, w_{021}=-15, w_{022}=10\}; \boldW_1 = \{w_{111}=-3, w_{121}=3\}$ and $\boldv_0 = \{ v_{01}=-5, v_{02}=5\}; \boldv_1 = \{ v_{11}=4\}$.

Below we provide additional results demonstrating the model selection ability of our SS-IG approach in a wider network consisting of 100 nodes in the single hidden layer structure considered in the simulation study-I from main paper. 
\begin{figure}[H]
\centering
\begin{subfigure}{.5\textwidth}
  \centering
  \includegraphics[width=\linewidth]{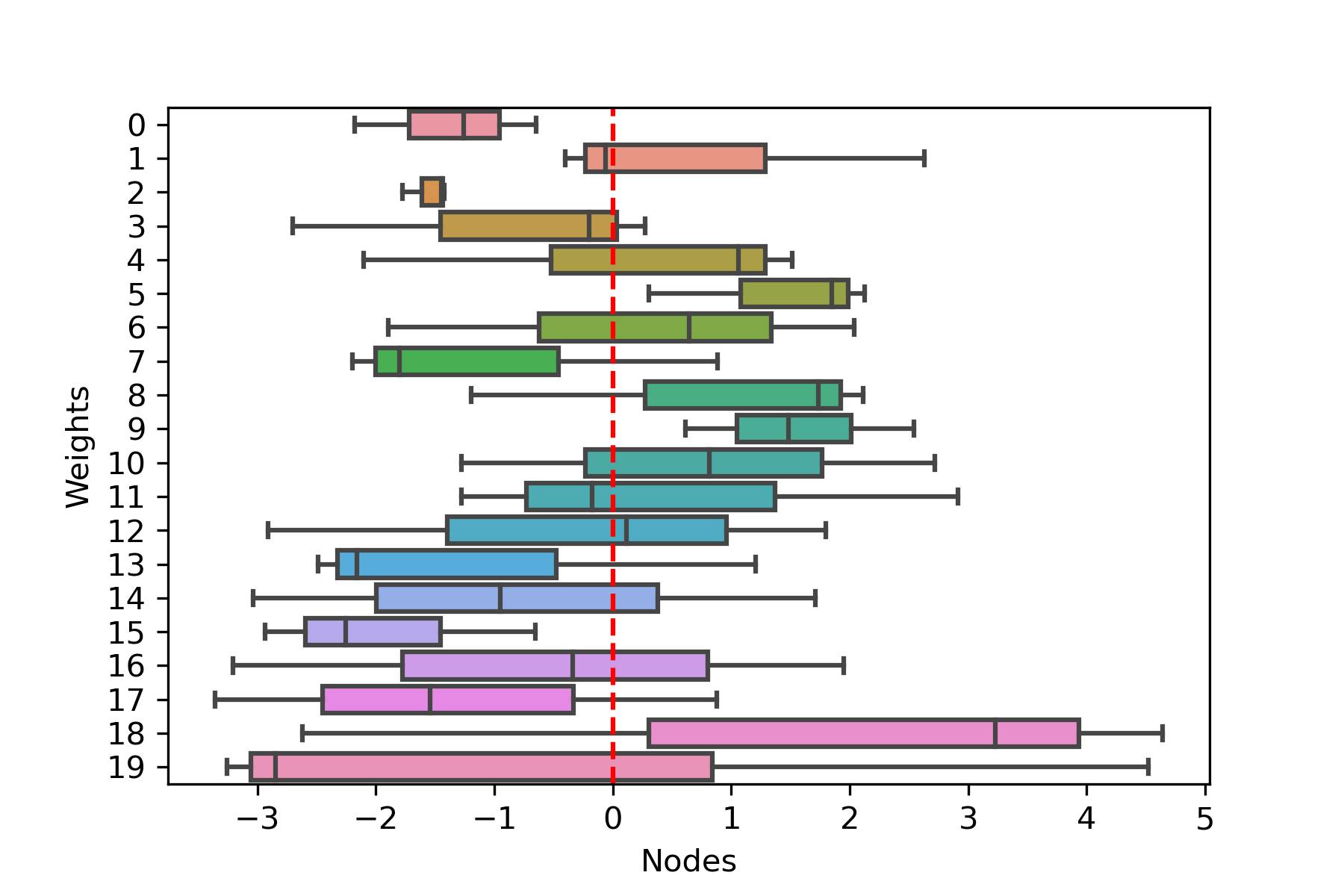}
  \caption{VBNN}
  \label{fig:accuracy2}
\end{subfigure}%
\begin{subfigure}{.5\textwidth}
  \centering
  \includegraphics[width=\linewidth]{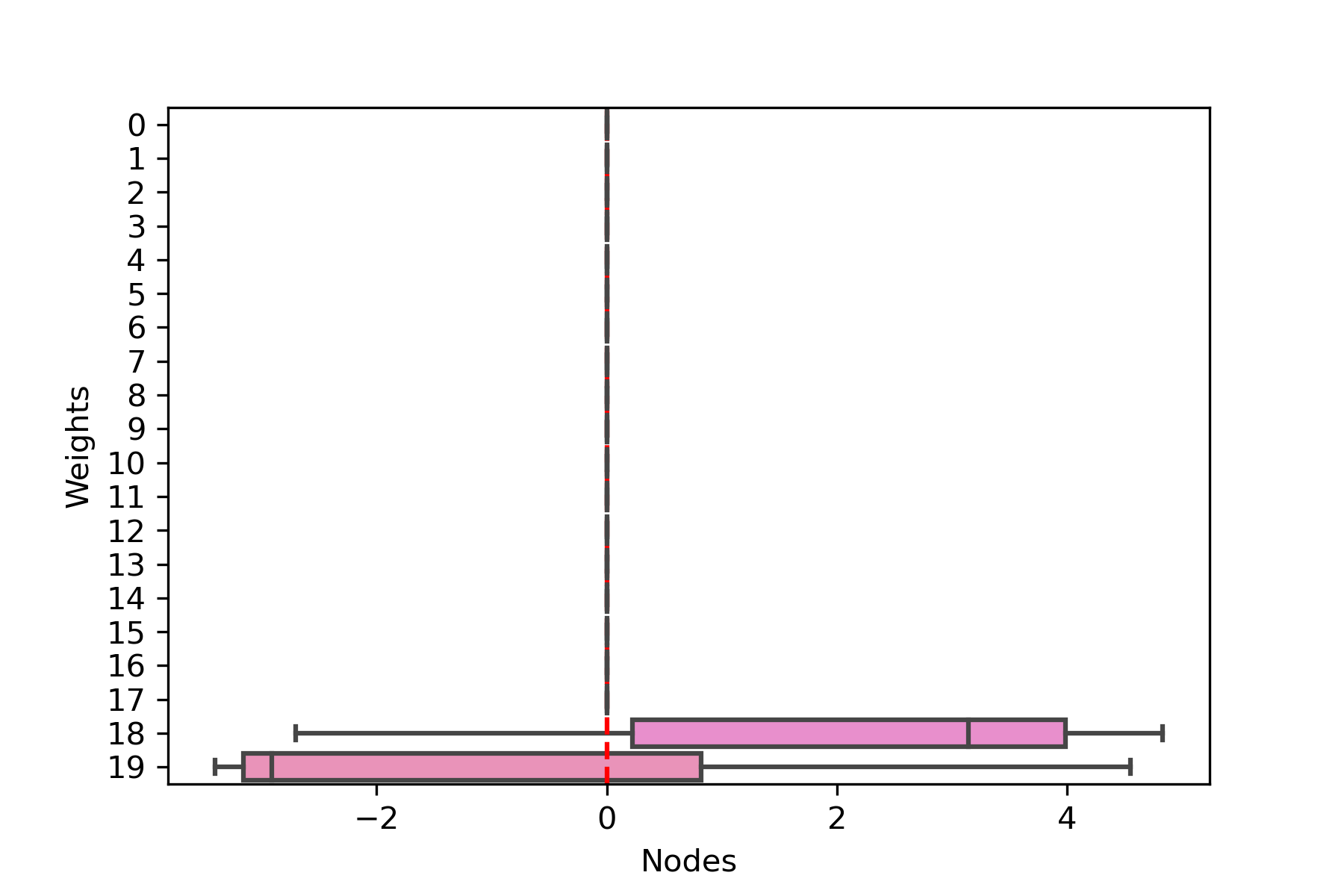}
  \caption{SS-IG}
  \label{fig:sparsity2}
\end{subfigure}
\caption{Node-wise weight magnitudes recovered by VBNN and proposed SS-IG model in the synthetic regression data generated using 2-2-1 network. The boxplots show the distribution of incoming weights into a given hidden layer node. Only the 20 nodes with the largest edge weights are displayed.}
\label{fig:simulation-1_100_nodes}
\end{figure}

\subsection{\SJ{Effect of Hidden Layer Widths}}
Here, we explore 2-hidden layer neural networks with varying widths. For our SS-IG model we use $10^{-3}$ learning rate and minibatch size of 1024 while for VBNN model, we take learning rate of $10^{-4}$ and minibatch size of 128 according to \cite{blundell2015weight}. We train both the models for 400 epochs using Adam optimizer. 

Figure~\ref{fig:mnist-results} summarizes the results. We have provided results for 3 different architectures which have 400, 800, and 1200 nodes each in their 2-hidden layers. In Figure~\ref{fig:MNIST-accuracy-arch}, we find that across the architectures both SS-IG and VBNN models have similar predictive performance. Further, our method is able to prune off more than 88\% of first hidden layer nodes and more than 92\% of second hidden layer nodes (Figure~\ref{fig:MNIST-sparsity-arch}) at the expense of 2\% accuracy loss due to sparsification compared to the densely connected VBNN. We also observe that as model capacity increases the sparsity percentage per layer decreases. This suggests that, each architecture is trying to reach a sparse network of comparable size. 

\begin{figure}[H]
\centering
\begin{subfigure}[b]{0.495\textwidth}
    \centering
    \includegraphics[width=0.9\textwidth]{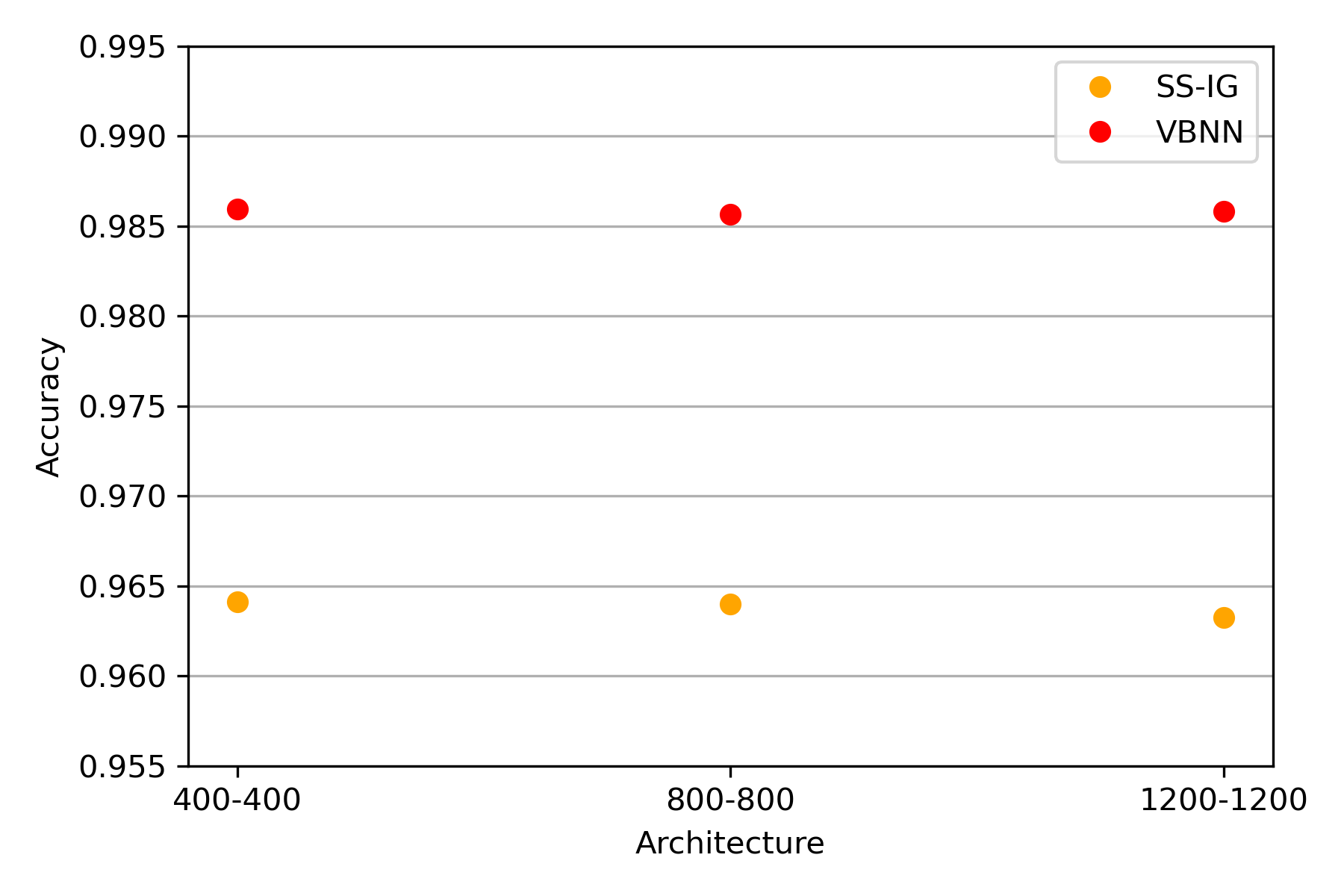} 
    \caption[]%
    {{\small Prediction accuracy per architecture}}    
    \label{fig:MNIST-accuracy-arch}
\end{subfigure}
\hfill
\begin{subfigure}[b]{0.495\textwidth}  
    \centering 
    \includegraphics[width=0.9\textwidth]{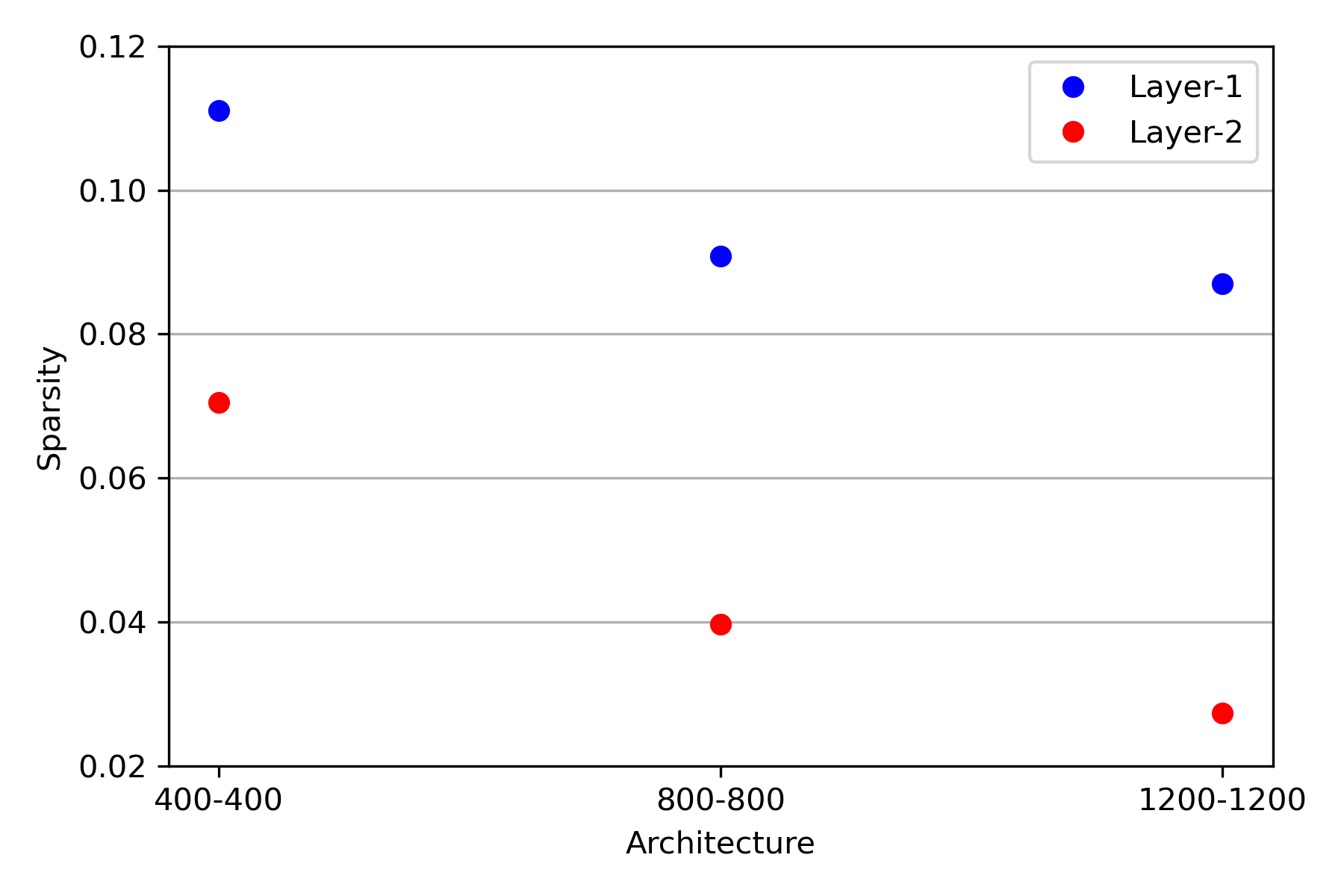} 
    \caption[]%
    {{\small Layer-wise sparsity per architecture}}    
    \label{fig:MNIST-sparsity-arch}
\end{subfigure}
\caption{MNIST experiment results for varying hidden layer widths}
\label{fig:mnist-results}
\end{figure}


\end{document}